\documentclass{article}

\usepackage{microtype}
\usepackage{graphicx}
\usepackage{subcaption}
\usepackage{booktabs} 
\usepackage{algorithmic} 

\usepackage[hyperfootnotes=false]{hyperref}

\usepackage[accepted]{icml2026}

\usepackage{amsmath,amssymb,mathtools}
\usepackage{bm}

\usepackage{amsthm}
\theoremstyle{plain}
\newtheorem{theorem}{Theorem}[section]
\newtheorem{proposition}[theorem]{Proposition}

\newtheorem{corollary}[theorem]{Corollary}
\theoremstyle{definition}

\theoremstyle{remark}
\newtheorem{remark}[theorem]{Remark}
\usepackage{enumitem}

\usepackage{algorithm}      
\usepackage{algorithmic}    

\usepackage{enumitem}       
\usepackage{booktabs}      
\usepackage{nicefrac}       
\usepackage[table]{xcolor}
\usepackage{svg}
\usepackage{adjustbox}
\usepackage{multicol}
\usepackage{multirow}
\definecolor{maroon}{cmyk}{0.60, 0.19, 0.0, 0.08}
\definecolor{redd}{cmyk}{0.19, 0.60, 0.0, 0.08}
\usepackage{array}
\usepackage{booktabs}
\usepackage{graphicx}

\usepackage[toc,page,header]{appendix}
\usepackage{tocloft}
\usepackage{titletoc}
\usepackage{minitoc}
\usepackage{etoolbox}
\usepackage{tabularx}
\usepackage[capitalize,noabbrev]{cleveref}

\usepackage[textsize=tiny]{todonotes}

\icmltitlerunning{Decentralized Instruction Tuning: Conflict-Aware Splitting and Weight Merging}

\begin{document}

\icmlsetsymbol{equal}{*}
\icmlsetsymbol{intern}{\ddag}
\icmlsetsymbol{corresponding}{\dag}
\twocolumn[
  \icmltitle{Decentralized Instruction Tuning: Conflict-Aware Splitting and Weight Merging}

\begin{icmlauthorlist}
    \icmlauthor{Minsik Choi}{equal,nav,ku,intern}
    \icmlauthor{Geewook Kim}{equal,nav,kai,corresponding}
    \icmlcorrespondingauthor{$^\dag$Geewook Kim}{gwkim.rsrch@gmail.com}
\end{icmlauthorlist}

\icmlaffiliation{ku}{
    Korea University
}
\icmlaffiliation{nav}{
    NAVER Cloud AI
}
\icmlaffiliation{kai}{
    KAIST AI
}

  \icmlkeywords{Machine Learning, ICML}

  \vskip 0.3in
]

\printAffiliationsAndNotice{\icmlEqualContribution\textsuperscript{\ddag}This work was conducted during Minsik Choi's internship at NAVER Cloud.}

\begin{abstract}
Instruction tuning aligns large language models, including multimodal ones,
with diverse user intents, but scaling to heterogeneous mixtures is hindered
by gradient interference and bandwidth-heavy synchronization. We ask whether
these two bottlenecks can be addressed jointly by training parts of the
mixture independently and reconciling them once in parameter space. We develop
a local quadratic theory inside a shared flat basin that yields three results:
weight merging produces a curvature-weighted variance reduction; PCA-aligned
conflict splitting maximizes this gain along high-curvature directions; and
merging additionally acts as spectral filtering with implicit norm
regularization. These results directly motivate \textbf{MERIT}, a decentralized
merge-ready instruction-tuning pipeline that estimates dataset-level gradient
conflicts, partitions the mixture along the top PCA conflict axes, fine-tunes
each partition independently with no inter-partition communication, and
merges once via token-weighted averaging. On Qwen2.5-VL-3B with 136
Vision-FLAN tasks, MERIT improves the 8-benchmark average from 54.3 (joint
training) to 57.0. The same recipe scales to a 7B model on a 1.6M-example,
176-source mixture—matching or exceeding centralized joint training with minimal cost overhead—and transfers to text-only FLAN. Our code is available at \url{https://github.com/naver-ai/merit}.
\end{abstract}

\section{Introduction}
\label{sec:intro}

\newcommand{\captionintroteaser}{%
  \textbf{Centralized training vs.\ MERIT.}
    Centralized tuning synchronizes conflicting tasks across a tightly-coupled cluster (top);
    MERIT partitions the mixture by conflict, fine-tunes each group independently,
    and merges once into $\bar{\theta}$ (bottom).

}

\begin{figure}
  \centering
  \includegraphics[width=\linewidth]{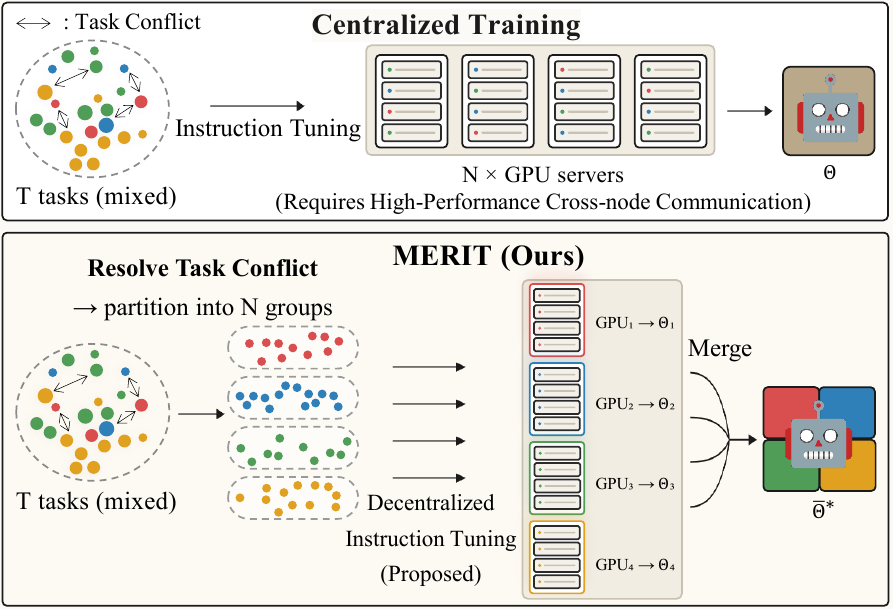}
  \caption{\captionintroteaser}
  \label{fig:intro_teaser}
\end{figure}

The modern recipe for building capable foundation models relies on extensive post-training, in particular large-scale instruction tuning over a diverse mixture of tasks. 
This step is particularly critical for multimodal large language models (MLLMs)~\citep{liu2024improved,liu2024llavanext}, where much of the practical multimodal behavior is acquired after language pretraining through vision--language alignment and heterogeneous instruction data spanning perception, reasoning, OCR and document understanding, diagram comprehension, and safety~\citep{xu2024vision,laurenccon2024matters}.
As a result, practitioners keep scaling instruction mixtures to achieve robust, product-level capability~\citep{hu2024mplug,wang2024qwen2,kim-seo-2024-efficient,bai2025qwen25vltechnicalreport,li2025llavaonevision,gemmateam2025gemma,chen2024expanding}.

However, the prevailing approach, centralized joint training on the entire mixture, faces two bottlenecks.
On the optimization side, heterogeneous datasets induce conflicting gradients, 
leading to negative transfer and stiff dynamics that constrain the learning rate and slow progress (analyzed in Section~\ref{sec:theory}).
Classical multi-task corrections that manipulate task-wise gradients~\citep{chen2018gradnorm,yu2020gradient,liu2021conflict} become infeasible at the scale of over a hundred tasks and billions of parameters (Appendix~\ref{app:pcgrad}), leaving mixture-ratio curation~\citep{laurenccon2024matters} as the main recourse.
On the systems side, joint training relies on frequent gradient synchronization (e.g., all-reduce), effectively requiring tightly-coupled clusters with high-bandwidth interconnects; 
this assumption breaks in fragmented compute environments such as heterogeneous GPU pools, geo-distributed clusters, and cloud spot instances, where communication is a dominant cost or a hard constraint.

These two bottlenecks are \emph{coupled}: as mixtures grow more heterogeneous, optimization becomes more sensitive to dataset interference, yet mitigating such interference during training demands fine-grained, synchronized training signals.
When synchronization is unavailable, practitioners fall back to coarse mixing heuristics, leaving interference largely unaddressed.

This calls for an alternative: instead of forcing all datasets to share a single synchronized trajectory, 
can we train parts of the mixture independently and reconcile them once in parameter space?
Weight-space averaging and model soups~\citep{wortsman2022model,jang2024model} suggest that when fine-tuning runs start from the same checkpoint and remain within a connected low-loss region, one-shot parameter averaging can yield a single model that is often better than its constituents.
Crucially, this weight-space compatibility is common in post-training, where fine-tuning typically stays within a shared flat basin around a strong initialization.
We term such a checkpoint a \emph{merge-ready initialization} (empirically verified via linear mode connectivity, displacement, and perturbation diagnostics in Appendix~\ref{app:merge_diagnostics}) and ask:
how should we split a large instruction mixture so that independent training remains mergeable and merging systematically reduces interference?
Existing soup-style methods average models trained on the same data with different randomness; our setting instead requires merging models trained on disjoint partitions of a heterogeneous mixture, where the choice of split directly determines merging quality.

We answer this question with \textbf{MERIT} (\textbf{Me}rge-\textbf{R}eady \textbf{I}nstruction \textbf{T}uning), a decentralized instruction-tuning pipeline built on two moves at a merge-ready initialization---\textbf{split} the mixture along the dominant gradient-conflict axes identified by PCA, then \textbf{fine-tune} each partition independently with no cross-partition communication---followed by a single token-weighted merge (Figure~\ref{fig:intro_teaser}).
Structurally, the split separates conflicting updates across branches; operationally, the pipeline reduces post-training to a problem that fragmented, bandwidth-constrained hardware can solve.
A local quadratic analysis inside a shared flat basin (Section~\ref{sec:theory}) makes this concrete: the merging gain is a curvature-weighted variance reduction, 
PCA-aligned splitting maximizes it, and averaging implicitly regularizes toward $\theta^{(0)}$ while acting, conceptually, as spectral filtering.

Our main contributions are:
\begin{itemize}
    \item \textbf{Theoretical framework.} A local quadratic analysis inside a shared flat basin derives three results: merging yields a curvature-weighted variance reduction; PCA-aligned splitting maximizes this gain, with the advantage growing with the Hessian spectral gap; and weight averaging acts as implicit norm regularization and, conceptually, spectral filtering (Section~\ref{sec:theory}).

    \item \textbf{Decentralized instruction-tuning pipeline.} MERIT estimates dataset-level gradient conflicts from a small calibration set, partitions the mixture along top-$r$ PCA conflict axes at a merge-ready initialization, trains $K{=}2^{r}$ branches fully independently, and merges once via token-weighted averaging---no cross-partition gradient communication during fine-tuning.

    \item \textbf{Consistent empirical gains at scale.} On Qwen2.5-VL-3B with 136 Vision-FLAN tasks~\citep{xu2024vision}, MERIT improves the 8-benchmark average from \textbf{54.3 $\to$ 57.0} under identical token budgets; on a Qwen2.5-VL-7B build with a 1.6M-example 176-source mixture, MERIT matches or exceeds centralized joint training across three independent seeds with minimal wall-clock overhead (Section~\ref{sec:analyses}), and the same recipe transfers to a text-only FLAN setting~\citep{wei2021finetuned}.
\end{itemize}

\section{Background and Related Work}
\label{sec:related}

\paragraph{Model Merging and Loss-Landscape Connectivity.}
Weight-space model merging averages multiple checkpoints into a single model and is motivated by
loss-landscape connectivity~\citep{garipov2018loss,draxler2018essentially} and flat-minima interpretations~\citep{hochreiter1997flat, izmailov2018averaging}.
Model soups and Model Stock show that simple averaging over independently fine-tuned models can be effective~\citep{wortsman2022model,jang2024model},
while follow-up work explores more robust merging rules, including curvature-aware variants and conflict-mitigating heuristics (e.g., sign alignment)~\citep{matena2022merging,yadav2023ties,yu2024language}.
Most prior work merges models trained on the same dataset/objective (different seeds or hyperparameter settings), where averaging mainly reduces run-to-run variance.
To disentangle such generic averaging gains from our contributions, our experiments include soup-style baselines (uniform averaging of multiple runs on the full mixture) and random dataset partitioning followed by averaging.
MERIT instead targets heterogeneous instruction mixtures and introduces an a priori conflict-aware split so that models trained on disjoint subsets remain mergeable and benefit from averaging.

\paragraph{Federated and Decentralized Optimization.}
Independent local training followed by averaging is reminiscent of FedAvg and Local SGD~\citep{mcmahan2017communication,stich2018local},
which periodically synchronize client models to optimize a shared global objective under siloed data and communication/privacy constraints~\citep{kairouz2021advances}.
Recent work extends federated learning to LLM fine-tuning via communication-efficient strategies such as shared-randomness gradient compression~\citep{zelikman2023just}, seed-based full-parameter tuning~\citep{qin2024federated}, and forward-pass perturbation~\citep{xu2024fwdllm}, while low-rank~\citep{hyeon2022fedpara} and personalization techniques~\citep{qin2023fedapen,qin2025federated} further reduce per-round cost.
However, these methods still require iterative synchronization rounds.
One-shot federated learning~\citep{ijcai2021p205,diao2023towards,li2023effective,huang2025federated} removes iterative communication by aggregating after a single local-training round, sharing MERIT's single-merge structure.
A key distinction is that FL methods typically assume fixed, pre-assigned data partitions dictated by privacy or ownership constraints, whereas MERIT assumes centrally available data and optimizes the partition itself to maximize merging quality via gradient-conflict analysis.
This active partitioning, a degree of freedom absent in the FL setting, is the primary driver of MERIT's performance gains.

\paragraph{Task Interference in Multi-Task and Instruction Tuning.}
Negative transfer from conflicting gradients is a central challenge in multi-task learning; gradient reweighting/projection methods mitigate interference but typically require synchronized access to per-task gradients during training~\citep{chen2018gradnorm,yu2020gradient,liu2021conflict}.
Instruction tuning is likewise sensitive to mixture composition~\citep{longpre2023flan,sanh2021multitask}.
In multimodal post-training, Vision-FLAN reports conflicts between diverse short-answer tasks and verbose conversational responses~\citep{xu2024vision}.
More broadly, alignment datasets can encode competing objectives (e.g., helpfulness vs.\ harmlessness)~\citep{askell2021general}.
In the multimodal setting, recent studies show that vision-language adaptation can impact model safety~\citep{lee2025how}, motivating behavior-alignment pipelines such as RLHF-V~\citep{yu2024rlhf}.
MERIT complements these lines by separating conflicting datasets before training and reconciling them once in parameter space, avoiding step-level interference accumulation without requiring synchronized per-step control.

\paragraph{Data Curation and Mixture Design for (M)LLMs.}
Recent (M)LLM pipelines build large instruction corpora by curating and mixing heterogeneous datasets with careful ratio tuning~\citep{zhou2023lima,liu2023visual,laurenccon2024matters}.
As new datasets are added or priorities shift, mixture design becomes an iterative bottleneck.
MERIT complements curation with a reusable decomposition primitive: it estimates dataset interactions at a shared initialization, enables communication-free parallel fine-tuning during training, and merges once at the end.

\paragraph{Positioning of Our Work.}
MERIT sits at the intersection of model merging, decentralized training, and instruction-mixture learning.
Unlike post-hoc merging methods, MERIT introduces an a priori conflict-aware dataset split to make simple averaging effective under disjoint heterogeneous data.
Unlike gradient-level multi-task methods, MERIT moves conflict handling before training and removes step-level synchronization.
Unlike federated averaging/local SGD, MERIT chooses partitions to reduce dataset interference and is analyzed under a flat-basin merging theory rather than as an approximation to centralized training on a single objective.

\section{Theoretical Framework}
\label{sec:theory}

\newcommand{\captionflatbasin}{%
  Local loss surfaces before and after basin preparation (e.g., LLaVA Stage~2).
  The merge-ready initialization $\theta^{(0)}$ resides in a flat, connected region (right),
  where independently fine-tuned checkpoints remain within the same basin.
  Our analysis operates within this flat-basin regime, yielding three key implications
  that directly motivate MERIT's algorithm design.%
}

\begin{figure}
  \includegraphics[width=\linewidth]{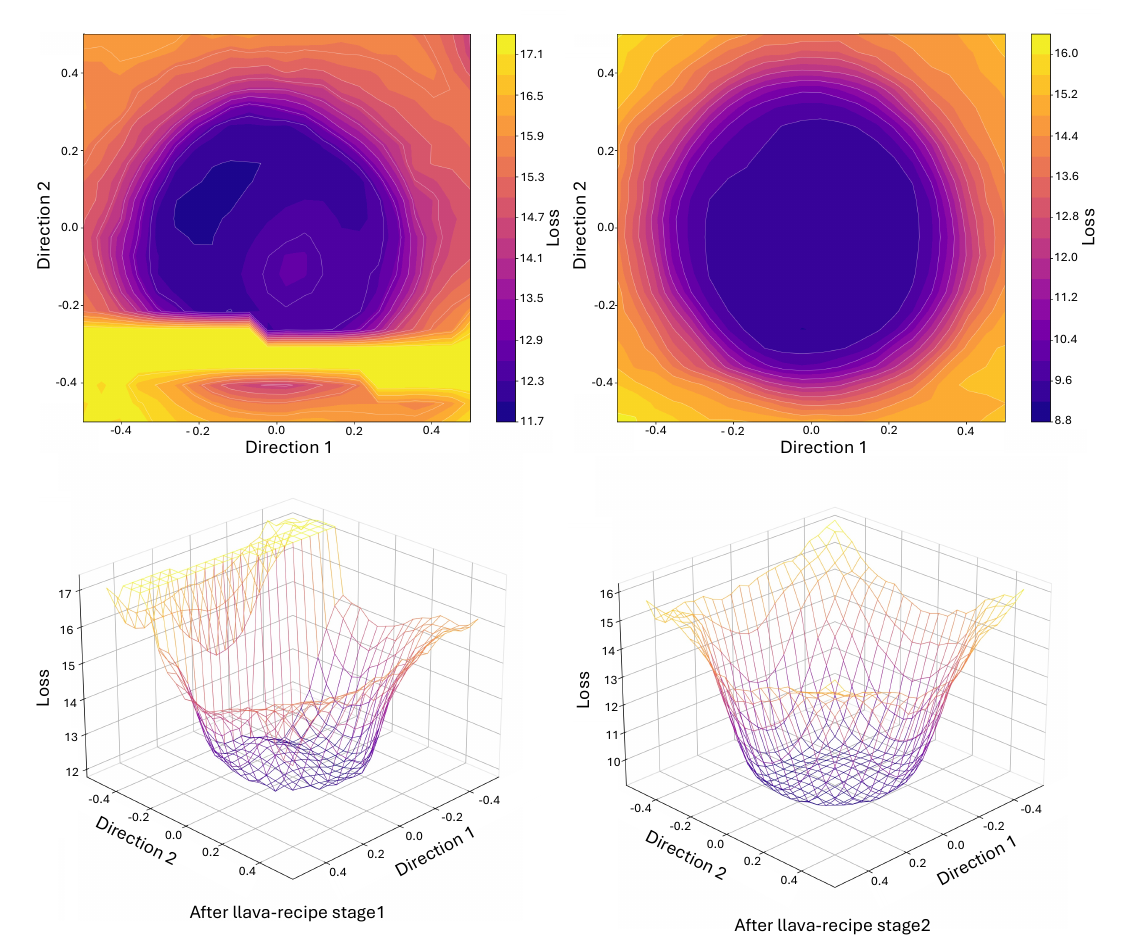}
  \caption{\captionflatbasin}
  \label{fig:flat_basin}
\end{figure}

We present a theoretical framework explaining why weight merging improves optimization and generalization when fine-tuned checkpoints remain confined to a common flat basin (Figure~\ref{fig:flat_basin}).
Our analysis yields three key implications, each experimentally verified in Sections~\ref{sec:experiments}--\ref{sec:analyses}, that directly motivate MERIT (Section~\ref{sec:method}):
merging is never worse than the weighted average of individual losses in the quadratic model, with improvement governed by curvature-weighted variance;
conflict-aware PCA splitting maximizes this gain over random partitioning, with the advantage growing with the curvature gap;
and merging acts as implicit norm regularization and, conceptually, as curvature-weighted spectral filtering, improving both generalization and optimization dynamics.
Formal assumptions and complete proofs are deferred to Appendix~\ref{app:theory_anal}.

\paragraph{Setting.}
Let $\theta \in \mathbb{R}^d$ denote the model parameters and $L(\theta)$ the population loss.
We consider $K \ge 2$ checkpoints $\theta_1, \ldots, \theta_K$, each fine-tuned on a different dataset group from a shared \emph{merge-ready initialization} $\theta^{(0)}$---a checkpoint from which independently fine-tuned models remain in a connected low-loss region.
We describe how to obtain such an initialization in Section~\ref{sec:method}.
We fix a reference point $\theta^\star$ within the basin,
define displacements $\delta_i = \theta_i - \theta^\star$
(for the first-order analysis below we take $\theta^\star = \theta^{(0)}$; the gain $\mathcal{G}_{\mathrm{var}}$ is invariant to this choice),
and let $H = \nabla^2 L(\theta^\star) \succeq 0$ denote the local Hessian.
The merged checkpoint is $\bar{\theta}_w = \sum_{i} w_i \theta_i$ with weights $w_i \ge 0$ summing to one,
with uniform averaging recovered as a special case of the token-weighted averaging used by MERIT.

\paragraph{Merging yields curvature-weighted variance reduction.}
Under a local quadratic approximation
$L(\theta) \approx L(\theta^\star) + \tfrac{1}{2}(\theta-\theta^\star)^\top H (\theta-\theta^\star)$,
weighted averaging yields a deterministic gain:
\begin{equation}
\underbrace{\sum_{i=1}^{K} w_i L(\theta_i) - L(\bar{\theta}_w)}_{\text{merging gain } \mathcal{G}_{\mathrm{var}}}
\;=\;
\frac{1}{2}\sum_{\ell} \lambda_\ell\, \mathrm{Var}_w\!\bigl(u_\ell^\top \delta_i\bigr) \;\ge\; 0
\label{eq:gvar}
\end{equation}
where $\lambda_\ell, u_\ell$ are eigenvalue--eigenvector pairs of $H$,
and $\mathrm{Var}_w(x_i) := \sum_i w_i x_i^2 - (\sum_i w_i x_i)^2$ denotes the $w$-weighted variance over $\{x_i\}_{i=1}^K$.

\begin{theorem}[Merging gain in a flat basin]
\label{thm:main_quadratic_gain}
Under a local quadratic model with $H \succeq 0$,
$L(\bar{\theta}_w) \le \sum_i w_i L(\theta_i)$.
The inequality is strict whenever the displacements $\{\delta_i\}$ are not identical modulo the nullspace of $H$.
The gain $\mathcal{G}_{\mathrm{var}}$ is maximized when checkpoint dispersion concentrates along the dominant eigendirections of $H$.
\end{theorem}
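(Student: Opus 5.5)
The plan is to work entirely inside the quadratic model $L(\theta)=L(\theta^\star)+\tfrac12(\theta-\theta^\star)^\top H(\theta-\theta^\star)$ and compute the gap $\sum_i w_iL(\theta_i)-L(\bar\theta_w)$ exactly. With $\delta_i=\theta_i-\theta^\star$, the weights summing to one give $\bar\theta_w-\theta^\star=\bar\delta:=\sum_i w_i\delta_i$. The constant $L(\theta^\star)$ therefore cancels, and
\[
\mathcal{G}_{\mathrm{var}}=\tfrac12\Bigl(\sum_i w_i\delta_i^\top H\delta_i-\bar\delta^\top H\bar\delta\Bigr).
\]
Expanding $\sum_i w_i(\delta_i-\bar\delta)^\top H(\delta_i-\bar\delta)$ and using $\sum_i w_i(\delta_i-\bar\delta)=0$ shows this equals $\tfrac12\sum_i w_i(\delta_i-\bar\delta)^\top H(\delta_i-\bar\delta)$. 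This is the bias--variance identity for a quadratic form.

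Next diagonalize $H=\sum_\ell\lambda_\ell u_\ell u_\ell^\top$. Each term becomes $\sum_\ell\lambda_\ell(u_\ell^\top(\delta_i-\bar\delta))^2$. Exchanging the sums yields exactly \eqref{eq:gvar}, with $\mathrm{Var}_w(u_\ell^\top\delta_i)$. Since $\lambda_\ell\ge0$ and variances are nonnegative, $\mathcal{G}_{\mathrm{var}}\ge0$, which is the main inequality. The same identity shows invariance to $\theta^\star$: shifting the reference point shifts every $\delta_i$ by the same vector, and this leaves $\delta_i-\bar\delta$ unchanged. (If the shift leaves the quadratic model non-centered, a linear term appears, but it is affine in $\theta$ and cancels under averaging. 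I would note this explicitly.)

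For strictness, equality holds iff $(\delta_i-\bar\delta)^\top H(\delta_i-\bar\delta)=0$ for every $i$ with $w_i>0$. Since $H\succeq0$, this holds iff $H^{1/2}(\delta_i-\bar\delta)=0$, i.e.\ $\delta_i-\bar\delta\in\ker H$. So if the displacements (among those with positive weight) are not all equal modulo $\ker H$, some term is positive and the inequality is strict. I would state the positive-weight caveat, since a zero-weight checkpoint cannot contribute.

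The maximization claim is the only step needing care in its formulation, and I expect it to be the main obstacle, because "maximized" needs a constraint. I will fix the total (Euclidean) dispersion budget $D=\sum_i w_i\|\delta_i-\bar\delta\|^2=\sum_\ell \mathrm{Var}_w(u_\ell^\top\delta_i)$. Writing $v_\ell=\mathrm{Var}_w(u_\ell^\top\delta_i)\ge0$ with $\sum_\ell v_\ell=D$, the gain $\tfrac12\sum_\ell\lambda_\ell v_\ell$ is a linear function over a simplex. It is therefore maximized, with value $\tfrac12\lambda_{\max}D$, by placing all dispersion on the top eigendirection(s). More generally, among dispersions of rank at most $r$, the maximum $\tfrac12\sum_{\ell\le r}\lambda_\ell v_\ell$ is attained when the dispersion covariance $\Sigma_w=\sum_i w_i(\delta_i-\bar\delta)(\delta_i-\bar\delta)^\top$ has range spanned by $u_1,\dots,u_r$. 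This follows from $\mathcal{G}_{\mathrm{var}}=\tfrac12\operatorname{tr}(H\Sigma_w)$ together with Ky Fan's maximum principle (or von Neumann's trace inequality). This trace formulation is the cleanest way to make the third sentence rigorous.
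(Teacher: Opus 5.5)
Your proposal is correct and follows essentially the paper's route. You expand the quadratic so the constant cancels, recognize the gap as an $H$-weighted variance, and then read off nonnegativity, the $\ker H$ equality condition, and the eigen-expansion \eqref{eq:gvar}; the only difference is that the paper writes the variance in the equivalent pairwise form $\frac{1}{4K^2}\sum_{i,j}(\delta_i-\delta_j)^\top H(\delta_i-\delta_j)$ (for uniform weights), while you use the centered form.

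Your positive-weight caveat is a genuine correction to the stated strictness claim, and your reading of the ``maximized'' clause as $\mathcal{G}_{\mathrm{var}}=\tfrac12\operatorname{tr}(H\Sigma_w)$ under a fixed dispersion budget is more precise than the paper, which leaves that clause informal. In your rank-$r$ version, though, the constraint that makes von Neumann's inequality bite is a fixed spectrum of $\Sigma_w$, not a rank bound: under a trace budget alone, a rank-one dispersion along $u_1$ is already optimal.
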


The gain therefore depends on \emph{where} the dispersion lies: variance along high-curvature directions dominates, while variance in flat subspaces contributes little.
This motivates splitting strategies that concentrate inter-group updates along curvature-bearing directions, the role of conflict-aware PCA in MERIT.

\paragraph{Conflict-aware splitting maximizes the merging gain.}
The above result shows that $\mathcal{G}_{\mathrm{var}}$ grows with curvature-aligned dispersion.
We now show that PCA-based splitting achieves this alignment.

Under a first-order fine-tuning approximation from $\theta^{(0)}$,
the displacement of branch $k$ is $\delta_k \approx -\eta\,\bar{g}_k$,
where $\bar{g}_k$ is the mean gradient of datasets assigned to group $k$.
The merging gain for two groups becomes
\[
\mathcal{G}_{\mathrm{var}}
\;=\;
\frac{\eta^2}{8}\,(\bar{g}_1 - \bar{g}_2)^\top H\,(\bar{g}_1 - \bar{g}_2).
\]
Since the gradient of dataset $t$ at $\theta^{(0)}$ satisfies $g_t = -H\Delta_t$
(where $\Delta_t$ is the centered per-dataset optimum),
the gain is governed by $H^3$-weighted interactions among the per-dataset optima.
This curvature amplification means that PCA on gradient conflicts preferentially identifies high-curvature disagreement axes.

\begin{proposition}[PCA-aligned splitting maximizes $\mathcal{G}_{\mathrm{var}}$]
\label{prop:pca_main}
In a tractable linear-quadratic model (Appendix~\ref{app:toy_proof}), PCA-aligned partitioning selects the high-curvature disagreement axis and provably outperforms random partitioning: it attains the maximum gain among all balanced partitions in the $T{=}4$, $d{=}2$ instance (Proposition~\ref{prop:toy_pca_optimal}), and for general $T$ exceeds random partitioning in expectation under a spectral-concentration condition (Proposition~\ref{prop:pca_dominates_random_general}).
The advantage grows with the spectral gap $\lambda_1/\lambda_2$,
and the gain scales as $\lambda_1^3$---curvature enters twice through gradient conflict ($g_t \propto H\Delta_t$) and once through the Hessian-weighted gain formula.
MERIT's recursive splitting along the top-$r$ PCA axes ($K{=}2^r$) accumulates such gains across $r$ orthogonal high-curvature directions, consistent with the monotonic 1D~$\to$~2D~$\to$~3D improvements observed in Section~\ref{sec:experiments}.
\end{proposition}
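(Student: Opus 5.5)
The plan is to make the linear-quadratic model explicit and reduce every claim to comparing explicit quadratic forms. Work in the eigenbasis of $H=\mathrm{diag}(\lambda_1,\dots,\lambda_d)$ with $\lambda_1\ge\lambda_2\ge\cdots$. Each of the $T$ datasets has a quadratic loss $L_t(\theta)=\tfrac12(\theta-\theta^\star-\Delta_t)^\top H(\theta-\theta^\star-\Delta_t)$ with shared curvature and centered optima, $\sum_t\Delta_t=0$. Then at $\theta^{(0)}=\theta^\star$ we have $g_t=-H\Delta_t$. For a balanced two-way split $S\cup S^c$, the first-order displacement gives $\bar g_1-\bar g_2=-\tfrac{2}{T}H\sum_{t\in S}\Delta_t$, using centering and $|S|=T/2$. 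Substituting into the two-group gain formula preceding the statement (a specialization of Theorem~\ref{thm:main_quadratic_gain}) yields
\[
\mathcal{G}_{\mathrm{var}}(S)=\frac{\eta^2}{2T^2}\,s_S^\top H^3 s_S,\qquad s_S:=\sum_{t\in S}\Delta_t .
\]
This is the $H^3$ form: two factors of $H$ come from $g_t=-H\Delta_t$ and one from the gain formula. Written out, $\mathcal{G}_{\mathrm{var}}(S)=\tfrac{\eta^2}{2T^2}\sum_\ell\lambda_\ell^3(u_\ell^\top s_S)^2$, which gives the $\lambda_1^3$ scaling directly.

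Next I formalize PCA-aligned partitioning. Form the conflict matrix $C=\sum_t g_tg_t^\top=H\big(\sum_t\Delta_t\Delta_t^\top\big)H$. Take its top eigenvector $v_1$ and set $S_{\mathrm{PCA}}=\{t:v_1^\top g_t\ge \text{median}\}$.

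For the $T=4$, $d=2$ instance (Proposition~\ref{prop:toy_pca_optimal}), I would choose the symmetric configuration $\Delta_{1,2}=\pm a e_1$ and $\Delta_{3,4}=\pm b e_2$. There are only three balanced partitions up to complement, so everything is explicit:
\begin{itemize}
\item $\{1,3\}$ gives $s=ae_1+be_2$;
\item $\{1,4\}$ gives $s=ae_1-be_2$;
\item $\{1,2\}$ gives $s=0$.
\end{itemize}
The first two tie at gain $\propto\lambda_1^3a^2+\lambda_2^3b^2$. To obtain a genuine axis selection, I would instead use the paper's likely instance, in which the datasets sit on a generic configuration. I would then verify directly that the partition cut by $v_1$ maximizes $s^\top H^3 s$ among the three candidates, because $C$'s top eigenvector is dominated by the $e_1$ direction whenever $\lambda_1^2a^2>\lambda_2^2b^2$. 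This is a finite case check.

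For general $T$ (Proposition~\ref{prop:pca_dominates_random_general}), the argument compares PCA with a random split.
\begin{itemize}
\item \textbf{Random split.} For a uniformly random balanced $S$, $\mathbb E[s_Ss_S^\top]=\tfrac{T}{4(T-1)}\sum_t\Delta_t\Delta_t^\top$, by the standard covariance of sampling without replacement together with centering. Hence $\mathbb E\,\mathcal{G}_{\mathrm{var}}=\tfrac{\eta^2}{8T(T-1)}\mathrm{tr}(H^3\Sigma)$, where $\Sigma=\sum_t\Delta_t\Delta_t^\top$.
\item \textbf{PCA split.} Thresholding the projections $x_t=v_1^\top g_t$ at the median gives $v_1^\top H^{-1}\cdots$ alignment. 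More usefully, it gives $u^\top s_S\ge$ a constant times $\sum_t|u^\top\Delta_t|$ along the dominant direction. Lower-bound the PCA gain by its $\lambda_1^3(u_1^\top s_S)^2$ term, and bound $(u_1^\top s_S)^2\ge \tfrac14(\sum_t|u_1^\top\Delta_t|)^2$ when $v_1\approx u_1$.
\end{itemize}

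The spectral-concentration condition is then the assumption that this term exceeds $\tfrac{1}{8T(T-1)}\mathrm{tr}(H^3\Sigma)$, for example $\lambda_1^3\,\Sigma_{11}\gtrsim \tfrac{1}{T}\sum_\ell\lambda_\ell^3\Sigma_{\ell\ell}$. In that form the advantage visibly grows with $\lambda_1/\lambda_2$.

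The main obstacle is the general-$T$ step. PCA on $C$ finds the top eigenvector of $H\Sigma H$, not of $H^{3/2}\Sigma H^{3/2}$, and median thresholding of the projections does not by itself control $(u_1^\top s_S)^2$ relative to $\mathrm{tr}$ terms. I would first handle the case where $\Sigma$ is diagonal in $H$'s eigenbasis, so that $v_1=u_1$ once $\lambda_1^2\Sigma_{11}$ is the largest entry. I would then treat misalignment by a Davis--Kahan perturbation, which is where the spectral gap enters.
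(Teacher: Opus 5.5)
\paragraph{There is a genuine gap: neither core claim is established.} Your reduction to the $H^3$ form is right, and so is your random-split computation $\mathbb{E}[s_Ss_S^\top]=\tfrac{T}{4(T-1)}\Sigma$, up to a constant. Since $s_{S^c}=-s_S$, you get $\bar g_1-\bar g_2=-\tfrac{4}{T}Hs_S$, so $\mathcal{G}_{\mathrm{var}}(S)=\tfrac{2\eta^2}{T^2}s_S^\top H^3 s_S$; with that constant you recover the paper's $\tfrac{\eta^2}{2T(T-1)}\operatorname{tr}(M)$. The problems are elsewhere.

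\paragraph{The $T=4$, $d=2$ instance.} Your concrete instance is degenerate, as you noticed. The fallback ``generic configuration plus a finite case check'' does not work either. Median-thresholding the gradient projections selects the balanced $S$ maximizing $(v_1^\top H s_S)^2$, whereas the gain is $s_S^\top H^3 s_S$. For a generic configuration these two criteria can select different partitions even when the top gradient-PCA axis is $e_1$. The missing idea is the specific symmetric instance the paper uses: the four corners $\Delta_t=(\pm\alpha,\pm\beta)$. There the three balanced partitions give $s_S$ parallel to $e_1$, $s_S$ parallel to $e_2$, and $s_S=0$, with gains $\tfrac{\eta^2}{2}\lambda_1^3\alpha^2$, $\tfrac{\eta^2}{2}\lambda_2^3\beta^2$, and $0$. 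Gradient PCA picks $e_1$ iff $\lambda_1^2\alpha^2>\lambda_2^2\beta^2$. Then $\lambda_1^3\alpha^2/(\lambda_2^3\beta^2)=(\lambda_1/\lambda_2)\cdot\lambda_1^2\alpha^2/(\lambda_2^2\beta^2)>\lambda_1/\lambda_2\ge 1$. This single inequality gives both the optimality and the ``grows with the spectral gap'' claim.

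\paragraph{General $T$.} You aim at a stronger statement than Proposition~\ref{prop:pca_dominates_random_general}: a split read off the top eigenvector of $H\Sigma H$. You then leave the resulting $H\Sigma H$ versus $H^{3/2}\Sigma H^{3/2}$ mismatch unresolved; the Davis--Kahan step is only named, and no dominance inequality comes out. The paper avoids this by taking $v_1$ to be the leading eigenvector of the $T\times T$ gain matrix $M=GHG^\top$, with $M_{ij}=\Delta_i^\top H^3\Delta_j$, and setting $\mathbf{s}^*=\operatorname{sign}(v_1)$. This is a weaker notion of ``PCA-aligned'' than the one you target, but it is what the cited proposition asserts. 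The argument is then short:
\begin{itemize}
\item Expanding in the eigenbasis gives $\mathbf{s}^{*\top}M\mathbf{s}^*\ge\mu_1(\mathbf{s}^{*\top}v_1)^2=\mu_1\|v_1\|_1^2$.
\item Since $M\mathbf{1}=0$, a random balanced $\mathbf{s}$ satisfies $\mathbb{E}[\mathbf{s}^\top M\mathbf{s}]=\tfrac{T}{T-1}\operatorname{tr}(M)$.
\item Hence dominance holds whenever $\mu_1\|v_1\|_1^2>\tfrac{T}{T-1}\operatorname{tr}(M)$, with no perturbation argument needed.
\end{itemize}

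\paragraph{Smaller issues.} Your median bound is off. For centered $x_t$, a median split only guarantees $\sum_{t\in S}x_t\ge\tfrac14\sum_t|x_t|$; for example, $x=(3,-1,-1,-1)$ gives $2<3$. So the squared bound carries $\tfrac1{16}$, not $\tfrac14$, and the exact identity $\mathbf{s}^{*\top}v_1=\|v_1\|_1$ requires the sign split. Finally, you do not address the recursive $K=2^r$ accumulation claim. In the paper it rests on one observation about the linear-quadratic model: successive splits along orthogonal eigendirections have vanishing cross terms in $\mathbf{s}^\top M\mathbf{s}$, so the gain decomposes additively as $\sum_j\lambda_j^3\alpha_j^2$.
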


Although PCA is applied to dataset-level gradient similarities rather than the Hessian directly,
gradient disagreement at a shared initialization provides a first-order proxy
for curvature-sensitive update directions under local smoothness.
Appendix~\ref{appendix_pca_connect} formalizes the connection between cosine-similarity and raw-gradient PCA, and Appendix~\ref{app:hessian_alignment} empirically confirms across two MLLM scales that PCA conflict directions partially align with the top Hessian eigenvectors (far above a Gaussian random baseline), supporting this proxy interpretation.

\paragraph{Merging as spectral filtering with implicit norm regularization.}
The preceding results establish that conflict-aware merging \emph{reduces loss}.
We now show it also \emph{improves optimization dynamics} and \emph{regularizes parameters}.

\smallskip\noindent\textit{Spectral filtering (conceptual).}
Joint training on a heterogeneous mixture is constrained by stability along
the stiffest curvature direction: the learning rate must satisfy
$\eta < 2/\lambda_{\max}$, forcing slow progress along flatter dimensions
when the condition number $\kappa = \lambda_{\max}/\lambda_{\min}$ is large.
PCA-structured splitting and merging instead drive
$U^\top (\bar{\theta}_w - \theta^\star) \approx 0$,
where $U$ spans the dominant curvature subspace, suppressing high-curvature error components.
This acts as approximate spectral filtering, lowering the effective condition number to $\kappa_{\mathrm{eff}} \approx \lambda_{\perp,\max}/\lambda_{\min} \ll \kappa$ (Appendix~\ref{app:sgd_robustness}), trading the slow, stability-constrained descent of joint training for a one-shot neutralization of high-curvature conflicts; we verify the predicted stability benefit empirically under an SGD learning-rate sweep.

\smallskip\noindent\textit{Implicit norm regularization.}
Weight averaging additionally contracts the merged model toward the shared initialization.
By convexity of the squared norm,
\begin{equation}
\|\bar{\theta}_w - \theta^{(0)}\|^2
\;\le\;
\sum_{i} w_i \|\theta_i - \theta^{(0)}\|^2,
\label{eq:contraction}
\end{equation}
with strict inequality whenever checkpoints differ.
This contraction yields a smaller distance-based complexity term under
PAC-Bayes generalization bounds,
providing a principled account of why MERIT can improve generalization
even when individual branch training losses are not lower than joint training (see Table~\ref{tab:merge_diagnostics_main}).

\paragraph{Empirical verification of merge-readiness.}
The analysis above rests on two assumptions---branches remaining in a single flat basin and averaging contracting displacements along high-curvature directions---which we verify directly on Qwen2.5-VL-3B's $K{=}4$ branches (MERIT-2D) through four diagnostics (full details in Appendix~\ref{app:merge_diagnostics}).
First, along all $6$ pairwise and $4$ branch-to-merged linear interpolation paths, the loss barrier $\max_\alpha [L(\theta(\alpha)) - ((1{-}\alpha)L(0) + \alpha L(1))]$ is \emph{exactly zero}: every path stays at or below linear interpolation, the strongest possible form of linear mode connectivity.
Second, consistent with Eq.~\eqref{eq:contraction}, the merged model remains $2.4$--$2.9\times$ closer to $\theta^{(0)}$ than the jointly trained model throughout training, with the ratio widening monotonically (see Table~\ref{tab:merge_diagnostics_main}).
Third, despite carrying a substantially \emph{higher} training loss on the full mixture ($+0.49$ to $+1.27$ across epochs), the merged model achieves better held-out performance---the classic signature of implicit regularization, and precisely the behavior predicted by the norm-contraction argument above (formalized via PAC-Bayes in Appendix~\ref{app:implicit_reg}).
Finally, isotropic Gaussian weight perturbations ($\sigma\in\{0.01, 0.05, 0.1\}$) produce consistently smaller loss increases on the merged model than on the jointly trained one, confirming a flatter surrounding landscape (Appendix~\ref{app:perturbation}).

\begin{table}[t!]
\centering
\caption{Merge-readiness diagnostics (Qwen2.5-VL-3B, MERIT-2D, $K{=}4$ branches). The merged model stays closer to $\theta^{(0)}$ while carrying higher training loss, yet generalizes better.}
\label{tab:merge_diagnostics_main}
\small
\setlength{\tabcolsep}{5pt}
\begin{adjustbox}{max width=\linewidth}
\begin{tabular}{c ccc ccc}
\toprule
\multirow{2}{*}{Epoch}
& \multicolumn{3}{c}{\textbf{Displacement} $\|\cdot - \theta^{(0)}\|_2$}
& \multicolumn{3}{c}{\textbf{Training loss}} \\
\cmidrule(lr){2-4}\cmidrule(lr){5-7}
& Joint & Merged & Ratio & Joint & Merged & Gap \\
\midrule
0.5 & 13.73 & 5.65 & 2.43$\times$ & 0.709 & 1.198 & $+$0.489 \\
1.0 & 19.73 & 7.50 & 2.63$\times$ & 0.560 & 1.172 & $+$0.611 \\
2.0 & 28.15 & 10.11 & 2.78$\times$ & 0.370 & 1.167 & $+$0.797 \\
6.0 & 34.61 & 11.87 & 2.92$\times$ & 0.064 & 1.330 & $+$1.266 \\
\bottomrule
\end{tabular}
\end{adjustbox}
\end{table}

\begin{algorithm}[t!]
\caption{MERIT: Conflict-Aware Dataset Partitioning and Weight Merging}
\label{alg:conflict_partition_merge}
\begin{algorithmic}[1]
\REQUIRE Merge-ready initialization $\theta^{(0)}$; datasets $\{\mathcal{D}_t\}_{t=1}^{T}$ with sample counts $\{s_t\}_{t=1}^{T}$ and token budgets $\{n_t\}_{t=1}^{T}$; PCA dimension $r$.
\ENSURE Merged model $\bar{\theta}$.

\STATE \(\triangleright\)~\emph{Step 1: Gradient conflict estimation at $\theta^{(0)}$.}
\FOR{$t = 1,\dots,T$}
    \STATE Compute $g_t$ at $\theta^{(0)}$ under identical training settings (backbone, trainable-parameter subset, gradient-estimation budget); set $\tilde g_t \gets g_t/\|g_t\|$.
\ENDFOR
\STATE Form $C \in \mathbb{R}^{T \times T}$ with $C_{ij} = \langle \tilde g_i, \tilde g_j \rangle$.

\STATE \(\triangleright\)~\emph{Step 2: PCA-based conflict decomposition.}
\STATE Apply (column-centered) PCA to $C$ and obtain the top-$r$ PCA embedding $z_t \in \mathbb{R}^{r}$ for each $t$.

\STATE \(\triangleright\)~\emph{Step 3: Balanced conflict-aware partitioning.}
\STATE Recursively split $\{1,\dots,T\}$ along the $r$ PCA axes via sample-balanced medians (weights $s_t$) into $K = 2^r$ disjoint groups $\{\mathcal{G}_k\}_{k=1}^{K}$, balancing per-group sample counts $\sum_{t \in \mathcal{G}_k} s_t$; let $N_k = \sum_{t \in \mathcal{G}_k} n_t$ denote the per-group token budget (with $\sum_{k=1}^{K} N_k = \sum_{t=1}^{T} n_t$).

\STATE \(\triangleright\)~\emph{Step 4: Communication-free group-wise training.}
\FOR{$k = 1,\dots,K$}
    \STATE Train $\theta_k$ from $\theta^{(0)}$ on $\bigcup_{t \in \mathcal{G}_k}\mathcal{D}_t$ using budgets $\{n_t\}_{t\in\mathcal{G}_k}$.
\ENDFOR

\STATE \(\triangleright\)~\emph{Step 5: Token-weighted parameter-space merging.}
\STATE \textbf{return} $\bar{\theta} = \sum_{k=1}^{K} w_k\,\theta_k$ \;\; with \;\; $w_k = N_k / \sum_{j=1}^{K} N_j$.
\end{algorithmic}
\end{algorithm}

\section{Proposed Method}
\label{sec:method}

\paragraph{Overview.}
MERIT is a decentralized instruction-tuning pipeline that turns a heterogeneous dataset mixture into $K{=}2^{r}$ conflict-aware partitions, fine-tunes each partition independently from a shared merge-ready initialization $\theta^{(0)}$, and merges the resulting checkpoints via token-weighted averaging.
The pipeline has five stages (Algorithm~\ref{alg:conflict_partition_merge}): (i) dataset-level gradient conflict estimation, (ii) PCA-based decomposition of the conflict structure, (iii) balanced partitioning along the dominant PCA axes, (iv) communication-free branch training, and (v) weight merging.

\paragraph{Merge-ready initialization.}
A prerequisite for MERIT is a merge-ready initialization $\theta^{(0)}$: a checkpoint from which independently fine-tuned models remain in a connected low-loss region and therefore admit effective one-shot merging.
In our multimodal experiments, $\theta^{(0)}$ is an already instruction-tuned MLLM checkpoint (e.g., released Qwen2.5-VL); in text-only settings a pretrained LLM is sufficient.
Merge-readiness is an empirical property of the chosen initialization; comprehensive diagnostics (linear mode connectivity, weight displacement, and perturbation robustness) are provided in Appendix~\ref{app:merge_diagnostics}.

\subsection{Dataset-Level Gradient Conflict Estimation}

MERIT quantifies dataset-level interference by comparing the directions of per-dataset gradients at $\theta^{(0)}$.
For each dataset $t$, we estimate a representative gradient $g_t$ by averaging gradients over a small calibration set (up to 200 examples per dataset), using identical model and trainable-parameter settings.
We then normalize each gradient and construct the cosine similarity matrix,
\[
C_{ij} := \frac{\langle g_i, g_j \rangle}{\|g_i\|\,\|g_j\|},
\]
where high values indicate aligned updates and low or negative values indicate conflicting updates under joint optimization.
Cosine-based gradient alignment is a standard tool for characterizing task interactions in multitask and continual learning~\citep{lopez2017gradient,yu2020gradient,ilharco2022editing,lei2026moe}.

On a 100-task subset of Vision-FLAN, the mean cosine similarity between the $n{=}200$ calibration gradient and the full 1{,}000-sample reference gradient reaches $0.847$ (std $0.106$), with diminishing returns beyond $n{=}200$ (Appendix~\ref{app:calibration_size}); a qualitative t-SNE view of these dataset-level gradients is provided in Appendix~\ref{app:tsne_qualitative}.
To keep preprocessing cheap at scale, we subsample one gradient entry every $s$ parameters (default $s{=}5$, 20\% retained); the resulting cosine similarities match the full-gradient baseline with Pearson/Spearman correlations above $0.98$ (Appendix~\ref{app:efficiency}).
Finally, the matrix can be reused across collections: extending an existing $T$-dataset collection with $m$ new datasets needs only $O(Tm)$ additional similarity computations rather than recomputing the full $O((T{+}m)^2)$ matrix, followed by a negligible PCA update.

\subsection{PCA-Based Conflict Decomposition}

We extract the dominant conflict structure of $C$ via PCA, yielding an $r$-dimensional embedding $z_t \in \mathbb{R}^{r}$ per dataset that captures the principal patterns of agreement and disagreement among dataset-level gradients.
We consider $r \in \{1,2,3\}$, giving the 1D, 2D, and 3D variants of MERIT (with $K{=}2^{r} \in \{2,4,8\}$ branches).

We use cosine-similarity PCA as the default: it is scale-invariant and recovers the same leading eigenspace as raw-gradient PCA under gradient-norm concentration, which we verify empirically across our dataset mixtures (Appendix~\ref{appendix_pca_connect}).

\subsection{Conflict-Aware and Balanced Dataset Partitioning}

Using the PCA embedding, MERIT partitions datasets into disjoint groups so that datasets with similar projections are assigned to the same group while datasets with opposing projections are separated.
This spreads group-wise updates across distinct, conflicting directions from $\theta^{(0)}$.

A practical challenge is group imbalance: naive thresholding along PCA coordinates can yield groups with highly unequal data volumes.
MERIT instead performs recursive 50/50 splits along PCA coordinates using sample-balanced medians, where each dataset carries weight $s_t$ equal to its sample count.
This balances group sizes without sacrificing separation along dominant conflict axes.
A comparison with distance-based clustering (K-means on the same gradient representation) is provided in Appendix~\ref{app:clustering}.

\subsection{Communication-Free Branch Training}

After partitioning, MERIT fine-tunes one model per group, all initialized from $\theta^{(0)}$ and sharing the same backbone, trainable-parameter subset, and hyperparameters; the only difference is which datasets each group sees.
No cross-group communication occurs during training, so branches can run in parallel on disjoint hardware.

For a fair comparison to centralized joint training, MERIT matches the total budget.
Each dataset $t$ is assigned to exactly one group $k$ and contributes its full budget $n_t$ in that branch, so the per-group token count $N_k := \sum_{t \in \mathcal{G}_k} n_t$ satisfies $\sum_{k} N_k = \sum_{t} n_t$, matching the joint-training baseline.

\subsection{Token-Weighted Merging}

The resulting branch checkpoints $\theta_1, \ldots, \theta_K$ are merged in a single pass via token-weighted averaging:
\[
\bar{\theta} = \sum_{k=1}^{K} w_k\,\theta_k,
\qquad
w_k = \frac{N_k}{\sum_{j=1}^{K} N_j}.
\]
When per-group token budgets are exactly balanced, this reduces to uniform averaging; under our sample-balanced split the token budgets are only approximately balanced, so the merge is in general non-uniform.

\begin{table*}[t]
\centering
\caption{
Controlled comparison of multimodal post-training strategies across diverse benchmarks. \textbf{Avg.} denotes the mean score over all benchmarks. \textbf{Bold} indicates the best result in each column,
and \underline{underline} indicates the second-best result in each column. Unless otherwise noted, all numbers are averaged over 3 independent runs; Joint training (1 ep) and MERIT-3D are averaged over 5 seeds as our primary comparison. Statistical significance tests for Joint training (1 ep) vs.\ MERIT-3D are reported in Appendix~\ref{app:stat_test}.
}
\label{table:vlm_main_results_grouped}
\setlength{\tabcolsep}{4.5pt}
\begin{adjustbox}{max width=\linewidth}
\begin{tabular}{l cc cc cc cc c}
\toprule
\multirow{2}{*}{Method}
& \multicolumn{2}{c}{\textbf{General MCQA}}
& \multicolumn{2}{c}{\textbf{User Preference \& Fluency}}
& \multicolumn{2}{c}{\textbf{Text-Rich VQA}}
& \multicolumn{2}{c}{\textbf{Image Reasoning}}
& \multirow{2}{*}{\textbf{Avg.}} \\
\cmidrule(lr){2-3} \cmidrule(lr){4-5} \cmidrule(lr){6-7} \cmidrule(lr){8-9}
& SeedBench & MMBench
& LLaVA-W & MMVet
& TextVQA & AI2D
& MathVista & MMMU
& \\
\midrule
Base model
& 66.8 & 79.7
& \textbf{53.2} & 34.0
& 61.2 & \textbf{63.8}
& 29.6 & 41.2
& 53.7 \\
\midrule
Joint training (0.5 ep)
& 67.4 & 79.4
& 40.2 & 33.1
& 67.2 & 60.9
& 31.4 & 41.6
& 52.7 \\
Joint training (1 ep)
& 69.2 & 80.5
& 41.9 & 36.4
& 68.0 & 62.6
& 34.2 & 41.9
& 54.3 \\
Joint training (2 ep)
& 70.0 & \textbf{81.4}
& 42.8 & \underline{37.6}
& 63.4 & 62.5
& \underline{36.5} & \textbf{43.0}
& 54.7 \\
\midrule
Random (2 groups)
& 69.4 & 80.1
& 44.5 & 34.7
& 70.4 & 62.7
& 34.0 & 41.2
& 54.6 \\
Random (4 groups)
& 70.4 & 81.0
& 40.6 & 34.7
& 70.4 & 63.1
& 34.0 & 40.8
& 54.4 \\
Random (8 groups)
& 69.5 & 79.9
& 42.2 & 35.0
& 73.7 & 61.7
& 33.5 & 40.5
& 54.5 \\
\midrule
Conflict-induced (2 groups)
& 70.7 & 80.6
& 42.6 & 35.4
& 70.0 & 62.9
& 34.4 & 42.3
& 54.9 \\
\midrule
Uniform soup (2)
& 70.2 & \underline{81.1}
& 45.0 & 35.3
& 68.9 & \underline{63.4}
& \textbf{36.8} & 42.2
& 55.4 \\
Uniform soup (3)
& 70.1 & \underline{81.1}
& 42.3 & 36.3
& 68.8 & 63.1
& 35.8 & 42.5
& 55.0 \\
Uniform soup (4)
& 70.2 & \underline{81.1}
& 41.8 & 36.3
& 68.4 & \underline{63.4}
& 35.9 & 42.2
& 54.9 \\
\midrule\midrule
\rowcolor{maroon!15}\textbf{MERIT (Proposed, 1D split, 2 groups)}
& \textbf{71.0} & 80.0
& 43.1 & 35.0
& 72.4 & 62.1
& \underline{36.5} & 41.4
& 55.2 \\
\rowcolor{maroon!15}\textbf{MERIT (Proposed, 2D split, 4 groups)}
& \underline{70.8} & 78.4
& 47.4 & 36.6
& \underline{74.1} & 61.5
& 36.0 & 40.7
& \underline{55.7} \\
\rowcolor{maroon!15}\textbf{MERIT (Proposed, 3D split, 8 groups)}
& 70.5 & 80.1
& \underline{52.0} & \textbf{37.7}
& \textbf{75.2} & 62.5
& 35.4 & \underline{42.7}
& \textbf{57.0} \\
\bottomrule
\end{tabular}
\end{adjustbox}
\end{table*}

\begin{table*}[t!]
\centering
\caption{
Comparison over LLaVA-Series on diverse MLLM benchmarks under two 7B base models.
We report each of the eight benchmarks together with their overall mean (\textbf{Avg.}, computed only when all eight scores are available).
For each base model, we compare further full fine-tuning via centralized Joint FFT against MERIT under a matched training budget; MERIT uses the 2D split with $K{=}4$ groups.
\textbf{Bold} marks the better of Joint FFT vs.\ MERIT within the same base.
For the 0.7M-base build we report the first seed, and its Joint FFT and MERIT are each validated over three independent seeds in Appendix~\ref{app:robustness}; the stronger 3.6M-base build is a single run.
}
\label{table:scaled_vlm_results}
\setlength{\tabcolsep}{3pt}
\begin{adjustbox}{max width=\linewidth}
\begin{tabular}{ll cc cc cc cc c}
\toprule
\multirow{2}{*}{\textbf{Model}} & \multirow{2}{*}{\textbf{Train Data}}
& \multicolumn{2}{c}{\textbf{General MCQA}}
& \multicolumn{2}{c}{\textbf{User Preference \& Fluency}}
& \multicolumn{2}{c}{\textbf{Text-Rich VQA}}
& \multicolumn{2}{c}{\textbf{Image Reasoning}}
& \multirow{2}{*}{\textbf{Avg.}} \\
\cmidrule(lr){3-4}\cmidrule(lr){5-6}\cmidrule(lr){7-8}\cmidrule(lr){9-10}
& & SeedBench & MMBench & LLaVA-W & MMVet & TextVQA & AI2D & MathVista & MMMU & \\
\midrule
LLaVA-7B & 0.6M & 37.0 & 38.7 & 57.2 & 25.5 & -- & 48.3 & 25.4 & 34.1 & -- \\
LLaVA-1.5-7B & 0.7M & 65.9 & 64.3 & 59.6 & 31.1 & 58.2 & 54.8 & 25.6 & 35.3 & 49.4 \\
LLaVA-1.5-13B & 0.7M & 68.2 & 67.7 & 66.1 & 36.1 & 61.3 & 59.5 & 27.7 & 33.6 & 52.5 \\
~~+ Joint FFT \citep{xu2024vision} & 0.7M + 0.2M & -- & 69.8 & 38.5 & 33.4 & -- & -- & -- & 34.4 & -- \\
\midrule\midrule
Base VLM 7B & 0.7M & 64.5 & 67.2 & 57.2 & 28.9 & 57.6 & 46.5 & 26.9 & 32.8 & 47.7 \\
~~+ Joint FFT & 0.7M + 1.6M & 70.1 & \textbf{76.2} & 58.8 & 32.5 & \textbf{73.8} & \textbf{58.6} & 32.9 & \textbf{36.4} & 54.9 \\
\rowcolor{maroon!15}~~+ \textbf{MERIT (Proposed, 2D)} & 0.7M + 1.6M & \textbf{70.6} & 75.7 & \textbf{59.2} & \textbf{35.1} & 72.4 & 58.4 & \textbf{35.4} & 36.1 & \textbf{55.4} \\
\midrule
Scaled base VLM 7B & 3.6M & 69.8 & 74.5 & 67.1 & 34.0 & 72.4 & 69.2 & 39.4 & 45.1 & 58.9 \\
~~+ Joint FFT & 3.6M + 1.6M & 71.3 & 75.3 & 50.2 & \textbf{41.5} & \textbf{80.2} & 71.6 & \textbf{49.7} & \textbf{47.4} & 60.9 \\
\rowcolor{maroon!15}~~+ \textbf{MERIT (Proposed, 2D)} & 3.6M + 1.6M & \textbf{71.5} & \textbf{75.8} & \textbf{66.2} & 39.1 & 79.8 & \textbf{71.9} & 43.0 & 44.8 & \textbf{61.5} \\
\bottomrule
\end{tabular}
\end{adjustbox}
\end{table*}

\section{Experiments}
\label{sec:experiments}

\paragraph{Setup.}
We evaluate MERIT in a controlled 3B study on Qwen2.5-VL-3B~\citep{bai2025qwen25vltechnicalreport} with 136 Vision-FLAN tasks~\citep{xu2024vision}, reporting 8 multimodal benchmarks grouped into four categories (General MCQA, User Preference \& Fluency, Text-Rich VQA, Image Reasoning).
Baselines include centralized joint training at 0.5/1/2 epochs, random partitioning (2/4/8 groups), conflict-induced splitting, and uniform model soups.
Full experimental details (training recipe, hyperparameters, calibration set, per-benchmark protocols, and statistical significance tests) are provided in Appendix~\ref{app:exp_detail}.
Large-scale 7B results and text-only transfer are reported in Section~\ref{sec:analyses}.

\subsection{Overall Performance}
Table~\ref{table:vlm_main_results_grouped} summarizes the controlled 3B study.
Every MERIT variant outperforms single-epoch joint training on the 8-benchmark average; even random partitioning surpasses Joint 1 ep, consistent with Theorem~\ref{thm:main_quadratic_gain} predicting a deterministic merging gain within a shared flat basin.
The best variant, MERIT-3D, improves over the primary Joint 1 ep baseline by $+2.7$ without any cross-partition gradient communication during fine-tuning.
The pattern replicates at 7B scale across three independent seeds, confirming consistency (Table~\ref{tab:7b_threeseed} in Appendix~\ref{app:robustness}).

\subsection{Conflict-Aware vs.\ Random Partitioning}

A natural question is how much of MERIT's gain comes from its conflict-aware split, rather than from splitting and merging alone. To isolate this, we compare MERIT against random partitioning under the same number of groups, the same budget, and the same one-shot merge; the only difference is how datasets are assigned.

The gap is substantial.
MERIT-3D improves over Random (8 groups) by $+2.5$ under identical budgets and the same one-shot merging step, attributable purely to the choice of split.
Consistent with Proposition~\ref{prop:pca_main}, the advantage grows monotonically with the number of PCA dimensions, while random partitioning shows no such trend.
Beyond random splits, MERIT also outperforms K-means clustering on the same gradient representations, indicating that aligning the partition with conflict directions matters more than simply grouping similar datasets (Appendix~\ref{app:clustering}).

\begin{table*}[t!]
\centering
\caption{
Text-only benchmark results on Qwen2.5-3B with 66 FLAN tasks. \textbf{Bold} indicates the best result in each column, and \underline{underline} indicates the second-best result in each column.
}
\label{table:text_results_grouped}
\setlength{\tabcolsep}{7pt}
\begin{adjustbox}{max width=\linewidth}
\begin{tabular}{l cc cc cc cc c}
\toprule
\multirow{2}{*}{Method}
& \multicolumn{2}{c}{\textbf{Knowledge QA}}
& \multicolumn{2}{c}{\textbf{Commonsense}}
& \multicolumn{2}{c}{\textbf{Text Inference}}
& \multicolumn{2}{c}{\textbf{Problem Solving}}
& \multirow{2}{*}{Avg.} \\
\cmidrule(lr){2-3} \cmidrule(lr){4-5} \cmidrule(lr){6-7} \cmidrule(lr){8-9}
& MMLU & GPQA
& HellaSwag & WinoGrande
& BoolQ & XNLI
& ARC-C & HumanEval
& \\
\midrule
Base model
& 65.5 & 29.5
& 74.5 & 70.1
& 77.3 & \textbf{42.9}
& \underline{55.6} & 37.8
& 56.7 \\
\midrule
Joint training (1 ep)
& 65.8 & 30.1
& 74.0 & 69.9
& 83.9 & 41.9
& 55.5 & 39.7
& 57.6 \\
Joint training (2 ep)
& \textbf{66.1} & \underline{30.6}
& \textbf{76.3} & 69.2
& 82.9 & 41.9
& \textbf{56.5} & \textbf{42.7}
& \underline{58.3} \\
\midrule
Random (2 groups)
& 65.8 & 29.5
& 74.4 & \underline{70.3}
& 85.3 & 42.4
& 55.1 & \underline{42.4}
& 58.2 \\
Random (4 groups)
& \underline{66.0} & 30.3
& \underline{74.7} & \underline{70.3}
& 85.7 & \underline{42.5}
& 54.9 & 42.1
& \underline{58.3} \\
\midrule
Conflict-induced (2 groups)
& 65.8 & 28.7
& 74.0 & 69.4
& 85.3 & 42.2
& 53.8 & 42.1
& 57.7 \\
\midrule
Uniform soup (2)
& 65.9 & \underline{30.6}
& 74.0 & 70.2
& 83.9 & 41.9
& 55.2 & 40.2
& 57.7 \\
Uniform soup (3)
& 65.7 & \textbf{30.8}
& 74.0 & 70.2
& 84.2 & 42.1
& 55.7 & 39.0
& 57.7 \\
\midrule\midrule
\rowcolor{maroon!15}\textbf{MERIT (Proposed, 1D split, 2 groups)}
& 66.0 & \underline{30.6}
& 74.4 & 69.9
& \textbf{86.3} & 42.1
& 54.2 & 41.2
& 58.1 \\
\rowcolor{maroon!15}\textbf{MERIT (Proposed, 2D split, 4 groups)}
& \textbf{66.1} & 30.0
& \underline{74.7} & \textbf{70.8}
& \underline{86.1} & 42.4
& 55.3 & 41.5
& \textbf{58.4} \\
\bottomrule
\end{tabular}
\end{adjustbox}
\end{table*}

\section{Further Analyses and Discussions}
\label{sec:analyses}
 
\subsection{Large-Scale Vision--Language Model Experiments}
\label{sec:large_scale_vlm}

To test whether MERIT's split-and-merge gains persist as we scale both model size and data, we apply it to further full fine-tuning (FFT) of a 7B vision--language model on a 1.6M-example mixture drawn from 176 sources (details in Appendix~\ref{app:large_scale_vlm}). We run the same comparison on top of two base models of increasing strength, holding the 1.6M FFT mixture and the training budget identical across methods. Table~\ref{table:scaled_vlm_results} reports the 8-benchmark average for each build.

\paragraph{Base VLM.}
The first base follows a standard LLaVA-style recipe (feature alignment followed by a 0.7M-example instruction-tuning stage). Here MERIT improves the average over centralized Joint FFT ($54.9 \rightarrow 55.4$) while achieving a more balanced profile: it recovers the open-ended degradation that Joint FFT suffers on \textit{User Preference \& Fluency} ($+2.6$ on MMVet) and adds a clear gain on \textit{Image Reasoning} ($+2.5$ on MathVista), at the cost of staying within roughly a point and a half on text-rich benchmarks. Across three independent runs (Appendix~\ref{app:robustness}), MERIT outperforms Joint FFT on the 8-benchmark average in every run, confirming the gain is robust to training randomness.

\paragraph{Scaled base VLM.}
A natural concern is that MERIT's gains might wash out once the base model is already strong. Recent recipes show that inserting a \emph{high-quality knowledge learning} stage---training on large-scale high-quality image captioning data---substantially strengthens a VLM before instruction tuning~\citep{li2024llavanext-ablations,li2025llavaonevision}. 
Following these recipes, we build a stronger base by adding this stage to the pipeline above: starting from feature alignment, we train on 2.9M image captioning samples (\emph{stage~1.5}) and then on the 0.7M instruction-tuning data (\emph{stage~2}), for 3.6M examples in total. 
This produces a markedly stronger starting point, on top of which we repeat the Joint FFT vs.\ MERIT comparison with the same 1.6M FFT mixture.

Even from this stronger initialization, MERIT again exceeds Joint FFT on the overall average ($60.9 \rightarrow 61.5$), though the headroom over the base is naturally smaller for both methods. The gap is concentrated on open-ended generation: Joint FFT collapses LLaVA-Wild to short, generic answers ($67.1 \rightarrow 50.2$), whereas MERIT largely preserves the base model's open-ended quality ($66.2$; cf.\ the short-answer collapse analysis in Appendix~\ref{sec:llava_wild_short_answer}). This collapse is not specific to this run: the same pattern appears in our primary 3B comparison (Table~\ref{table:vlm_main_results_grouped}), averaged over five seeds, where Joint training drops LLaVA-Wild from the base model's $53.2$ to $41.9$ while MERIT preserves it ($52.0$). On the more saturated perception and text-rich benchmarks the two remain comparable. On the reasoning-heavy benchmarks MERIT still improves over the base (MathVista $39.4 \rightarrow 43.0$) or holds it about flat (MMMU $45.1 \rightarrow 44.8$); Joint FFT reaches higher scores there, but in tandem with the open-ended collapse noted above, so the two differ mainly in capability profile while MERIT retains the better overall average.

\paragraph{Takeaway.}
Together, these results show that conflict-aware splitting followed by one-shot merging generalizes to 7B-scale post-training under both larger mixtures and stronger initializations.

\subsection{Application to Text-only Instruction Tuning}
\label{sec:text_only}
We further validate MERIT on text-only instruction tuning using 66 FLAN~\citep{wei2021finetuned} tasks with Qwen2.5-3B.
Table~\ref{table:text_results_grouped} shows that MERIT-2D achieves the best average, exceeding 1-epoch joint training by $+0.8$ and matching or slightly exceeding 2-epoch joint training at half the budget, confirming that conflict-aware splitting generalizes beyond multimodal settings.

\begin{figure}[t]
\centering
\includegraphics[width=\linewidth]{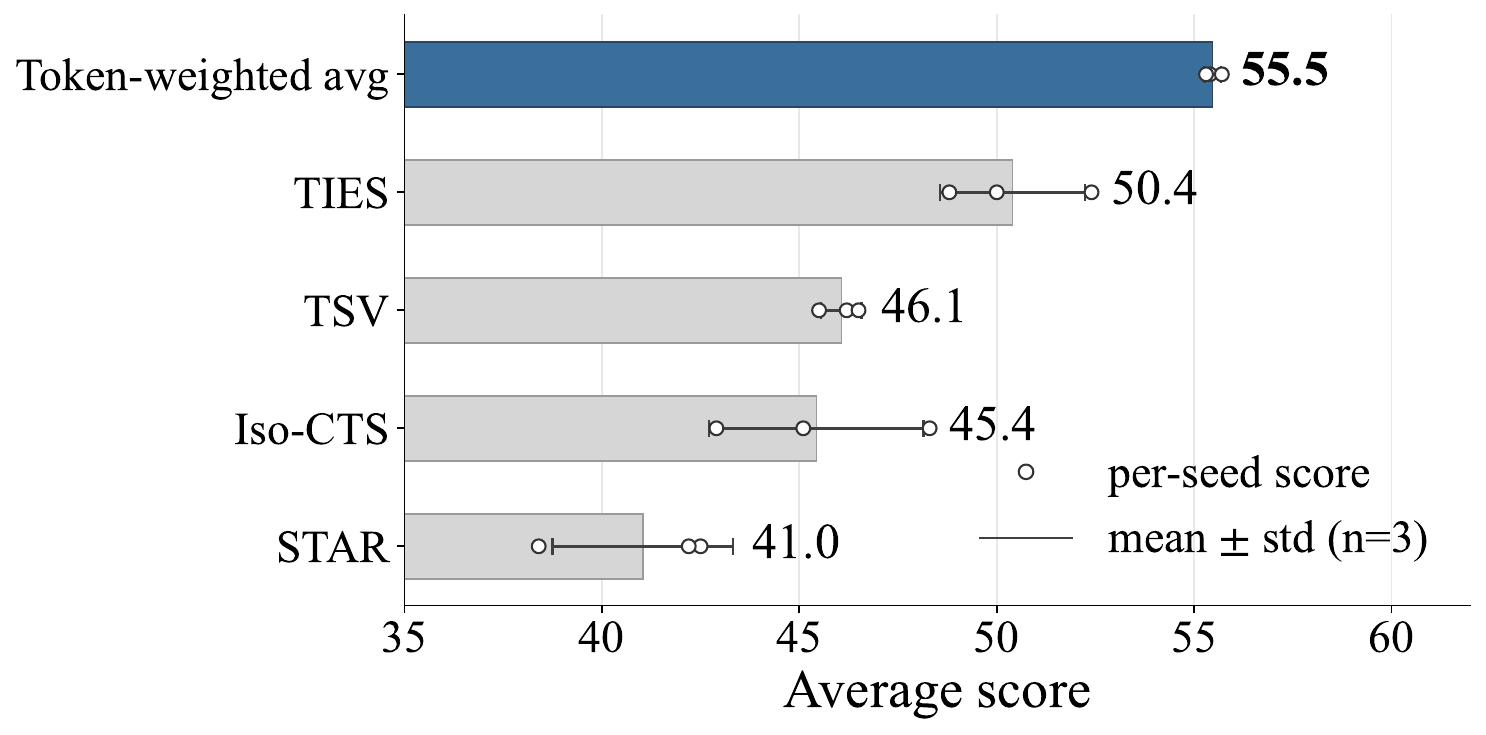}
\caption{
Post-hoc merging baselines applied to MERIT's 7B 2D branches ($K{=}4$).
Each operator replaces MERIT's token-weighted averaging as a drop-in merge step on the same four branches.
Bars show the mean over 3 seeds on the 8-benchmark suite; circles mark per-seed scores and error bars denote $\pm 1$ std.
Token-weighted averaging outperforms all four alternatives; we attribute this to MERIT's branches being complementary by construction, so trimming- or orthogonalization-based operators can discard branch-specific content (Section~\ref{sec:posthoc_merging}).
}
\label{fig:posthoc_main}
\end{figure}

\subsection{Efficiency Analysis}
\label{sec:scalability_limitations}

MERIT is designed for bandwidth-constrained, decentralized settings, eliminating step-level synchronization by independently fine-tuning branch models and merging them once.
Per-task gradient manipulation methods such as PCGrad~\citep{yu2020gradient} and GradNorm~\citep{chen2018gradnorm} are infeasible at our scale: PCGrad alone requires memory exceeding both our V100$\times 8$ and A100$\times 8$ systems at $T{=}136$ tasks on a 3B model (Appendix~\ref{app:pcgrad}).
MERIT's only preprocessing cost is dataset-level gradient estimation; with the stride-$s$ subsampling introduced in Section~\ref{sec:method}, this cost is small and one-time.

At 3B scale, MERIT's parallel branch training and one-shot merge run only $\sim24\%$ above single-epoch joint training on 8 V100 GPUs while achieving a substantially better average, and faster than two-epoch joint training; the one-time gradient-conflict preprocessing ($\sim$2h, amortizable across runs) is reported separately. At 7B scale this recurring overhead drops to $0.8\%$ on 8 A100 GPUs, with preprocessing an even smaller one-time fraction of total training (full breakdown in Appendix~\ref{app:efficiency}).

\subsection{Post-Hoc Merging Baselines}
\label{sec:posthoc_merging}

A natural question is whether MERIT's token-weighted averaging could be replaced with more sophisticated post-hoc merging operators.
We evaluate four state-of-the-art alternatives (TIES~\citep{yadav2023ties}, STAR~\citep{lee2025star}, TSV~\citep{gargiulo2025task}, and Iso-CTS~\citep{marczak2025no}) as drop-in replacements on MERIT's 7B 2D branches over three seeds.
Token-weighted averaging consistently outperforms all four alternatives by $5$--$15$ points on the 8-benchmark average (Figure~\ref{fig:posthoc_main}).
The reason is structural: these operators target redundancy among models specialized on the same task, but MERIT's branches are complementary by construction---each carries task-specific information the others lack.
Trimming or orthogonalizing such signals discards content only one branch holds, whereas token-weighted averaging preserves all branch contributions (per-seed results in Appendix~\ref{app:merging_baselines}).

\section{Conclusion}
\label{sec:conclusion}
We introduced MERIT, a decentralized instruction-tuning pipeline grounded in a local flat-basin quadratic analysis showing that weight merging yields curvature-weighted variance reduction, that PCA-based conflict-aware splitting maximizes this gain, and that merging acts as spectral filtering with implicit norm regularization. MERIT splits datasets before fine-tuning using gradient-conflict PCA, trains group-wise branch models without cross-group synchronization, and merges them once via token-weighted averaging. Experiments on multimodal and text-only instruction mixtures (Qwen2.5-VL-3B on Vision-FLAN and a 7B-scale setting with a 1.6M-example mixture) confirm these implications and show consistent improvements on the 8-benchmark average over centralized joint training while enabling communication-free parallel training.

\section*{Acknowledgements}
We thank the anonymous reviewers and area chair for their constructive feedback. We are also grateful to our colleagues in the NAVER Cloud Hyperscale AI Vision Understanding Team for helpful discussions and support.
\section*{Impact Statement}
This paper presents work whose goal is to advance the field of machine learning. There are many potential societal consequences of our work, none of which we feel must be specifically highlighted here.

\bibliography{main_paper}
\bibliographystyle{icml2026}

\newpage
\appendix
\onecolumn
\section*{Appendix}

\noindent\textbf{Table of Contents}

\begin{tabularx}{\linewidth}{@{}X r@{}}
\toprule

\hyperref[app:exp_detail]{A. Experimental Details} & p.~\pageref{app:exp_detail} \\
\quad \hyperref[app:exp_setup]{A.1. Experimental Setup} \dotfill & p.~\pageref{app:exp_setup} \\
\quad \hyperref[app:large_scale_vlm]{A.2. Large-Scale Vision--Language Model Experiments} \dotfill & p.~\pageref{app:large_scale_vlm} \\[4pt]

\hyperref[app:merge_diagnostics]{B. Merge-Readiness Diagnostics} & p.~\pageref{app:merge_diagnostics} \\
\quad \hyperref[app:lmc]{B.1. Linear Mode Connectivity} \dotfill & p.~\pageref{app:lmc} \\
\quad \hyperref[app:perturbation]{B.2. Weight Perturbation Robustness} \dotfill & p.~\pageref{app:perturbation} \\
\quad \hyperref[app:displacement]{B.3. Displacement Contraction} \dotfill & p.~\pageref{app:displacement} \\
\quad \hyperref[app:trainloss]{B.4. Training Loss vs.\ Generalization} \dotfill & p.~\pageref{app:trainloss} \\[4pt]

\hyperref[app:flat_basin_ablation]{C. Additional Analyses and Ablations} & p.~\pageref{app:flat_basin_ablation} \\
\quad \hyperref[app:tsne_qualitative]{C.1. Visualization of Dataset-Level Gradients} \dotfill & p.~\pageref{app:tsne_qualitative} \\
\quad \hyperref[app:robustness]{C.2. Robustness and Multi-Seed Reproducibility} \dotfill & p.~\pageref{app:robustness} \\
\quad \hyperref[app:clustering]{C.3. Clustering Strategies for Dataset Partitioning} \dotfill & p.~\pageref{app:clustering} \\
\quad \hyperref[app:calibration_size]{C.4. Calibration Dataset Size Sensitivity} \dotfill & p.~\pageref{app:calibration_size} \\
\quad \hyperref[app:merging_baselines]{C.5. Post-Hoc Merging Baselines} \dotfill & p.~\pageref{app:merging_baselines} \\[4pt]

\hyperref[app:qualitative_examples]{D. Additional Qualitative Analysis} & p.~\pageref{app:qualitative_examples} \\
\quad \hyperref[sec:llava_wild_short_answer]{D.1. Short-Answer Collapse on LLaVA-Wild} \dotfill & p.~\pageref{sec:llava_wild_short_answer} \\
\quad \hyperref[app:qualitative_detail]{D.2. Qualitative Examples on Multimodal Reasoning} \dotfill & p.~\pageref{app:qualitative_detail} \\[4pt]

\hyperref[app:efficiency]{E. Detailed Efficiency Analysis} & p.~\pageref{app:efficiency} \\
\quad \hyperref[app:pcgrad]{E.1. Comparison with Centralized Gradient Methods} \dotfill & p.~\pageref{app:pcgrad} \\[4pt]

\hyperref[app:theory_anal]{F. Theoretical Analysis} & p.~\pageref{app:theory_anal} \\
\quad \hyperref[app:quadratic_anal]{F.1. Quadratic Analysis of Merging in Flat PCA-Structured Basins} \dotfill & p.~\pageref{app:quadratic_anal} \\
\quad \hyperref[appendix_pca_connect]{F.2. Theoretical Connection Between Gradient PCA and Cosine-Similarity PCA} \dotfill & p.~\pageref{appendix_pca_connect} \\
\quad \hyperref[app:hessian_alignment]{F.3. Empirical Hessian--PCA Alignment} \dotfill & p.~\pageref{app:hessian_alignment} \\
\quad \hyperref[app:toy_proof]{F.4. Formal Justification of Gradient--Curvature Alignment} \dotfill & p.~\pageref{app:toy_proof} \\
\quad \hyperref[app:implicit_reg]{F.5. Implicit Regularization via Averaging-Induced Contraction} \dotfill & p.~\pageref{app:implicit_reg} \\

\bottomrule
\end{tabularx}

\section{Experimental Details}
\label{app:exp_detail}

\subsection{Experimental Setup}
\label{app:exp_setup}

This section describes the experimental setup shared across all experiments unless otherwise specified.

\subsubsection{Training Datasets}
\label{toc:a11}
\paragraph{Vision--Language Model Training Data.}
For vision--language model experiments, we conduct instruction tuning on Vision-FLAN~\citep{xu2024vision}.
For experiments at the 3B scale, we use 136 tasks out of the full set of 187 Vision-FLAN tasks for training.
All comparison methods share the same dataset splits and training data.

\paragraph{Language Model Training Data.}
For text-only language model experiments, we conduct instruction tuning on the FLAN dataset~\citep{wei2021finetuned}.
We use a subset of 66 instruction tasks for training in all language model experiments.
All baselines and MERIT variants are trained on the same FLAN task mixture.

\subsubsection{Evaluation Benchmarks}
\label{toc:a12}
\paragraph{Multimodal Benchmarks.}
Our primary evaluation suite, used for every headline comparison in the main text (Sections~\ref{sec:experiments}--\ref{sec:analyses}), consists of eight widely used multimodal benchmarks that collectively assess general reasoning, open-ended understanding, text-centric perception, and expert-level reasoning:
SeedBench~\citep{li2024seed} and MMBench~\citep{liu2024mmbench} for general multimodal reasoning and compositional generalization;
LLaVA-Wild~\citep{liu2023visual} and MMVet~\citep{yu2023mm} for open-ended and fine-grained multimodal understanding;
TextVQA~\citep{singh2019towards} for text-centric visual understanding;
AI2D~\citep{kembhavi2016diagram} for diagram-based reasoning;
MathVista~\citep{lu2023mathvista} for mathematical reasoning;
and MMMU~\citep{yue2024mmmu} for expert-level multimodal reasoning across multiple images.
For each benchmark, we report the standard metric defined by the dataset.
The clustering-strategy ablation in Appendix~\ref{app:clustering} adopts a broader 11-benchmark protocol that additionally includes MME~\citep{fu2023mme}, HallusionBench~\citep{guan2024hallusionbench}, DocVQA~\citep{mathew2020docvqa}, and MIABench~\citep{qian2024miabench} for robustness to evaluation protocol; LLaVA-Wild is excluded there because it is generation-only and does not fit the MCQA-centric aggregation used in that ablation.

\paragraph{Text-Only Benchmarks.}
For language models, we evaluate on a diverse set of text-only benchmarks covering reasoning, commonsense understanding, code generation, and cross-lingual generalization.
These include MMLU~\citep{hendrycks2020measuring}, HellaSwag~\citep{zellers2019hellaswag}, WinoGrande~\citep{sakaguchi2020winogrande}, ARC-C~\citep{clark2018think}, HumanEval~\citep{chen2021evaluating}, BoolQ~\citep{clark2019boolq}, GPQA~\citep{rein2024gpqa}, and XNLI~\citep{conneau2018xnli}.
We follow the standard evaluation protocols for all benchmarks.

\subsubsection{Baselines}
\label{toc:a13}
\paragraph{Joint Training.}
All datasets within each setting are trained jointly as a single corpus.
We evaluate joint training for 0.5, 1, and 2 epochs to cover different optimization regimes.
To give the joint baseline the strongest possible footing, we additionally grid-search its per-device batch size and report the best-performing configuration; MERIT branches use a single fixed batch size across all runs.
Consequently, any reported gap understates MERIT's advantage under matched tuning budgets.

\paragraph{Random Split.}
Datasets are randomly partitioned into 2, 4, or 8 groups.
Each group is trained independently and the resulting checkpoints are merged via weight averaging.

\paragraph{Uniform Soup.}
We include a uniform model soup baseline following Model Soups~\citep{wortsman2022model}.
Multiple models are trained on the full dataset using different random data orders and merged by uniform averaging.
We report soups constructed from 2, 3, and 4 models.

\paragraph{Conflict-Induced Split.}
We consider a conflict-induced split baseline based on greedy maximization of inter-group gradient disagreement.
Starting from the most conflicting task pair, remaining tasks are assigned to minimize average cosine similarity within each group.
Models are trained independently and merged using the same averaging procedure.

\subsubsection{Implementation Details}
\label{toc:a14}
\paragraph{Language Model Experiments}
For text-only experiments, we use Qwen2.5-3B~\citep{qwen2025qwen25technicalreport}.
Only language model parameters are trained, and all other settings follow the MERIT pipeline.

\paragraph{Vision--Language Model Experiments}
For vision--language experiments, we use Qwen2.5-VL-3B-Instruct~\citep{bai2025qwen25vltechnicalreport},
initialized from a standard two-stage LLaVA-style training recipe.
During fine-tuning, the vision encoder and multimodal projector are frozen, and only the language model decoder is updated.
Images are processed at a maximum resolution of $784 \times 784$ pixels.
Unless otherwise specified, all methods share the same initialization and training configuration.

\subsection{Large-Scale Vision--Language Model Experiments}
\label{app:large_scale_vlm}

This section provides details of the two base models and the data mixture used in our large-scale (7B) vision--language model experiments (Section~\ref{sec:large_scale_vlm}).
Unless stated otherwise, all training procedures follow the setup described in Appendix~\ref{app:exp_setup}.

\paragraph{Overview.}
For the 7B-scale study, we do not start from a released vision--language checkpoint; instead, we reuse the vision encoder from Qwen2.5-VL and pair it with the Qwen2.5-7B-Instruct language model, building our own merge-ready vision--language base from this combination via a LLaVA-style training recipe. The vision encoder is kept frozen throughout. The multimodal projector is tuned only while building the (scaled) base VLM; during the subsequent FFT stage (both Joint FFT and MERIT) it is frozen as well, so that this stage updates the language-model parameters alone. We consider two such base checkpoints of increasing strength, and on top of each we perform further full fine-tuning (FFT) on the same large-scale multimodal mixture of 1.6M examples spanning 176 dataset/task sources. Each source (or task-specific subset when a dataset provides multiple tasks) is treated as a dataset unit, following the definition in Section~\ref{sec:method}, and serves as the basic unit for gradient conflict estimation and partitioning in MERIT. Both the 1.6M FFT mixture and the training budget are held identical across Joint FFT and MERIT.

\paragraph{Base VLM and Scaled base VLM.}
The \emph{Base VLM} follows a standard two-stage LLaVA-style recipe: feature alignment followed by a 0.7M-example instruction-tuning stage (\emph{stage~2}).
The \emph{Scaled base VLM} additionally inserts a high-quality knowledge-learning stage before instruction tuning, following recent recipes~\citep{li2024llavanext-ablations,li2025llavaonevision}: starting from feature alignment, we train on a 2.9M-example re-captioned corpus (\emph{stage~1.5}) and then on the same 0.7M instruction-tuning data (\emph{stage~2}), for 3.6M examples in total.
For stage~1.5 we use the publicly available LLaVA-ReCap-CC3M dataset (\url{https://huggingface.co/datasets/lmms-lab/LLaVA-ReCap-CC3M}).
Both checkpoints are then fine-tuned on the same 1.6M FFT mixture described below, so that the only difference between the two builds is the strength of the initialization.

\paragraph{Data Sources (1.6M FFT mixture).}
The large-scale FFT mixture is drawn from publicly available datasets hosted on Hugging Face:
\begin{itemize}[leftmargin=1.25em,itemsep=0pt]
    \item \url{https://huggingface.co/datasets/X2FD/LVIS-Instruct4V}
    \item \url{https://huggingface.co/datasets/Vision-Flan/vision-flan}
    \item \url{https://huggingface.co/datasets/HuggingFaceM4/the_cauldron}
    \item \url{https://huggingface.co/datasets/zhiqings/LLaVA-Human-Preference-10K}
    \item \url{https://huggingface.co/datasets/VictorSanh/LrvInstruction}
    \item \url{https://huggingface.co/datasets/laion/gpt4v-dataset}
    \item \url{https://huggingface.co/datasets/ys-zong/VLGuard}
\end{itemize}
For Vision-FLAN and The Cauldron we use only a curated subset of the upstream task collection. 
The exact list of 176 task-unit identifiers and scripts will be released at \url{https://github.com/naver-ai/merit}.

\paragraph{Preprocessing and Mixture Construction.}
We convert all datasets into a unified multimodal instruction format compatible with our LLaVA-style training pipeline (image + instruction + response).
When a dataset provides multiple task-specific subsets (as in Vision-FLAN and The Cauldron), each subset is preserved as its own task unit, yielding 176 units in total.
We apply standard pre-processing and concatenate the units into a single 1.6M-example mixture used for FFT.

\paragraph{Reproducibility.}
The datasets are publicly available under their original licenses. 
More details, including the full base-model training configurations, will be released at \url{https://github.com/naver-ai/merit}.
Multi-seed reproducibility of these 7B results is reported in Appendix~\ref{app:robustness}.

\section{Merge-Readiness Diagnostics}
\label{app:merge_diagnostics}

This section consolidates four complementary diagnostics that collectively verify the merge-ready initialization assumption underlying MERIT.

\subsection{Linear Mode Connectivity}
\label{app:lmc}

We evaluated loss barriers along all pairwise and branch-to-merged interpolation paths in the 3B 2D split ($K{=}4$) using 21 evenly spaced interpolation points ($\alpha \in [0, 1]$, step $0.05$).
The barrier is defined as $\max_\alpha [L(\theta(\alpha)) - ((1{-}\alpha)L(0) + \alpha L(1))]$, measuring the worst-case deviation above the linear interpolation.

\begin{table}[t!]
\centering
\caption{Linear Mode Connectivity (2D split, $K{=}4$). All 10 barriers are exactly $0$, confirming that branches remain in a shared flat basin with no loss barrier.}
\label{tab:lmc_appendix}
\setlength{\tabcolsep}{4pt}
\scalebox{0.85}{
\begin{tabular}{lcccc}
\toprule
Path & $L(\alpha{=}0)$ & $\min_\alpha L(\alpha)$ & $L(\alpha{=}1)$ & Barrier \\
\midrule
\multicolumn{5}{l}{\textit{Branch $\leftrightarrow$ Branch (pairwise)}} \\
b0 $\leftrightarrow$ b1 & 2.306 & 1.901\;($\alpha{=}0.70$) & 1.993 & 0.0 \\
b0 $\leftrightarrow$ b2 & 2.306 & 2.244\;($\alpha{=}0.35$) & 2.468 & 0.0 \\
b0 $\leftrightarrow$ b3 & 2.306 & 1.959\;($\alpha{=}0.65$) & 2.046 & 0.0 \\
b1 $\leftrightarrow$ b2 & 1.993 & 1.972\;($\alpha{=}0.15$) & 2.468 & 0.0 \\
b1 $\leftrightarrow$ b3 & 1.993 & 1.792\;($\alpha{=}0.45$) & 2.046 & 0.0 \\
b2 $\leftrightarrow$ b3 & 2.468 & 2.040\;($\alpha{=}0.95$) & 2.046 & 0.0 \\
\midrule
\multicolumn{5}{l}{\textit{Branch $\to$ Merged}} \\
b0 $\to$ merged & 2.306 & 1.973\;($\alpha{=}1.0$) & 1.973 & 0.0 \\
b1 $\to$ merged & 1.993 & 1.874\;($\alpha{=}0.50$) & 1.973 & 0.0 \\
b2 $\to$ merged & 2.468 & 1.973\;($\alpha{=}1.0$) & 1.973 & 0.0 \\
b3 $\to$ merged & 2.046 & 1.944\;($\alpha{=}0.60$) & 1.973 & 0.0 \\
\bottomrule
\end{tabular}
}
\end{table}

\begin{figure}[t!]
\centering
\includegraphics[width=0.95\textwidth]{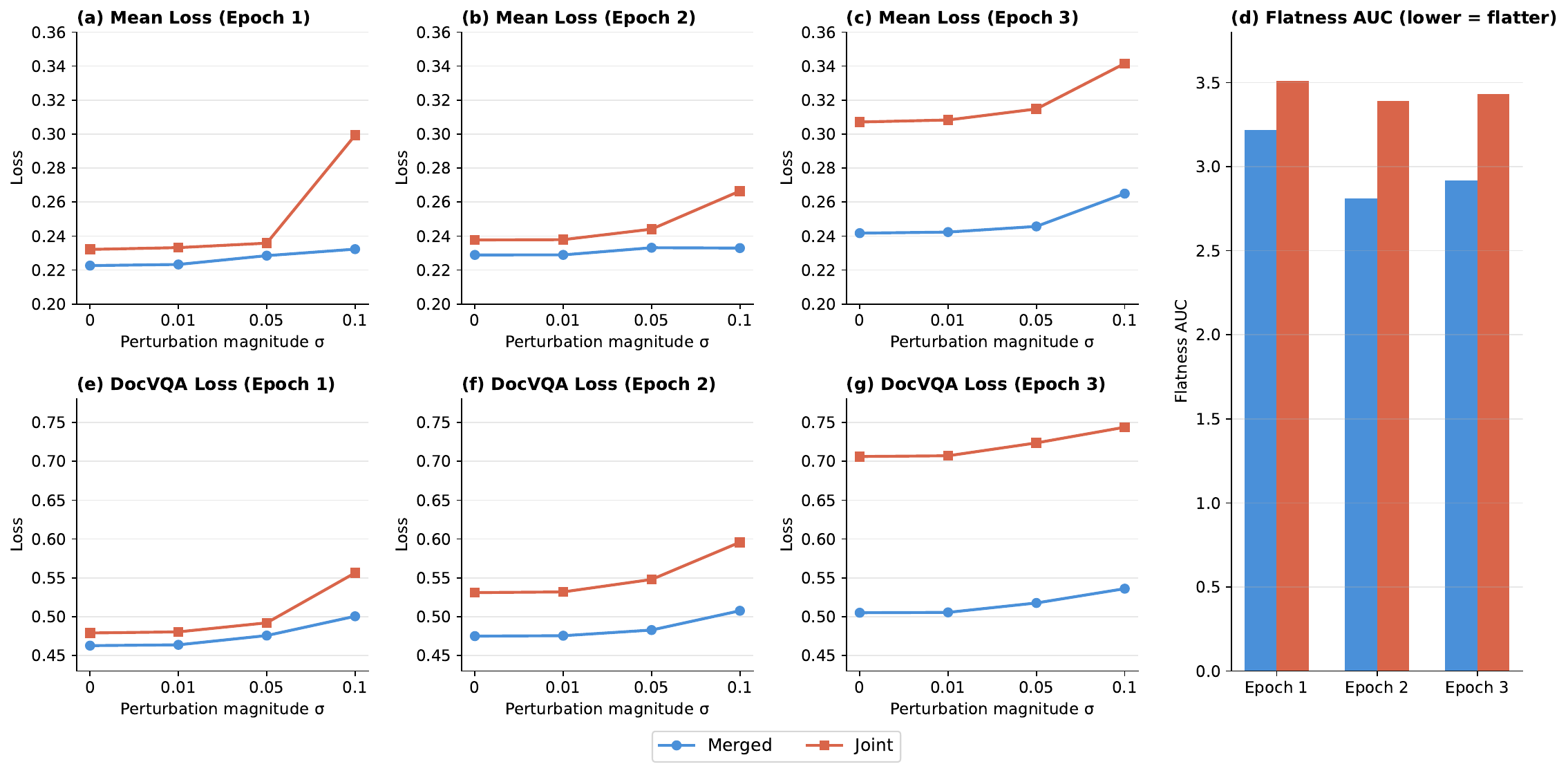}
\caption{Weight perturbation robustness. The merged model (blue) exhibits consistently lower loss sensitivity than the jointly trained model (red) at all perturbation levels and epochs, providing direct evidence of a flatter loss landscape.}
\label{fig:perturbation_appendix}
\end{figure}

\subsection{Weight Perturbation Robustness}
\label{app:perturbation}

We applied isotropic Gaussian perturbations ($\sigma \in \{0.01, 0.05, 0.1\}$) to both the merged and jointly trained models at epochs 1--3.
The merged model exhibits consistently lower loss sensitivity and lower flatness AUC (area under the $\Delta L$--$\sigma$ curve) at all perturbation levels and all epochs (Figure~\ref{fig:perturbation_appendix}), confirming that the merged solution sits in a flatter region of the loss landscape.

\subsection{Displacement Contraction}
\label{app:displacement}

By convexity of the squared norm, weight averaging contracts the merged model toward the shared initialization:
$\|\bar{\theta}_w - \theta^{(0)}\|^2 \le \sum_i w_i \|\theta_i - \theta^{(0)}\|^2$.
Table~\ref{tab:displacement_appendix} confirms this empirically: the merged model remains $2$--$3\times$ closer to $\theta^{(0)}$ throughout training, with the ratio widening monotonically.

\subsection{Training Loss vs.\ Generalization}
\label{app:trainloss}

The merged model has substantially higher training loss than joint training yet achieves better held-out performance, the classic signature of implicit regularization from merging.
The gap widens monotonically over training (Table~\ref{tab:trainloss_appendix}).

\begin{table}[ht!]
\centering
\begin{minipage}[t]{0.48\textwidth}
\centering
\caption{Parameter displacement from shared initialization over training epochs.}
\label{tab:displacement_appendix}
\setlength{\tabcolsep}{4pt}
\scalebox{0.8}{
\begin{tabular}{c ccc}
\toprule
Epoch & $\|\theta_{\mathrm{joint}} - \theta^{(0)}\|_2$ & $\|\bar\theta_{\mathrm{merge}} - \theta^{(0)}\|_2$ & Ratio \\
\midrule
0.5 & 13.73 & 5.65 & 2.43$\times$ \\
1.0 & 19.73 & 7.50 & 2.63$\times$ \\
2.0 & 28.15 & 10.11 & 2.78$\times$ \\
6.0 & 34.61 & 11.87 & 2.92$\times$ \\
\bottomrule
\end{tabular}
}
\end{minipage}\hfill
\begin{minipage}[t]{0.48\textwidth}
\centering
\caption{Training loss on the full mixture over epochs.}
\label{tab:trainloss_appendix}
\setlength{\tabcolsep}{4pt}
\scalebox{0.8}{
\begin{tabular}{c ccc}
\toprule
Epoch & $L_{\mathrm{joint}}$ & $L_{\mathrm{merged}}$ & Gap \\
\midrule
0.5 & 0.709 & 1.198 & $+$0.489 \\
1.0 & 0.560 & 1.172 & $+$0.611 \\
2.0 & 0.370 & 1.167 & $+$0.797 \\
3.0 & 0.205 & 1.202 & $+$0.998 \\
6.0 & 0.064 & 1.330 & $+$1.266 \\
\bottomrule
\end{tabular}
}
\end{minipage}
\end{table}

\section{Additional Analyses and Ablations}
\label{app:flat_basin_ablation}

\subsection{Visualization of Dataset-Level Gradients}
\label{app:tsne_qualitative}

\begin{figure}[t!]
\centering
\includegraphics[width=0.6\linewidth]{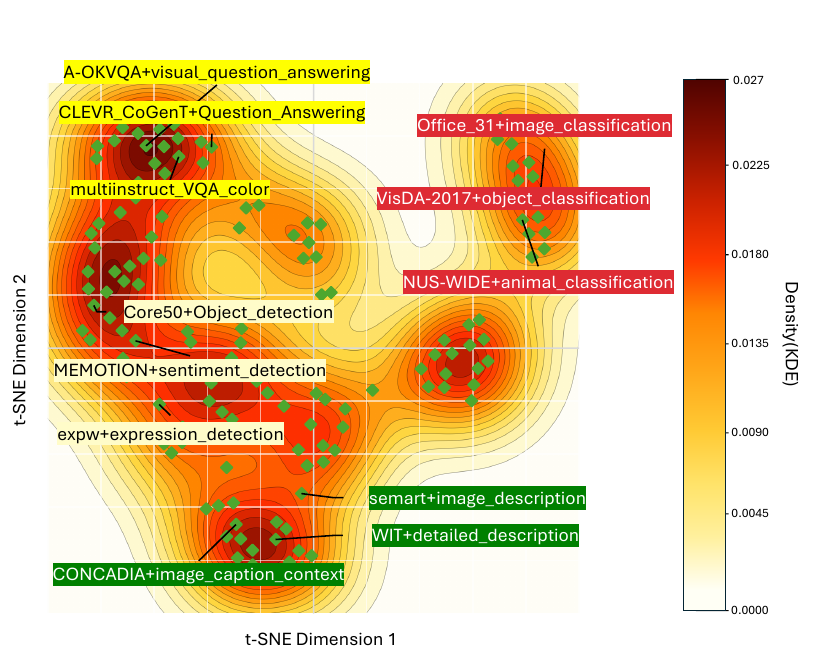}
\caption{
Two-dimensional t-distributed stochastic neighbor embedding (t-SNE)~\citep{van2008visualizing} of dataset-level gradients at $\theta^{(0)}$ (Qwen2.5-VL-3B, 136 Vision-FLAN tasks), overlaid with kernel density estimation (KDE) contours. Each marker is one task. Labeled examples are colored by task type: VQA (yellow), image classification (red), and description/captioning (green).
}
\label{fig:tsne_gradient_clusters}
\end{figure}

Figure~\ref{fig:tsne_gradient_clusters} projects the dataset-level gradients at $\theta^{(0)}$ to two dimensions via t-SNE.
Tasks of the same type (e.g., VQA, image classification, and captioning) tend to cluster together, while different types occupy distinct regions of the gradient space.
This task-type structure is what MERIT's conflict-aware split exploits: gradients that disagree in direction fall into separable regions, so partitioning along the PCA conflict axes groups compatible datasets together.

\subsection{Robustness and Multi-Seed Reproducibility}
\label{app:robustness}

We assess MERIT's reproducibility under training randomness at both scales:
(i) the 3B primary comparison over five independent seeds with a paired Wilcoxon test, and
(ii) a 7B replication over three independent seeds.
Per-seed comparisons against alternative post-hoc merging operators on the same 7B 2D branches are provided separately in Appendix~\ref{app:merging_baselines}, as they target a different question (aggregation rule choice rather than training-seed variability).

\subsubsection{3B Five-Seed Comparison}
\label{app:stat_test}

We focus on the primary comparison used throughout the paper:
MERIT (CosSim PCA, 3D split, 8 groups) versus Joint training (1 ep).
Each method is evaluated over five independent runs with different random seeds, on the same eight benchmarks grouped into four categories (General MCQA, User Preference \& Fluency, Text-Rich VQA, Image Reasoning).
The reported Avg.\ denotes the mean score across all eight benchmarks.

\paragraph{Per-run results.}
Table~\ref{tab:3b_fiveseed} reports per-run results for both methods. MERIT consistently outperforms Joint training on the 8-benchmark average across all five independent runs.

\begin{table*}[ht]
\centering
\caption{Per-run results on the 3B setting comparing \textbf{MERIT (CosSim PCA, 3D split, 8 groups)} against \textbf{Joint training (1 ep)} over five independent runs. MERIT wins on the 8-benchmark average in all five runs (bold).}
\label{tab:3b_fiveseed}
\setlength{\tabcolsep}{4pt}
\scalebox{0.8}{
\begin{tabular}{ll cc cc cc cc c}
\toprule
\multirow{2}{*}{Run} & \multirow{2}{*}{Method}
& \multicolumn{2}{c}{\textbf{General MCQA}}
& \multicolumn{2}{c}{\textbf{User Preference \& Fluency}}
& \multicolumn{2}{c}{\textbf{Text-Rich VQA}}
& \multicolumn{2}{c}{\textbf{Image Reasoning}}
& \multirow{2}{*}{\textbf{Avg.}} \\
\cmidrule(lr){3-4}\cmidrule(lr){5-6}\cmidrule(lr){7-8}\cmidrule(lr){9-10}
& & SeedBench & MMBench & LLaVA-W & MMVet & TextVQA & AI2D & MathVista & MMMU & \\
\midrule
\multirow{2}{*}{Run 1}
& Joint (1 ep) & 68.6 & 80.1 & 41.8 & 38.5 & 68.3 & 62.4 & 35.0 & 40.9 & 54.5 \\
& MERIT-3D     & 70.4 & 80.3 & 52.2 & 38.1 & 75.1 & 63.1 & 35.7 & 42.1 & \textbf{57.1} \\
\midrule
\multirow{2}{*}{Run 2}
& Joint (1 ep) & 69.6 & 80.6 & 41.7 & 36.0 & 67.9 & 62.4 & 34.0 & 41.9 & 54.3 \\
& MERIT-3D     & 70.6 & 80.1 & 53.7 & 38.8 & 75.0 & 62.1 & 35.9 & 43.3 & \textbf{57.4} \\
\midrule
\multirow{2}{*}{Run 3}
& Joint (1 ep) & 69.1 & 80.4 & 41.5 & 34.9 & 68.0 & 62.2 & 34.2 & 42.1 & 54.1 \\
& MERIT-3D     & 70.5 & 79.5 & 49.9 & 36.5 & 75.1 & 62.8 & 34.3 & 42.8 & \textbf{56.4} \\
\midrule
\multirow{2}{*}{Run 4}
& Joint (1 ep) & 69.3 & 80.7 & 42.3 & 36.8 & 67.9 & 63.6 & 33.8 & 42.5 & 54.6 \\
& MERIT-3D     & 70.5 & 80.2 & 52.7 & 38.0 & 75.3 & 61.5 & 34.7 & 42.3 & \textbf{56.9} \\
\midrule
\multirow{2}{*}{Run 5}
& Joint (1 ep) & 69.5 & 80.5 & 42.1 & 35.8 & 67.7 & 62.6 & 34.0 & 41.9 & 54.3 \\
& MERIT-3D     & 70.3 & 80.5 & 51.7 & 37.1 & 75.4 & 63.2 & 36.6 & 43.2 & \textbf{57.3} \\
\bottomrule
\end{tabular}
}
\end{table*}

\paragraph{Statistical Test.}
To further quantify the statistical significance of this improvement,
we apply a paired Wilcoxon signed-rank test on the per-run averaged
scores.
Using the five paired observations, the test yields a one-sided $p$-value of 0.03125, which is statistically significant at the $5\%$ level ($p < 0.05$).

\subsubsection{7B Three-Seed Replication}

To verify that the gains persist at scale, we repeated the full training pipeline (joint FFT and MERIT-2D) over three independent training seeds on the 7B setting.
Table~\ref{tab:7b_threeseed} shows that MERIT outperforms joint training in all three runs, with the same gain pattern as the 3B setting (open-ended reasoning benchmarks up, factual benchmarks within noise).

\begin{table}[ht!]
\centering
\caption{7B results over three independent training seeds (Qwen2.5-VL-7B, 176 tasks). MERIT outperforms joint training in all three runs.}
\label{tab:7b_threeseed}
\setlength{\tabcolsep}{8pt}
\scalebox{0.8}{
\begin{tabular}{ll cccccccc c}
\toprule
\multirow{2}{*}{Run} & \multirow{2}{*}{Method}
& \multicolumn{2}{c}{\textbf{General MCQA}}
& \multicolumn{2}{c}{\textbf{User Preference \& Fluency}}
& \multicolumn{2}{c}{\textbf{Text-Rich VQA}}
& \multicolumn{2}{c}{\textbf{Image Reasoning}}
& \multirow{2}{*}{\textbf{Avg.}} \\
\cmidrule(lr){3-4}\cmidrule(lr){5-6}\cmidrule(lr){7-8}\cmidrule(lr){9-10}
& & SeedBench & MMBench & LLaVA-W & MMVet & TextVQA & AI2D & MathVista & MMMU & \\
\midrule
\multirow{2}{*}{Run 1}
& Joint FFT          & 70.1 & 76.2 & 58.8 & 32.5 & 73.8 & 58.6 & 32.9 & 36.4 & 54.9 \\
& MERIT-2D            & 70.6 & 75.7 & 59.2 & 35.1 & 72.4 & 58.4 & 35.4 & 36.1 & \textbf{55.4} \\
\midrule
\multirow{2}{*}{Run 2}
& Joint FFT          & 70.3 & 75.9 & 60.9 & 34.0 & 74.1 & 58.5 & 31.8 & 36.2 & 55.2 \\
& MERIT-2D            & 70.8 & 75.5 & 63.0 & 34.1 & 72.0 & 58.6 & 31.9 & 36.2 & \textbf{55.3} \\
\midrule
\multirow{2}{*}{Run 3}
& Joint FFT          & 70.1 & 75.9 & 60.8 & 34.1 & 77.1 & 58.6 & 31.7 & 35.9 & 55.5 \\
& MERIT-2D            & 70.4 & 75.3 & 60.1 & 35.6 & 76.3 & 58.8 & 33.0 & 36.4 & \textbf{55.7} \\
\bottomrule
\end{tabular}
}
\end{table}

\subsection{Clustering Strategies for Dataset Partitioning}
\label{app:clustering}

\begin{table*}[ht]
\centering
\caption{
Ablation study on clustering strategies for dataset partitioning.
We compare our PCA-based MERIT (Ours) against K-means clustering with different numbers of clusters.
}
\label{tab:clustering_ablation}
\setlength{\tabcolsep}{2.5pt}
\scalebox{0.8}{
\begin{tabular}{l|ccccccccccc|c}
\toprule
Method
& MMBench & MME & SeedBench & MathVista & Hallusion & AI2D & MMVet
& MIABench & MMMU & TextVQA & DocVQA & Avg. \\
\midrule
\textbf{Ours (MERIT, 1D)}
& 80.0 & 1899.4 (82.6) & 71.0 & 36.5 & 54.4 & 62.1 & 35.0 & 34.4
& 41.4 & 72.4 & 50.0 & \textbf{56.3} \\
\textbf{Ours (MERIT, 2D)}
& 78.4 & 1910.3 (83.0) & 70.8 & 36.0 & 53.0 & 61.5 & 36.6
& 37.3 & 40.7 & 74.1 & 50.8 & \textbf{56.6} \\
\textbf{Ours (MERIT, 3D)}
& 80.1 & 1872.8 (81.4) & 70.5 & 35.4 & 54.2 & 62.5 & 37.7
& 39.2 & 42.7 & 75.2 & 50.4 & \textbf{57.2} \\
\midrule
K-means ($k=2$)
& 80.1 & 1834.5 (79.8) & 70.2 & 34.3 & 55.1 & 62.1 & 36.9
& 34.4 & 38.6 & 71.2 & 49.3 & 55.5 \\
K-means ($k=4$)
& 79.7 & 1853.4 (80.6) & 70.9 & 35.2 & 56.2 & 61.9 & 34.4 & 35.4
& 42.3 & 73.7 & 50.2 & 56.4 \\
K-means ($k=8$)
& 79.4 & 1833.1 (79.7) & 70.4 & 35.1 & 53.6 & 62.1 & 35.1 & 39.4
& 41.1 & 75.0 & 50.4 & 56.5 \\
\bottomrule
\end{tabular}
}
\end{table*}

Table~\ref{tab:clustering_ablation} compares PCA-based MERIT against K-means clustering ($k \in \{2,4,8\}$) on the same dataset-level gradient representations, with identical training and merging procedures. MERIT consistently outperforms K-means and scales monotonically with dimensionality, while K-means plateaus at a lower aggregate and exhibits non-monotonic per-benchmark patterns (e.g., MathVista, HallusionBench, MMVet). This confirms that aligning partitions with dominant conflict directions is more effective than generic distance-based clustering.

\paragraph{Random partitioning as a complementary baseline.}
Random partitioning followed by weight merging can already outperform joint training: under the flat-basin setting (Appendix~\ref{app:quadratic_anal}), any non-trivial dispersion among checkpoints yields a deterministic quadratic gain through variance cancellation, even when splits are random. This is consistent with Model Soups~\citep{wortsman2022model}, which empirically observe that averaging compatible checkpoints reduces sharpness. However, random partitioning does not control \emph{where} variance is introduced: the displacement covariance $\Sigma_{\delta} = \frac{1}{K}\sum_{i}(\delta_i - \bar{\delta})(\delta_i - \bar{\delta})^\top$ is generally unstructured and need not align with dominant curvature directions, limiting the Hessian-weighted gain $\mathcal{G}_{\mathrm{var}}$. MERIT's PCA-based splitting explicitly concentrates dispersion along high-curvature conflict axes, yielding systematically larger gains (Tables~\ref{tab:clustering_ablation} and main text Table~\ref{table:vlm_main_results_grouped}).

\subsection{Calibration Dataset Size Sensitivity}
\label{app:calibration_size}

We fix calibration size to 200 samples per dataset throughout the paper.
Table~\ref{tab:calibration_100tasks} reports calibration sensitivity across 100 Vision-FLAN tasks; we restrict this analysis to the 100 tasks that contain at least $1{,}000$ samples, since the reference gradient requires $n{=}1{,}000$.
At $n{=}200$, the mean cosine similarity with the full 1000-sample reference gradient is $0.847$ (std $0.106$), with diminishing returns beyond this point.

\begin{table}[ht!]
\centering
\caption{Gradient calibration sensitivity across 100 Vision-FLAN tasks.
\textbf{Left}: cosine similarity between the calibration-subset gradient
and the full 1000-sample reference, averaged over 100 tasks.
\textbf{Right}: the same values as a function of $n$, with $n{=}200$ (used) marked.}
\label{tab:calibration_100tasks}
\begin{minipage}[t]{0.46\linewidth}
\centering
\vspace{0pt}
\setlength{\tabcolsep}{4pt}
\scalebox{0.95}{
\begin{tabular}{rcc}
\toprule
$n$ & Mean cos & Std \\
\midrule
50  & 0.683 & 0.176 \\
100 & 0.762 & 0.147 \\
150 & 0.814 & 0.121 \\
\rowcolor{maroon!15} 200 & 0.847 & 0.106 \\
250 & 0.871 & 0.095 \\
300 & 0.891 & 0.087 \\
400 & 0.920 & 0.076 \\
500 & 0.940 & 0.070 \\
700 & 0.968 & 0.062 \\
1000 & 1.000 & 0.000 \\
\bottomrule
\end{tabular}
}
\end{minipage}\hfill
\begin{minipage}[t]{0.5\linewidth}
\centering
\vspace{0pt}
\includegraphics[width=0.82\linewidth]{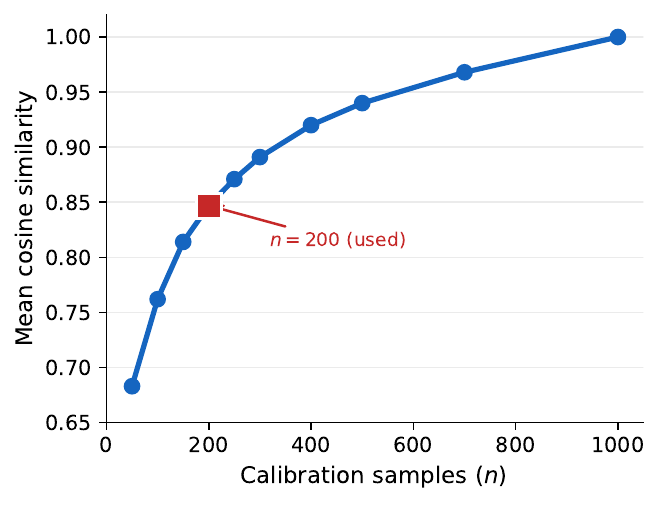}
\end{minipage}
\end{table}

\subsection{Post-Hoc Merging Baselines}
\label{app:merging_baselines}

This section provides per-seed results for the post-hoc merging comparison summarized in Section~\ref{sec:posthoc_merging}.
Each baseline is applied as a drop-in replacement for token-weighted averaging on MERIT's 7B 2D branches ($K{=}4$) over three independent seeds, using default hyperparameters from each method's official repository.

\begin{table}[ht!]
\centering
\caption{Per-seed results for post-hoc merging baselines on MERIT's 7B 2D branches. The Mean column matches Figure~\ref{fig:posthoc_main} in the main text.}
\label{tab:advanced_merge_appendix}
\setlength{\tabcolsep}{4pt}
\scalebox{0.9}{
\begin{tabular}{lcccc}
\toprule
Merging method & Seed 1 & Seed 2 & Seed 3 & Mean \\
\midrule
Token-Weighted Avg & \textbf{55.4} & \textbf{55.3} & \textbf{55.7} & \textbf{55.5} \\
TIES~\citep{yadav2023ties} & 48.8 & 50.0 & 52.4 & 50.4 \\
STAR~\citep{lee2025star} & 38.4 & 42.5 & 42.2 & 41.0 \\
TSV~\citep{gargiulo2025task} & 45.5 & 46.2 & 46.5 & 46.1 \\
Iso-CTS~\citep{marczak2025no} & 42.9 & 45.1 & 48.3 & 45.4 \\
\bottomrule
\end{tabular}
}
\end{table}

\section{Additional Qualitative Analysis}
\label{app:qualitative_examples}

\subsection{Short-Answer Collapse on LLaVA-Wild}
\label{sec:llava_wild_short_answer}

Joint training on the 3B setting shows a large LLaVA-Wild degradation ($53.2 \to 41.9$--$42.8$) due to short-answer collapse: the model produces overly terse responses under heterogeneous mixtures~\citep{dai2023instructblip,xu2024vision}.
The same effect appears at 7B scale, where Joint FFT on the Scaled base collapses LLaVA-Wild from $67.1$ to $50.2$, whereas MERIT preserves the base model's open-ended quality ($66.2$).
In both cases, MERIT mitigates the collapse by separating conflicting subsets before fine-tuning, yielding more stable open-ended behavior; representative examples are shown in Table~\ref{tab:qualitative_examples}.

\subsection{Qualitative Examples on Multimodal Reasoning}
\label{app:qualitative_detail}

\begin{table*}[t]
\centering
\caption{Additional qualitative examples from LLaVA-Wild.}
\label{tab:qualitative_examples}

\footnotesize
\setlength{\tabcolsep}{8pt}
\renewcommand{\arraystretch}{1.15}
\begin{adjustbox}{max width=0.97\linewidth}
\begin{tabular}{>{\centering\arraybackslash}m{0.28\textwidth} m{0.68\textwidth}}
\toprule
\textbf{Input (Image + Question)} & \textbf{Model Responses} \\
\midrule

\begin{minipage}[c]{\linewidth}
\centering
\vspace{2pt}
\includegraphics[width=0.80\textwidth]{figure/llava_wild_qualitative_22.pdf}
\vspace{6pt}

\raggedright
\textbf{Q:} What is the significance of this scene in the context of the movie?
\end{minipage}
&
\begin{minipage}[c]{\linewidth}
\raggedright
\vspace{10pt}

\textbf{Joint:} love

\vspace{10pt}
\textbf{MERIT (1D):} This scene is significant because it is the first time the couple is seen together.

\vspace{10pt}
\textbf{MERIT (2D):} The scene captures a romantic moment between the two characters as they look out at the ocean together, with the sunset adding emotional tone.

\vspace{10pt}
\textbf{MERIT (3D):} The scene depicts a key romantic moment between Jack and Rose, emphasizing their emotional connection and the beginning of their journey together, reinforced by the sunset setting.
\end{minipage}
\\

\addlinespace[0.8em]
\midrule
\addlinespace[0.8em]

\begin{minipage}[c]{\linewidth}
\centering
\vspace{2pt}
\includegraphics[width=0.80\textwidth]{figure/llava_wild_qualitative_38.pdf}
\vspace{6pt}

\raggedright
\textbf{Q:} Can you explain why this is funny. Think about it step-by-step.
\end{minipage}
&
\begin{minipage}[c]{\linewidth}
\raggedright
\vspace{4pt}
\textbf{Joint:} The joke is that the person is trying to convey that the VC is trying to get more money for the company, but the company is not stable enough to do so.

\vspace{6pt}
\textbf{MERIT (1D):} The image is funny because it is a cartoon.

\vspace{6pt}
\textbf{MERIT (2D):} The funny part of this is that the cartoon character is trying to explain something to the other character, but the other character is not listening.

\vspace{6pt}
\textbf{MERIT (3D):} The funny part of this is that the cartoon character is talking about a computer science concept, but he is using a very simple and basic example. The character is talking about stacking layers, but he is only stacking two layers. This is a very basic example of a computer science concept, and it is not very interesting or complex. The fact that the character is using such a simple example makes it funny because it is not what one would expect to see in a computer science context.
\end{minipage}
\\

\addlinespace[0.8em]
\midrule
\addlinespace[0.8em]

\begin{minipage}[c]{\linewidth}
\centering
\vspace{2pt}
\includegraphics[width=0.80\textwidth]{figure/llava_wild_qualitative_47.pdf}
\vspace{6pt}

\raggedright
\textbf{Q:} Describe this sketch in detail.
\vspace{2pt}

\end{minipage}
&
\begin{minipage}[c]{\linewidth}
\raggedright
\textbf{Joint:} This is a sketch of a website.

\vspace{8pt}
\textbf{MERIT (1D):} The sketch is a white board with a blue line and text. The text is "My Joke Website" and "Funny Joke" and "Push to reveal punchline".

\vspace{8pt}
\textbf{MERIT (2D):} A white board with a blue line and text.

\vspace{8pt}
\textbf{MERIT (3D):} A white board with a blue line and the words "My Joke Website" written on it.
\end{minipage}
\\

\bottomrule
\end{tabular}
\end{adjustbox}
\end{table*}

\noindent
The examples above show that joint-trained models tend to produce terse or generic responses, while MERIT models provide more informative and context-aware answers. Higher decomposition dimensionality often (though not uniformly) yields richer and more context-aware answers, with the largest gains on responses that require multi-step reasoning or named-entity recall (e.g., Examples~1 and~2).

\section{Detailed Efficiency Analysis}
\label{app:efficiency}

\begin{table}[ht]
\centering
\caption{Hardware and system configuration used for efficiency analysis.}
\label{tab:hardware_config}
\setlength{\tabcolsep}{4pt}
\scalebox{0.8}{
\begin{tabular}{l|cc}
\toprule
 & V100 System & A100 System \\
\midrule
\multicolumn{3}{l}{\textbf{CPU}} \\
\midrule
Model & Intel Xeon Gold 5120 @ 2.20GHz & Intel Xeon Gold 6338 @ 2.00GHz \\
Sockets & 2 & 2 \\
Cores & 28 (14 per socket) & 64 (32 per socket) \\
Threads & 56 & 128 \\
\midrule
\multicolumn{3}{l}{\textbf{GPU}} \\
\midrule
Model & Tesla V100-SXM2-32GB & NVIDIA A100-SXM4-80GB \\
Number of GPUs & 8 & 8 \\
Memory per GPU & 32GB & 80GB \\
Total GPU Memory & 256GB & 640GB \\
\bottomrule
\end{tabular}
}
\end{table}

\begin{table}[t]
\centering
\caption{
Wall-clock time breakdown of MERIT and joint training.
Experiments are conducted with a 3B model on the V100 system (136 datasets, 3D split, $K{=}8$) and a 7B model on the A100 system (176 datasets, 3D split, $K{=}8$, measured with sequential branch execution). The 7B robustness results in Table~\ref{tab:7b_threeseed} use the 2D split ($K{=}4$) configuration.
}
\label{tab:efficiency}
\setlength{\tabcolsep}{4pt}
\scalebox{0.8}{
\begin{tabular}{l|cc}
\toprule
Stage & V100 System & A100 System \\
\midrule
Basin preparation (pre-alignment) &
 4h 24m / 20h 13m (LLaVA recipe stage 1/2) &
39 h (Base VLM 7B) \\
\midrule
Gradient extraction & 
1h 38m (136 datasets) & 
1h 59m (176 datasets) \\
Cosine similarity matrix computation & 
28m & 
38m \\
PCA on cosine similarity matrix & 
0.7s (load) + 0.2s (compute) & 
0.6s (load) + 0.12s (compute) \\
\midrule
Joint training (1 ep) &
4h 22m &
43 h 18m \\
Joint training (2 ep) &
8h 40m &
-- \\
MERIT (CosSim PCA, 3D split) &
5h 24m &
43 h 39m \\
\bottomrule
\end{tabular}
}
\end{table}

Table~\ref{tab:efficiency} summarizes wall-clock times.
Preprocessing is dominated by gradient extraction (${\sim}1.6$h for 136 datasets on V100, ${\sim}2.0$h for 176 on A100); cosine similarity computation adds ${\sim}28$--38 min and PCA takes $<1$s.
Table~\ref{tab:sampling_similarity} confirms that uniform gradient sampling preserves cosine similarity structure with high fidelity ($\rho{>}0.98$).

The similarity matrix is reusable: adding $m$ new datasets to an existing $T$-dataset collection requires only $O(Tm)$ cross-similarity computations plus a negligible PCA update, avoiding full $O((T{+}m)^2)$ recomputation.

Counting branch training and merging (preprocessing reported separately above), MERIT-3D completes in 5h24m on V100, faster than 2-epoch joint training (8h40m) and only modestly above 1-epoch joint training (4h22m).
At 7B scale on A100, MERIT adds just 21 minutes ($0.8\%$) over 1-epoch joint training (43h39m vs.\ 43h18m).
This one-time, amortizable preprocessing cost removes the need for synchronous gradient communication, making the approach well suited to decentralized or bandwidth-constrained environments.

\begin{table}[t]
\centering
\caption{
Similarity between cosine similarity matrices computed using full gradients and uniformly sampled gradients (stride $s=5$).
}
\label{tab:sampling_similarity}
\setlength{\tabcolsep}{4pt}
\scalebox{0.9}{
\begin{tabular}{l c}
\toprule
Metric & Value \\
\midrule
Pearson correlation ($r$) & 0.9884 \\
Spearman rank correlation ($\rho$) & 0.9882 \\
Mean absolute difference & 0.0216 \\
Root mean squared error (RMSE) & 0.0355 \\
\bottomrule
\end{tabular}
}
\end{table}

\subsection{Comparison with Centralized Gradient Methods}
\label{app:pcgrad}

We provide a quantitative feasibility analysis for centralized per-step gradient conflict resolution methods at our experimental scale ($T{=}136$ tasks, 3B parameters).

\begin{table}[t!]
\centering
\caption{Computational comparison of conflict-resolution methods at $T{=}136$ tasks on a 3B model.}
\label{tab:pcgrad_appendix}
\setlength{\tabcolsep}{4pt}
\scalebox{0.9}{
\begin{tabular}{lccc}
\toprule
Method & When resolved & Per-step overhead & Communication \\
\midrule
PCGrad & Per-step & $O(T^2){\times}d$; ${\sim}816$\,GB & All-reduce + $T$ backpasses \\
GradNorm & Per-step & $O(T)$ forward & All-reduce + per-task loss \\
MERIT & One-time (${\sim}2$h) & Zero & Zero (one-shot merge) \\
\bottomrule
\end{tabular}
}
\end{table}

\textbf{PCGrad}~\citep{yu2020gradient} requires $T$ backward passes + $\binom{T}{2} = 9{,}180$ pairwise projections per step.
Storing 136 full gradients requires ${\sim}816$\,GB (fp16), exceeding both our V100${\times}8$ (256\,GB) and A100${\times}8$ (640\,GB) systems.
A conservative wall-clock estimate is ${>}17$ days per epoch.
\textbf{GradNorm}~\citep{chen2018gradnorm} requires all $T{=}136$ tasks to contribute individual loss terms at every step with centralized coordination.
Both methods were originally evaluated on 2--10 tasks on models orders of magnitude smaller, and to our knowledge neither has been applied to ${>}100$ tasks on billion-parameter models.

\section{Theoretical Analysis}
\label{app:theory_anal}

This appendix collects the formal assumptions, proofs, and extended analyses behind the three implications stated in Section~\ref{sec:theory}.

\subsection{Quadratic Analysis of Merging in Flat PCA-Structured Basins}
\label{app:quadratic_anal}

This section makes precise the variance-reduction and spectral-filtering implications. We analyze the effect of merging through a deterministic quadratic gain from variance reduction, controlled remainder and dataset-split mismatch terms, and an optimization and regularization benefit over joint training.

\paragraph{Setting and notation.}
Let $\theta \in \mathbb{R}^d$ be the model parameters and $L:\mathbb{R}^d \to \mathbb{R}$ the true population loss.
We consider $K \ge 2$ checkpoints $\theta_1,\dots,\theta_K$ obtained by fine-tuning a common pretrained initialization $\theta^{(0)}$ on different dataset splits via a two-phase procedure. In Phase~I, pretraining and instruction tuning yield the merge-ready $\theta^{(0)}$; in Phase~II, the $K$ branches are fine-tuned independently on their splits.

Because of the shared Phase~I training, all checkpoints lie in one loss basin, a property widely observed in pretrained models and captured by linear mode connectivity \citep{garipov2018loss, draxler2018essentially, qin2022exploring}.
We fix a reference point $\theta^\star$ inside this basin, a near-stationary point that serves as the center of the local analysis. For clarity we assume
\[
\nabla L(\theta^\star) = 0,
\]
noting that approximate stationarity $\|\nabla L(\theta^\star)\| \le \varepsilon$ only adds $O\!\big(\varepsilon \max_i \|\delta_i\|\big)$ terms that do not change the conclusions.

We write the deviations from $\theta^\star$ and the parameter average as
\[
\delta_i := \theta_i - \theta^\star, \quad i=1,\dots,K,
\qquad
\bar{\theta} := \frac{1}{K}\sum_{i=1}^K \theta_i,
\qquad
\bar{\delta} := \bar{\theta} - \theta^\star,
\]
and denote the Hessian of the true loss at $\theta^\star$ by $H := \nabla^2 L(\theta^\star)$.
We present the uniform-weight case ($w_i = 1/K$); the token-weighted case used in the algorithm is identical after replacing $\frac{1}{K}\sum_i$ with $\sum_i w_i$.

\subsubsection{Two-phase basin confinement and local quadraticity}
\label{toc:e11}
The analysis rests on a local quadratic approximation of the loss, which is naturally induced by pretrained initialization and two-phase training.

\paragraph{Assumption 1 (Flat basin and local quadratic model).}
There exists a convex neighborhood $\mathcal{B}$ of $\theta^\star$ such that:
\begin{enumerate}[label=(\alph*)]
  \item (\emph{Basin confinement})
  All checkpoints and their average lie in $\mathcal{B}$, i.e., $\theta_i \in \mathcal{B}$ for all $i$ and $\bar{\theta} \in \mathcal{B}$, as expected from a shared merge-ready initialization and confirmed empirically throughout training (Appendix~\ref{app:merge_diagnostics}).
  \item (\emph{Flatness})
  For all $\theta \in \mathcal{B}$, $0 \preceq \nabla^2 L(\theta) \preceq \lambda_{\max} I_d$ for some $\lambda_{\max} > 0$, reflecting the flattening effect of pretraining on downstream landscapes \citep{neyshabur2020being}.
  \item (\emph{Controlled remainder})
  The loss admits a second-order Taylor expansion around $\theta^\star$,
  \[
  L(\theta)
  =
  L(\theta^\star)
  + \tfrac{1}{2}(\theta - \theta^\star)^\top H (\theta - \theta^\star)
  + R(\theta),
  \]
  with $|R(\theta)| \le \tfrac{\rho}{6}\|\theta-\theta^\star\|^3$ for some third-order smoothness constant $\rho > 0$ \citep{nesterov2006cubic, jin2017escape, antonakopoulos2022adagrad}.
\end{enumerate}

We write the quadratic surrogate of the true loss as
\[
L_Q(\theta)
:=
L(\theta^\star)
+
\tfrac{1}{2}(\theta - \theta^\star)^\top H(\theta - \theta^\star).
\]

\subsubsection*{Step 1: Quadratic averaging yields a deterministic gain}

Under the quadratic model, averaging never hurts, and the benefit is exactly the spread of the checkpoints measured in the curvature ($H$) geometry.

\begin{theorem}[Quadratic averaging bound]
\label{thm:quadratic-averaging}
Under Assumption~1 and $H \succeq 0$,
\[
L_Q(\bar{\theta})
\;\le\;
\frac{1}{K}\sum_{i=1}^K L_Q(\theta_i),
\]
with equality iff all displacements $\delta_i - \delta_j$ lie in $\ker(H)$
(in particular, equality iff $\theta_1 = \cdots = \theta_K$ when $H \succ 0$).
The improvement is strict whenever some pair $(i,j)$ has $(\delta_i-\delta_j)$ with a non-zero projection onto $\mathrm{range}(H)$ (the span of eigenvectors with positive eigenvalues), and admits the explicit form
\[
\frac{1}{K}\sum_{i=1}^K L_Q(\theta_i) - L_Q(\bar{\theta})
=
\underbrace{\frac{1}{4K^2}
\sum_{i=1}^K\sum_{j=1}^K
(\delta_i - \delta_j)^\top H (\delta_i - \delta_j)}_{=: \mathcal{G}_{\mathrm{var}} \ge 0}.
\]
\end{theorem}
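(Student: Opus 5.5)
The plan is to compute the gap directly from the quadratic surrogate. The constant $L(\theta^\star)$ cancels, so it suffices to study $q(\delta) := \tfrac12 \delta^\top H \delta$. The target identity is
\[
\frac1K\sum_i q(\delta_i) - q(\bar\delta) = \frac12\Bigl(\frac1K\sum_i \delta_i^\top H\delta_i - \bar\delta^\top H\bar\delta\Bigr),
\]
where $\bar\delta = \frac1K\sum_i \delta_i$ holds by linearity of the average.

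The first step is to rewrite this as a centered second moment. Writing $\delta_i = \bar\delta + e_i$ with $\sum_i e_i = 0$, the cross terms $2\bar\delta^\top H e_i$ sum to zero. The gap then equals $\frac{1}{2K}\sum_i e_i^\top H e_i$, which is nonnegative because $H\succeq 0$.

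The second step is to convert this to the pairwise form via the standard identity
\[
\frac{1}{K^2}\sum_{i,j}(\delta_i-\delta_j)^\top H(\delta_i-\delta_j) = \frac{2}{K}\sum_i \delta_i^\top H\delta_i - 2\,\bar\delta^\top H\bar\delta .
\]
To verify it, expand the quadratic form and use symmetry of $H$ to combine the two cross terms into $-2\sum_{i,j}\delta_i^\top H\delta_j = -2K^2\bar\delta^\top H\bar\delta$. Multiplying by $\tfrac14$ gives exactly $\mathcal{G}_{\mathrm{var}}$ as stated. Diagonalizing $H=\sum_\ell \lambda_\ell u_\ell u_\ell^\top$ recovers the eigen-form of Eq.~\eqref{eq:gvar}. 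The token-weighted version follows from the same computation with $\frac1K\sum_i$ replaced by $\sum_i w_i$.

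The equality characterization is the only point needing care. Since every summand $(\delta_i-\delta_j)^\top H(\delta_i-\delta_j)$ is nonnegative, the gain vanishes iff each summand vanishes. For PSD $H$, $v^\top H v = 0$ iff $\|H^{1/2}v\|^2=0$, i.e.\ iff $H^{1/2}v=0$, which holds iff $Hv=0$, i.e.\ $v\in\ker(H)$. So equality holds iff all $\delta_i-\delta_j\in\ker H$. Because $\ker H$ is the orthogonal complement of $\mathrm{range}(H)$ for symmetric $H$, this is equivalent to the strictness criterion: some difference has nonzero projection onto $\mathrm{range}(H)$. When $H\succ0$, $\ker H=\{0\}$, giving $\theta_1=\cdots=\theta_K$.

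Assumption~1 enters only through $H\succeq0$ and the surrogate $L_Q$; the remainder $R$ and basin confinement are not needed here. I expect no real obstacle. The main risk is bookkeeping of the constants ($\tfrac14$ versus $\tfrac12$) when moving between the centered and pairwise forms, so I will check that step explicitly.
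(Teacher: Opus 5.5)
Your proof is correct and follows the paper's approach. The paper gives no separately written proof; it presents $\mathcal{G}_{\mathrm{var}}$ as the mean-plus-variance (parallel-axis) identity applied in the $H$-seminorm, which is exactly your centering step $\delta_i = \bar\delta + e_i$ followed by the pairwise rewriting, and your $H^{1/2}$ argument for the equality case ($v^\top H v = 0 \iff Hv = 0$) fills in the one detail the paper leaves implicit.
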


The term $\mathcal{G}_{\mathrm{var}}$ is a Hessian-weighted variance: merging performs an explicit variance reduction, and the saved loss is larger along sharper (higher-curvature) directions \citep{izmailov2018averaging}. This is why \emph{where} the checkpoints disagree matters more than \emph{how much}.

\subsubsection*{Step 2: Accounting for dataset-split mismatch}

The gain above is stated for the empirical loss seen during training. To extend it to the true objective we write $L(\theta) = \bar L(\theta) + \varepsilon(\theta)$, where $\bar L$ is the loss averaged over the dataset splits and $\varepsilon$ is the population-level mismatch.
A naive Lipschitz bound on the mismatch scales \emph{linearly} ($O(\max_i \|\delta_i-\bar{\delta}\|)$) and could in principle overwhelm the \emph{quadratic} merging gain as checkpoints converge. We avoid this by treating the mismatch as smooth, which makes its penalty quadratic as well.

\paragraph{Assumption 2 (Second-order smoothness of mismatch).}
The mismatch $\varepsilon(\theta)$ is twice continuously differentiable in $\mathcal{B}$; write $H_\varepsilon := \nabla^2 \varepsilon(\bar{\theta})$.

With $\Delta_{\mathrm{split}} := \varepsilon(\bar\theta) - \frac{1}{K}\sum_{i=1}^K \varepsilon(\theta_i)$, a second-order expansion around $\bar{\theta}$ (using $\sum_i (\theta_i - \bar{\theta}) = 0$) gives
\begin{align*}
\frac{1}{K}\sum_{i=1}^K \varepsilon(\theta_i)
&\approx \varepsilon(\bar{\theta}) + \underbrace{\nabla\varepsilon(\bar{\theta})^\top \Big( \tfrac{1}{K}\sum_i (\theta_i - \bar{\theta}) \Big)}_{=0} + \frac{1}{2K}\sum_{i=1}^K (\theta_i - \bar{\theta})^\top H_\varepsilon (\theta_i - \bar{\theta}),
\end{align*}
so the penalty is bounded quadratically,
\[
|\Delta_{\mathrm{split}}|
\;\lesssim\;
\frac{1}{2}\lambda_{\max}(H_\varepsilon)\cdot \frac{1}{K}\sum_{i=1}^K \|\delta_i-\bar{\delta}\|^2.
\]
Hence both the gain $\mathcal{G}_{\mathrm{var}}$ and the penalty $\Delta_{\mathrm{split}}$ are $O\!\big(\max_i\|\delta_i-\bar{\delta}\|^2\big)$, and the net effect of merging is decided by the relative spectra of $H$ and $H_\varepsilon$.
Our PCA-structured split (Step~4) deliberately places dispersion in the curvature-bearing subspace of $H$, whereas the mismatch term is residual split-specific disagreement that need not align with those directions, so in this regime the gain dominates the penalty.

\subsubsection*{Step 3: Extension to the true population loss}

\begin{theorem}[Merging in a flat basin (informal)]
\label{thm:true-loss}
Under Assumptions~1--2 and the second-order mismatch approximation of Step~2, the merged model satisfies
\[
L(\bar{\theta})
\;\le\;
\underbrace{\frac{1}{K}\sum_{i=1}^K L(\theta_i)}_{\text{Expected Individual Loss}}
\;-\;
\underbrace{\mathcal{G}_{\mathrm{var}}}_{\text{Quadratic Gain}}
\;+\;
|\Delta_{\mathrm{HO}}|
\;+\;
|\Delta_{\mathrm{split}}|,
\qquad
|\Delta_{\mathrm{HO}}|
\le \frac{\rho}{6}\!\left( \|\bar{\delta}\|^3 + \frac{1}{K}\sum_{i=1}^K \|\delta_i\|^3 \right).
\]
The gain $\mathcal{G}_{\mathrm{var}}$ is \emph{exact} (Theorem~\ref{thm:quadratic-averaging}); the two remainders are the cubic Taylor residual $|\Delta_{\mathrm{HO}}|$ and the second-order (hence approximate) split-mismatch penalty $|\Delta_{\mathrm{split}}|$.
\end{theorem}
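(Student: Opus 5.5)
We prove Theorem~\ref{thm:true-loss} by splitting the true loss into its quadratic surrogate plus the cubic remainder, and then applying Theorem~\ref{thm:quadratic-averaging} to the surrogate. The only new work is bookkeeping two error terms, one for the Taylor remainder and one for the split mismatch.

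\emph{Quadratic part and cubic remainder.} By Assumption~1(c), for every $\theta \in \mathcal{B}$ we have $L(\theta) = L_Q(\theta) + R(\theta)$ with $|R(\theta)| \le \tfrac{\rho}{6}\|\theta-\theta^\star\|^3$. Basin confinement (Assumption~1(a)) puts every $\theta_i$ and $\bar\theta$ in $\mathcal{B}$, so this identity holds at all $K+1$ points. Evaluating at $\bar\theta$ and averaging over the $\theta_i$ gives
\[
L(\bar\theta) - \frac{1}{K}\sum_i L(\theta_i) = \Big(L_Q(\bar\theta) - \frac{1}{K}\sum_i L_Q(\theta_i)\Big) + \Delta_{\mathrm{HO}},
\qquad
\Delta_{\mathrm{HO}} := R(\bar\theta) - \frac{1}{K}\sum_i R(\theta_i).
\]
Theorem~\ref{thm:quadratic-averaging} identifies the bracket exactly as $-\mathcal{G}_{\mathrm{var}}$. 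The triangle inequality gives $|\Delta_{\mathrm{HO}}| \le \tfrac{\rho}{6}\big(\|\bar\delta\|^3 + \tfrac1K\sum_i\|\delta_i\|^3\big)$, which is the stated bound. Strictly, this step already yields the inequality with only $|\Delta_{\mathrm{HO}}|$ on the right, because $L$ is the loss to which the Taylor expansion is applied.

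\emph{Split mismatch.} The $\Delta_{\mathrm{split}}$ term enters through the reading in Step~2. There the quadratic gain is realized on the split-averaged loss $\bar L$, whose Hessian at $\theta^\star$ plays the role of $H$, and $L = \bar L + \varepsilon$. So we run the argument above on $\bar L$ rather than on $L$. That yields $\bar L(\bar\theta) \le \tfrac1K\sum_i \bar L(\theta_i) - \mathcal{G}_{\mathrm{var}} + |\Delta_{\mathrm{HO}}|$, where the cubic remainder is taken for $\bar L$ and has the same form. Adding $\varepsilon(\bar\theta) = \tfrac1K\sum_i\varepsilon(\theta_i) + \Delta_{\mathrm{split}}$ to both sides converts $\bar L$ back into $L$. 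It leaves an extra $\Delta_{\mathrm{split}} \le |\Delta_{\mathrm{split}}|$ on the right, which is exactly the claimed bound. Finally, we cite the second-order expansion of $\varepsilon$ about $\bar\theta$ from Step~2: because $\sum_i(\theta_i-\bar\theta)=0$, the linear term cancels and $|\Delta_{\mathrm{split}}|$ is quadratic in the dispersion.

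\emph{Main obstacle.} The difficulty is the mismatch term, not the algebra. Its bound relies on a second-order approximation, namely that the third-order residual of $\varepsilon$ is dropped. It also relies on the Taylor center of the quadratic model being a genuine stationary point of $\bar L$ rather than of $L$. Both points must be stated as assumptions, which is why the theorem is labelled informal. To control the dropped residual, I would first try a cubic bound for $\varepsilon$ on $\mathcal{B}$, which follows from $C^3$ smoothness on the convex set. That bound can be absorbed into $\Delta_{\mathrm{HO}}$ by enlarging $\rho$. If only approximate stationarity holds, I would instead add the $O(\varepsilon\max_i\|\delta_i\|)$ correction noted earlier in the appendix.
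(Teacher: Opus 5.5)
Your proposal is correct and is essentially the paper's own argument: write the loss as the quadratic surrogate plus the cubic Taylor remainder, take the exact gain $\mathcal{G}_{\mathrm{var}}$ from Theorem~\ref{thm:quadratic-averaging}, bound $\Delta_{\mathrm{HO}}$ by the triangle inequality, and pick up $\Delta_{\mathrm{split}}$ from the decomposition $L=\bar L+\varepsilon$ using the Step~2 expansion about $\bar\theta$. You are also right that under a literal reading of Assumption~1 the bound already holds without the nonnegative $|\Delta_{\mathrm{split}}|$ term, and stationarity of the Taylor center is not a real obstacle, since a linear term $g^\top(\theta-\theta^\star)$ is affine and cancels exactly in $\frac{1}{K}\sum_i L(\theta_i)-L(\bar\theta)$, so no approximate-stationarity correction is needed.
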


Merging therefore helps whenever the true-loss curvature along the displacement direction outweighs the mismatch curvature. For small displacements confined to the basin the cubic term is negligible, and since $\mathcal{G}_{\mathrm{var}}$ is maximized by our PCA alignment while $|\Delta_{\mathrm{split}}|$ stays bounded by the smaller curvature of task disagreement, the quadratic gain offsets both remainders.

\begin{figure}[t]
    \centering
    \includegraphics[width=0.40\linewidth]{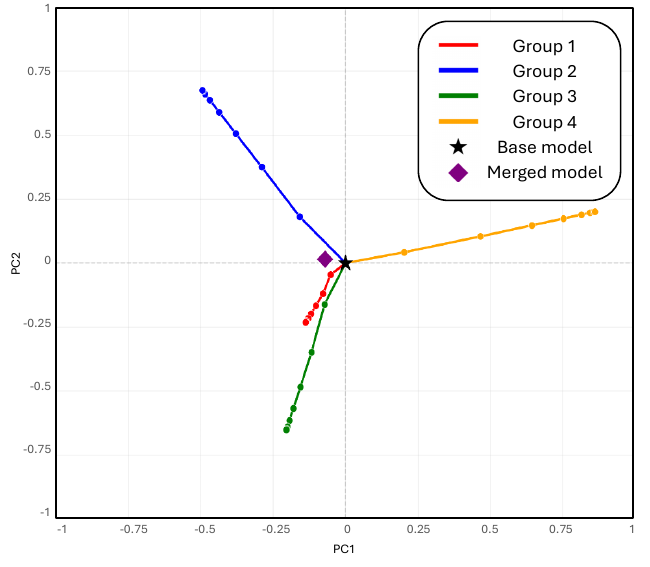}
    \caption{PCA-based visualization of dataset groups in gradient space. The split groups extend along distinct, often conflicting directions from the shared initialization, while the merged model lies near the center, reflecting aggregation of complementary updates.}
    \label{fig:pca_group}
\end{figure}

\subsubsection*{Step 4: PCA-structured partitioning}

We now link the algorithmic choice of cosine-similarity PCA to the amplification of $\mathcal{G}_{\mathrm{var}}$.

\paragraph{Assumption 3 (PCA-aligned subspace).}
Let $U\in\mathbb{R}^{d\times r}$ be an orthonormal basis of the dominant curvature directions, so $H \approx U\Lambda U^\top + H_\perp$ with $\Lambda \succeq 0$ \citep{marczak2025no}.
Decompose $\delta_i = U a_i + b_i$, and assume the split distributes checkpoints so that the mean projection onto the principal subspace is negligible, $\|\bar a\| \ll \sqrt{\frac{1}{K}\sum_i \|a_i\|^2}$.

This assumption is justified by gradient accumulation. Under a first-order approximation of fine-tuning with a constant learning rate (here $S$ denotes the number of SGD steps), %
\begin{equation}
  \delta_i = -\eta \sum_{t=1}^{S} g_{i,t} + r_i,
  \label{eq:sgd_displacement}
\end{equation}
where $r_i$ collects higher-order effects (curvature variation, stochastic noise). Hence, up to second order, $\mathrm{Cov}(\delta_i) \approx \eta^2 \,\mathrm{Cov}\!\big(\sum_t g_{i,t}\big)$.
Our algorithm runs PCA on the cosine-similarity matrix $C$ (Appendix~\ref{appendix_pca_connect}) and splits along its principal axes, which enlarges the dispersion of $\delta_i$ along the dominant curvature directions $U$ and thus enlarges $\mathcal{G}_{\mathrm{var}}$ relative to a random balanced split.
We do not require the gradient-defined subspace to equal the top eigenspace of $H$; it suffices that the two are strongly coupled. Appendix~\ref{app:toy_proof} makes this exact in a tractable linear-quadratic model where the two coincide: there, PCA-aligned splitting maximizes $\mathcal{G}_{\mathrm{var}}$ among all balanced partitions in the $T{=}4$, $d{=}2$ instance, and for general $T$ it provably dominates random partitioning in expectation under a spectral-concentration condition.

\begin{proposition}[Effect of PCA-structured splitting]
\label{prop:pca}
Under Assumption~3, PCA-structured splitting aligns the displacements $\delta_i$ primarily with the high-curvature subspace $U$, so $\mathcal{G}_{\mathrm{var}}$ concentrates along the largest-eigenvalue directions $\Lambda$ and is maximized.
In the ideal symmetric case ($\sum_i a_i = 0$), the high-curvature loss component is fully cancelled in the merged model, leaving only the residual loss on the flat subspace $H_\perp$.
\end{proposition}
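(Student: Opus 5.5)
We prove Proposition~\ref{prop:pca} by working entirely inside the quadratic surrogate $L_Q$ and substituting the decomposition $\delta_i = U a_i + b_i$ of Assumption~3. The plan has three stages. First we obtain the exact gain of Theorem~\ref{thm:quadratic-averaging} in closed form. Next we split the Hessian as $H = U\Lambda U^\top + H_\perp$ and read off each subspace's contribution. Finally we specialize to the symmetric case $\sum_i a_i = 0$.

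\textbf{Step 1: expand the merged loss and the gain.} We will assume, as part of the idealization in Assumption~3, that $H_\perp U = 0$, i.e.\ $H_\perp$ acts on the orthogonal complement of $\mathrm{range}(U)$. We will also take $b_i \perp \mathrm{range}(U)$, which we may do without loss of generality by defining $a_i := U^\top\delta_i$. Then
\[
\bar\delta^\top H \bar\delta = \bar a^\top \Lambda \bar a + \bar b^\top H_\perp \bar b .
\]
The polarization identity behind Theorem~\ref{thm:quadratic-averaging} gives
\[
\mathcal{G}_{\mathrm{var}} = \tfrac12\cdot\tfrac1K\sum_i (a_i-\bar a)^\top\Lambda(a_i-\bar a) + \tfrac12\cdot\tfrac1K\sum_i (b_i-\bar b)^\top H_\perp (b_i-\bar b).
\]
This decomposes the gain into a high-curvature part and a flat part. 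Since $\Lambda$ carries the largest eigenvalues and $\|H_\perp\|\le\lambda_{\perp,\max}$, the first part dominates whenever dispersion is placed in $a$.

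\textbf{Step 2: compare PCA splitting with other balanced splits.} We fix the total dispersion budget $\frac1K\sum_i\|\delta_i-\bar\delta\|^2$. The expression from Step~1 is then maximized by putting all dispersion in $\mathrm{range}(U)$, and within it along the top eigenvector of $\Lambda$. This is a Rayleigh-quotient bound: $x^\top H x \le \lambda_1\|x\|^2$, with equality only along $u_1$.

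To connect this to the algorithm we argue as follows. By \eqref{eq:sgd_displacement}, $\mathrm{Cov}(\delta_i)\approx\eta^2\mathrm{Cov}(\sum_t g_{i,t})$. A split along the principal axes of the gradient-conflict matrix therefore maximizes the between-group variance of the $a_i$. It does so among sample-balanced partitions, using the median-split property of PCA projections. Here we will invoke Assumption~3's coupling between gradient principal axes and $U$, and defer the exact optimality claim to Propositions~\ref{prop:toy_pca_optimal} and~\ref{prop:pca_dominates_random_general}.

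\textbf{Step 3: symmetric case.} If $\sum_i a_i=0$, then $\bar a=0$, so $U^\top\bar\delta=0$. It follows that $L_Q(\bar\theta)-L(\theta^\star)=\tfrac12\bar b^\top H_\perp\bar b$, which is the claimed full cancellation of the high-curvature component. Under the token-weighted merge, the condition $\sum_i w_i a_i=0$ plays the same role.

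\textbf{Main obstacle.} The genuinely non-rigorous link is Step~2's claim that \emph{gradient-PCA} axes coincide with the curvature subspace $U$. For the general statement I would keep it as an explicitly stated consequence of Assumption~3, rather than try to prove it. The formal justification would come from the linear-quadratic model, where $g_t=-H\Delta_t$ makes the gradient covariance proportional to $H\,\mathrm{Cov}(\Delta)\,H$. Its top eigenvectors align with those of $H$ when $\mathrm{Cov}(\Delta)$ is isotropic or commutes with $H$, and in that case the $\lambda^3$ scaling also follows.
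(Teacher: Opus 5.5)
Your proposal is correct at the level of rigor the paper itself provides, and it follows essentially the same route. The paper supports this proposition only through its Step~4 discussion, which combines the Hessian-weighted variance form of $\mathcal{G}_{\mathrm{var}}$ from Theorem~\ref{thm:quadratic-averaging}, the gradient-accumulation coupling \eqref{eq:sgd_displacement} behind Assumption~3, a deferral of exact optimality to the linear-quadratic model of Appendix~\ref{app:toy_proof}, and the observation $\bar a = 0 \Rightarrow U^\top\bar\delta = 0$. Your explicit block decomposition of $\mathcal{G}_{\mathrm{var}}$ under $H_\perp U = 0$ and $b_i \perp \mathrm{range}(U)$ is correct and makes precise what the paper leaves implicit.

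There are two overstatements to fix. In Step~2, the median-split property only maximizes group-mean separation along a single fixed axis; a recursive $r \ge 2$ split need not maximize the full between-group scatter of the $a_i$. Also, Proposition~\ref{prop:pca_dominates_random_general} gives dominance over random splitting in expectation, not optimality. So you should claim, as the paper does, only that PCA splitting enlarges $\mathcal{G}_{\mathrm{var}}$ relative to random balanced splits. In your closing remark, $\mathrm{Cov}(\Delta)$ commuting with $H$ only makes the gradient principal axes eigenvectors of $H$; picking out the top-curvature one additionally needs an ordering condition like \eqref{eq:pca_condition}.
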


\subsubsection*{Step 5: Optimization advantage over joint training (conceptual)}
\label{app:sgd_robustness}

Beyond a lower loss, merging also eases the \emph{optimization difficulty} of joint training. We give this as an interpretation of the deterministic averaging step rather than a formal guarantee; the SGD experiment in Step~6 provides the matching empirical support.

Joint training is slow for the following reason. Near $\theta^\star$, descent on the full mixture evolves the error $e_t := \theta_t - \theta^\star$ as $e_{t+1} \approx (I - \eta H)\,e_t$, so stability along the sharpest direction caps the learning rate at $\eta < 2/\lambda_{\max}$. When conflicting datasets keep injecting gradient fluctuations along these sharp directions, joint training must keep $\eta$ small to avoid oscillation, which in turn slows progress along the flat directions ($\lambda_{\min} \ll \lambda_{\max}$). %

Merging sidesteps this. PCA-structured splitting followed by averaging makes the merged displacement nearly orthogonal to the sharp subspace, $U^\top \bar{\delta} \approx 0$ (Steps~1--4), so high-curvature error is suppressed in one shot instead of being slowly contracted. Conceptually, the remaining optimization is governed by an effective condition number
\[
\kappa_{\mathrm{eff}}
\;\approx\;
\frac{\lambda_{\perp,\max}}{\lambda_{\min}}
\;\ll\;
\kappa = \frac{\lambda_{\max}}{\lambda_{\min}},
\]
where $\lambda_{\perp,\max}$ is the largest eigenvalue of $H_\perp$. Merging thus trades the slow, stability-limited descent of joint training for an immediate cancellation of the stiffest conflicts.

\subsubsection*{Step 6: Empirical verification under SGD}
The displacement model~\eqref{eq:sgd_displacement} matches SGD exactly, where the update is proportional to accumulated gradients. Adaptive optimizers such as Adam rescale per coordinate via $\hat{m}_t/(\sqrt{\hat{v}_t}+\epsilon)$ and distort this covariance, so the theory of Steps~1--5 is tightest under SGD. We therefore use an SGD learning-rate sweep (Table~\ref{tab:sgd_lr_avg}) as a controlled \emph{stress-test of the mechanism}; our main results use AdamW, so this study isolates the variance-reduction effect rather than reproducing those numbers. Two qualitative predictions follow from Steps~4--5 and Proposition~\ref{prop:pca}.

First, merging should help in the practical regime. For $\eta \in [8{\times}10^{-5},\, 4{\times}10^{-4}]$, displacements are large enough that PCA-structured splitting separates datasets along directions of maximal gradient divergence, generating non-trivial inter-group variance that averaging then cancels along high-curvature directions. Table~\ref{tab:sgd_lr_avg} confirms this for 1D and 2D splits.

Second, finer splits should degrade at small $\eta$. As $\eta \to 0$, $\mathcal{G}_{\mathrm{var}} \propto \eta^2 \to 0$ for any partition; in addition, 3D splits give each branch a more homogeneous subset, shrinking $\|\bar{g}_1-\bar{g}_2\|$ independently of $\eta$. Together these push $\mathcal{G}_{\mathrm{var}} \propto \eta^2\,\|\bar{g}_1-\bar{g}_2\|_H^2$ to zero faster than for 1D/2D. Table~\ref{tab:sgd_lr_avg} shows MERIT~(3D) collapsing at $\eta \le 4{\times}10^{-5}$, as predicted.

\begin{table}[t]
\centering
\caption{
  Average performance under an SGD learning-rate sweep.
  MERIT (1D) stays stable across all $\eta$ and MERIT (1D/2D)
  for $\eta \ge 8{\times}10^{-5}$, whereas insufficient-epoch
  joint training collapses at small $\eta$; MERIT (3D) also
  degrades there, faster than 1D/2D, due to vanishing
  displacement variance (Appendix~\ref{app:quadratic_anal}, Step~6).
  Bold denotes the best result per column.
}
\label{tab:sgd_lr_avg}
\setlength{\tabcolsep}{4pt}
\scalebox{0.8}{
\begin{tabular}{lccccc}
\toprule
\multirow{2}{*}{Method}
  & \multicolumn{5}{c}{Learning rate $\eta$} \\
\cmidrule(lr){2-6}
  & $3\!\times\!10^{-5}$
  & $4\!\times\!10^{-5}$
  & $8\!\times\!10^{-5}$
  & $2\!\times\!10^{-4}$
  & $4\!\times\!10^{-4}$ \\
\midrule
Joint (1 ep) & 17.7 & 30.8 & 51.7 & \textbf{53.4} & 53.3 \\
Joint (2 ep) & \textbf{48.5} & \textbf{51.7} & \textbf{53.1} & 53.1 & 53.3 \\
\midrule
MERIT (1D)   & 47.6 & 51.3 & 52.9 & 53.1 & 53.7 \\
MERIT (2D)   & 18.2 & 32.3 & 51.5 & 53.2 & \textbf{53.8} \\
MERIT (3D)   & 17.5 & 17.3 & 34.0 & 52.6 & 53.7 \\
\bottomrule
\end{tabular}
}
\end{table}

These results match the two predictions above: PCA-aligned splitting concentrates inter-group variance along high-curvature directions (Proposition~\ref{prop:pca}), and the conditioning benefit of merging (Step~5) keeps MERIT (1D) stable across all $\eta$ and MERIT (1D/2D) stable in the practical regime ($\eta \ge 8 \times 10^{-5}$), even where insufficient-epoch joint training collapses.

\subsection{Theoretical Connection Between Gradient PCA and Cosine-Similarity PCA}
\label{appendix_pca_connect}
This section establishes when PCA on cosine-similarity conflict matrices recovers the same dominant interaction structure as PCA on raw gradients, justifying cosine-similarity PCA as a scale-invariant proxy for the gradient-space analysis above.

\subsubsection{Problem Setup and Notation}
\label{toc:e21}
Let $n$ be the number of datasets and $d$ the parameter dimension. For dataset $i$, let $g_i \in \mathbb{R}^d$ be the population gradient, collected as $G := [g_1, \dots, g_n]^\top \in \mathbb{R}^{n \times d}$. We compare two constructions:
\begin{itemize}
    \item \textbf{Raw gradient PCA}, equivalent to the eigendecomposition of the Gram matrix \cite{scholkopf1998nonlinear} $K := G G^\top \in \mathbb{R}^{n \times n}$;
    \item \textbf{Cosine-similarity PCA}, the eigendecomposition of $C \in \mathbb{R}^{n \times n}$ with $C_{ij} := g_i^\top g_j / (\|g_i\|\|g_j\|)$.
\end{itemize}
Our goal is to characterize when PCA on $C$ recovers the same dominant subspace as PCA on $G$.

\subsubsection{Cosine Similarity as a Normalized Gram Matrix}
\label{toc:e22}
\paragraph{Lemma 1 (Normalization).}
With $D := \operatorname{diag}(\|g_1\|, \dots, \|g_n\|)$, the cosine-similarity matrix satisfies
\begin{equation}
C = D^{-1} K D^{-1}.
\end{equation}

\paragraph{Proof.}
Immediate from $C_{ij} = K_{ij}/\sqrt{K_{ii}K_{jj}}$.
\hfill$\blacksquare$

Cosine similarity is thus a two-sided rescaling of the Gram matrix: it strips out per-dataset gradient magnitude while keeping directional alignment.

\subsubsection{Subspace Equivalence Under Gradient Norm Concentration}
\label{toc:e23}
Equivalence of the leading eigenspaces of $K$ and $C$ requires the gradient magnitudes to be roughly balanced, a mild condition in large models, where dataset-level gradients are averaged over many samples.

\paragraph{Assumption 4 (Gradient norm concentration).}
There exists $\epsilon \ll 1$ with $\|g_i\| = \bar{\alpha}(1 + \delta_i)$, $|\delta_i| \le \epsilon$, where $\bar{\alpha}>0$ is a reference scale (e.g., the mean norm).

\paragraph{Empirical justification.}
Across 136 Vision-FLAN datasets (Table~\ref{tab:gradient_norm_stats}), raw norms vary, but removing a few extreme datasets sharply improves concentration: the coefficient of variation drops from 0.39 (raw) to 0.23 (5\% trimmed) and 0.17 (10\% trimmed), with the trimmed max-to-min ratio near $2$--$3$. This supports the small-perturbation regime of Theorem~\ref{theorem_pca}.

\begin{table}[t]
\centering
\caption{
Statistics of dataset-level gradient norms across 136 Vision-FLAN datasets.
We report the mean ($\mu$), standard deviation ($\sigma$), coefficient of variation ($\sigma/\mu$),
and max-to-min ratio under raw and trimmed settings to assess gradient norm concentration.
}
\label{tab:gradient_norm_stats}
\setlength{\tabcolsep}{4pt}
\scalebox{0.85}{
\begin{tabular}{lccc}
\toprule
\textbf{Metric} & \textbf{Raw} & \textbf{5\% Trimmed} & \textbf{10\% Trimmed} \\
\midrule
Number of datasets & 136 & 124 & 110 \\
Mean gradient norm ($\mu$) & 23.87 & 22.90 & 22.67 \\
Standard deviation ($\sigma$) & 9.27 & 5.27 & 3.88 \\
Coefficient of variation ($\sigma / \mu$) & 0.39 & 0.23 & \textbf{0.17} \\
Max / Min ratio & 7.12 & 3.12 & 2.05 \\
\bottomrule
\end{tabular}
}
\end{table}

\begin{theorem}[Leading eigenspace equivalence]
\label{theorem_pca}
Let $V_r(K)$ and $V_r(C)$ be the spans of the top-$r$ eigenvectors of $K$ and $C$. Under Assumption~4, if $K$ has a spectral gap $\lambda_r - \lambda_{r+1} \ge \gamma > 0$, then
\begin{equation}
\operatorname{dist}\big(V_r(K), V_r(C)\big)
= O\!\left( \frac{\epsilon \|K\|_2}{\gamma} \right),
\end{equation}
where $\operatorname{dist}(\cdot,\cdot)$ is the canonical subspace distance via principal angles.
\end{theorem}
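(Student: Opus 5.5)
The plan is a perturbation argument: show that $C$ is a small perturbation of a scalar multiple of $K$, then apply the Davis--Kahan $\sin\Theta$ theorem. By Lemma~1, $C = D^{-1} K D^{-1}$. Under Assumption~4 we write $D = \bar{\alpha}(I + E)$ with $E := \operatorname{diag}(\delta_1,\dots,\delta_n)$ and $\|E\|_2 \le \epsilon$. Rescaling a matrix by a positive scalar leaves its eigenvectors unchanged, so we compare $V_r(K)$ with $V_r(\tilde C)$ for $\tilde C := \bar{\alpha}^2 C = (I+E)^{-1} K (I+E)^{-1}$. This matrix has the same top-$r$ eigenspace as $C$, and it lives on the same scale as $K$.

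Next we bound the perturbation $P := \tilde C - K$. Set $F := (I+E)^{-1} - I$. Since $E$ is diagonal with entries of modulus at most $\epsilon < 1$, we have $\|F\|_2 \le \epsilon/(1-\epsilon) = O(\epsilon)$. Expanding $(I+F)K(I+F) - K = FK + KF + FKF$ gives
\[
\|P\|_2 \le \bigl(2\|F\|_2 + \|F\|_2^2\bigr)\|K\|_2 = O(\epsilon\|K\|_2).
\]
This step is routine.

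The final step applies Davis--Kahan. $K$ and $\tilde C$ are both symmetric PSD, and $K$ has the gap $\lambda_r(K) - \lambda_{r+1}(K) \ge \gamma$. In the Yu--Wang--Samworth form,
\[
\|\sin\Theta(V_r(K),V_r(\tilde C))\|_2 \le \frac{2\|P\|_2}{\gamma} = O\!\left(\frac{\epsilon\|K\|_2}{\gamma}\right).
\]

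The main point needing care is the gap hypothesis. The classical Davis--Kahan statement needs a gap between the top-$r$ eigenvalues of one matrix and the bottom eigenvalues of the other. To handle this I would first try Weyl's inequality: if $\|P\|_2 \le \gamma/2$, then $\tilde C$ retains a gap of at least $\gamma/2$ and the classical bound applies. If instead $\|P\|_2 > \gamma/2$, then $\epsilon\|K\|_2/\gamma$ is bounded below by a constant, and the claim holds trivially because the subspace distance is at most $1$. The Yu--Wang--Samworth variant avoids this case split entirely, since it requires a gap only for $K$.

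It is also worth noting that the column-centering used in Algorithm~\ref{alg:conflict_partition_merge} can be handled in the same way. Define $J := I - \frac{1}{n}\mathbf{1}\mathbf{1}^\top$. Then $\|J\|_2 = 1$, so $\|J\tilde C J - JKJ\|_2 \le \|P\|_2$, and the argument goes through with $JKJ$ in place of $K$, provided the gap is stated for the centered Gram matrix.
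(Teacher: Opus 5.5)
Your proposal is correct and follows the paper's own proof sketch. You rescale $C$ by $\bar{\alpha}^2$, bound $(I+E)^{-1}K(I+E)^{-1}-K$ by $O(\epsilon\|K\|_2)$, and apply Davis--Kahan; your Weyl case split additionally fills in the gap hypothesis that the paper's sketch leaves implicit, and that case split gives the spectral-norm bound $2\|P\|_2/\gamma$ without relying on the Frobenius-norm form of Yu--Wang--Samworth.
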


\paragraph{Proof Sketch.}
Write $D = \bar{\alpha}(I + E)$ with $\|E\|_2 = O(\epsilon)$. Then
\begin{equation}
C
= D^{-1} K D^{-1}
= \bar{\alpha}^{-2} (I + E)^{-1} K (I + E)^{-1}
= \bar{\alpha}^{-2} (K + \Delta),
\end{equation}
with $\|\Delta\|_2 = O(\epsilon \|K\|_2)$. Scalar multiplication leaves eigenvectors unchanged, so the perturbation is governed by $\Delta$; the Davis--Kahan $\sin\Theta$ theorem \cite{davis1970rotation} gives the bound.
\hfill$\blacksquare$

\subsubsection{Directional Interpretation via Normalized Gradients}
\label{toc:e24}
With normalized gradients $\tilde{g}_i := g_i/\|g_i\|$ and $\tilde{G} := [\tilde{g}_1, \dots, \tilde{g}_n]^\top$, we have $C = \tilde{G} \tilde{G}^\top$. Thus cosine-similarity PCA is PCA on unit-sphere-projected gradients, consistent with Theorem~\ref{theorem_pca}: magnitude normalization does not move the leading subspace under norm concentration.

\subsubsection{Empirical and Practical Implications}
\label{toc:e25}

Empirically (Table~\ref{tab:raw_vs_cossim_full}), at 1D and 2D the two PCA
constructions give nearly identical averages (within $0.2$ points) and the same
ordering ($\text{2D}>\text{1D}$), consistent with the leading-subspace equivalence
of Theorem~\ref{theorem_pca}; the upward trend on text-heavy benchmarks (e.g.,
TextVQA, MMVet) appears in both. At 3D the two diverge: cosine-similarity PCA
continues to improve while raw-gradient PCA does not, which motivates the
scale-invariant cosine construction adopted below.
Practically, cosine-similarity PCA is the convenient, scale-invariant choice: it
emphasizes directional disagreement and is robust to dataset-specific gradient-norm
differences, without altering the dominant interaction subspace. We therefore use it
as the default instantiation of MERIT.

\begin{table}[t]
\centering
\caption{
Direct comparison between MERIT instantiated with raw-gradient PCA and
cosine-similarity PCA.
}
\label{tab:raw_vs_cossim_full}
\setlength{\tabcolsep}{4pt}
\scalebox{0.9}{
\begin{tabular}{lccccccccc}

\toprule
Method
& SeedBench & MMBench
& LLaVA-W & MMVet
& TextVQA & AI2D
& MathVista & MMMU
& Avg. \\
\midrule
Raw PCA, 1D
& 70.9 & 79.9
& 41.5 & 35.0
& 72.5 & \textbf{62.7}
& 36.2 & 41.3
& 55.0 \\
Raw PCA, 2D
& \textbf{71.0} & \textbf{81.4}
& 44.8 & 35.4
& 73.4 & 62.6
& 36.4 & 42.5
& 55.9 \\
Raw PCA, 3D
& 70.6 & 81.2
& 40.0 & 35.4
& \textbf{75.4} & 62.2
& 33.6 & 41.0
& 54.9 \\
\midrule
CosSim PCA, 1D
& \textbf{71.0} & 80.0
& 43.1 & 35.0
& 72.4 & 62.1
& \textbf{36.5} & 41.4
& 55.2 \\
CosSim PCA, 2D
& 70.8 & 78.4
& 47.4 & 36.6
& 74.1 & 61.5
& 36.0 & 40.7
& 55.7 \\
CosSim PCA, 3D
& 70.5 & 80.1
& \textbf{52.0} & \textbf{37.7}
& 75.2 & 62.5
& 35.4 & \textbf{42.7}
& \textbf{57.0} \\
\bottomrule
\end{tabular}
}
\end{table}

\subsection{Empirical Hessian--PCA Alignment}
\label{app:hessian_alignment}

The link between gradient-PCA and Hessian curvature (Section~\ref{sec:theory}) predicts that PCA conflict directions should \emph{partially} align with the top Hessian eigenvectors. We test this on two MLLMs spanning roughly an order of magnitude in trainable-parameter scale: Qwen2.5-VL-3B~\citep{bai2025qwen25vltechnicalreport} ($d{\approx}3$B) and SmolVLM-256M~\citep{marafioti2025smolvlm} ($d{\approx}256$M).

\paragraph{Setup.}
We use a four-stage protocol per model.
\emph{(i) Task-level gradients.}
From Vision-FLAN tasks with $\ge 50$ samples we sample 136 tasks (seed${=}42$) and compute a per-task SFT gradient over a fixed set of 50 examples per task; the same example indices are reused in stage~(iii).
\emph{(ii) PCA-based task selection.}
We form the $136{\times}136$ cosine-similarity matrix, double-center it, embed via the top-2 eigenvectors, and pick the task farthest from the origin in each PC1/PC2-sign quadrant, giving 4 representative tasks.
\emph{(ii.5) Conflict directions.}
On the $4{\times}4$ Gram matrix $G_{ij}{=}\langle g_i, g_j\rangle$ we take the top-3 eigenvectors and lift them to $\mathbb{R}^d$, for both the raw Gram and its double-centered variant.
\emph{(iii) Hessian top eigenvectors.}
On the $4{\times}50{=}200$ examples from the selected tasks (reusing the stage~(i) indices) we run 50 Lanczos iterations with full reorthogonalization (fp32 central-difference Hessian--vector products, $\varepsilon{=}0.1$), yielding the top-10 Hessian eigenvectors at $\theta^{(0)}$.
\emph{(iv) Alignment.}
We report $|\cos(u_i, h_j)|$, principal angles, and an alignment $z$-score against a Gaussian random-vector baseline, for both constructions.

\paragraph{Results.}
Table~\ref{tab:hessian_alignment} reports exactly the alignment our theory predicts. The overlap between PCA conflict directions and the top-10 Hessian eigenvectors is \emph{partial} in absolute terms---leading principal angles of $55$--$70^\circ$ and $\max|\cos|$ of $0.30$--$0.48$---matching the claim that gradient-PCA identifies high-curvature axes \emph{preferentially}, not perfectly. That this partial overlap is nonetheless far from random is confirmed by the Gaussian-baseline $z$-scores (${\ge}10^4$ at both scales), which rule out chance alignment but do not by themselves imply tight alignment. SmolVLM-256M aligns more strongly than Qwen2.5-VL-3B (top-1 mean $|\cos|$ is $2.4{\times}$ larger, leading principal angle $14^\circ$ smaller), consistent with curvature concentrating along fewer directions at smaller scale. The double-centered variant tracks the raw one closely.

\begin{table}[t!]
\centering
\caption{Empirical Hessian--PCA alignment between top-3 PCA conflict directions and top-10 Hessian eigenvectors at $\theta^{(0)}$. Alignment is partial in absolute terms (leading principal angles $55$--$70^\circ$) but well above the Gaussian random baseline ($z \gg 1$) at both scales.} %
\label{tab:hessian_alignment}
\setlength{\tabcolsep}{4pt}
\scalebox{0.9}{
\begin{tabular}{lcc}
\toprule
Metric & Qwen2.5-VL-3B ($d{\approx}3$B) & SmolVLM-256M ($d{\approx}256$M) \\
\midrule
Raw max $|\cos|$                    & $0.30$        & $0.48$ \\
Raw mean top-1 $|\cos|$             & $0.136$       & $0.332$ ($2.4{\times}{\uparrow}$) \\
Raw leading principal angle         & $69.5^\circ$  & $55.5^\circ$ \\
Raw $z$-score (vs.\ Gaussian)       & $15{,}386$    & $17{,}973$ \\
Centered $z$-score                  & $19{,}385$    & $15{,}858$ \\
\bottomrule
\end{tabular}
}
\end{table}

\subsection{Formal Justification of Gradient--Curvature Alignment in a Tractable Setting}
\label{app:toy_proof}

This section proves the splitting-optimality claim (Section~\ref{sec:theory}, Proposition~\ref{prop:pca_main}) in a tractable linear-quadratic model. The model is simplified but captures the essential mechanism: gradient conflict at a shared initialization is \emph{curvature-coupled}, so PCA on gradient similarities naturally finds the directions where merging gains the most.

\paragraph{Setup.}
Consider $T$ datasets with quadratic losses sharing a Hessian,
\[
  L_t(\theta) = \tfrac{1}{2}\,(\theta - \theta_t^\star)^\top H\,(\theta - \theta_t^\star),
  \qquad t = 1,\dots,T,
\]
where $H = \mathrm{diag}(\lambda_1,\dots,\lambda_d)$, $\lambda_1 \ge \cdots \ge \lambda_d > 0$. The joint optimum is $\bar{\theta}^\star = \frac{1}{T}\sum_t \theta_t^\star$, and we initialize at $\theta^{(0)} = \bar{\theta}^\star$ so the mean gradient vanishes. The gradient of dataset $t$ at $\theta^{(0)}$ is
\[
  g_t = \nabla L_t(\theta^{(0)}) = -H\,\Delta_t,
  \qquad \Delta_t := \theta_t^\star - \bar{\theta}^\star,\quad \textstyle\sum_t \Delta_t = 0.
\]

\paragraph{Key identity: gradients are curvature-amplified.}
The gradient covariance is
\[
  \Sigma_g := \frac{1}{T}\sum_{t} g_t\, g_t^\top = H\,\Sigma_\theta\, H,
  \qquad \Sigma_\theta := \frac{1}{T}\sum_t \Delta_t\,\Delta_t^\top.
\]

\begin{remark}[Curvature amplification]
\label{rem:curvature_amp}
Since $\Sigma_g = H\,\Sigma_\theta\,H$, gradient-PCA is steered not by the optima spread $\Sigma_\theta$ alone but by $H\Sigma_\theta H$: in the diagonal case the gradient variance along coordinate $j$ is $\lambda_j^2 \,\mathrm{Var}(\Delta_{t,j})$. Directions that are both high-curvature and high-disagreement are amplified quadratically, so PCA on gradient conflicts preferentially surfaces high-curvature axes.
\end{remark}

\paragraph{Fine-tuning approximation.}
For a balanced split into $G_1, G_2$ ($|G_1|{=}|G_2|{=}T/2$), a single-step approximation gives $\theta_k \approx \theta^{(0)} - \eta\,\bar{g}_k$ with $\bar{g}_k = \frac{1}{|G_k|}\sum_{t \in G_k} g_t$ and displacement $\delta_k = -\eta\,\bar{g}_k$. The merging gain (Theorem~\ref{thm:main_quadratic_gain}) for $w_1=w_2=1/2$ is
\begin{equation}
  \mathcal{G}_{\mathrm{var}}
  = \tfrac{1}{8}\,(\delta_1 - \delta_2)^\top H\,(\delta_1 - \delta_2)
  = \tfrac{\eta^2}{8}\,(\bar{g}_1 - \bar{g}_2)^\top H\,(\bar{g}_1 - \bar{g}_2).
  \label{eq:gvar_toy}
\end{equation}

\paragraph{Partition encoding.}
Encode a balanced split by signs $s \in \{\pm1\}^T$ with $\sum_t s_t = 0$ ($s_t = +1$ if $t \in G_1$). Then $\bar{g}_1 - \bar{g}_2 = \frac{2}{T}\sum_t s_t\,g_t$ and
\[
  \mathcal{G}_{\mathrm{var}}(s) = \tfrac{\eta^2}{2T^2}\,\mathbf{s}^\top M\,\mathbf{s},
  \qquad M_{ij} := g_i^\top H\,g_j = \Delta_i^\top H^3\,\Delta_j,
\]
so the gain is governed by the $H^3$-weighted gradient Gram matrix $M = G H G^\top$.

\paragraph{Concrete example ($T{=}4$, $d{=}2$).}
Let $H = \mathrm{diag}(\lambda_1, \lambda_2)$ with $\lambda_1 > \lambda_2 > 0$ and centered optima
$\Delta_1 = (\alpha, \beta)$, $\Delta_2 = (-\alpha, \beta)$, $\Delta_3 = (\alpha, -\beta)$, $\Delta_4 = (-\alpha, -\beta)$ ($\alpha,\beta>0$), so $g_t = -H\Delta_t$.
The three distinct balanced partitions give:
\begin{itemize}[leftmargin=1.5em]
  \item $\mathcal{P}_1$ (\emph{PCA-aligned}, split along coord.~1): $\bar{g}_1-\bar{g}_2 = (-2\lambda_1\alpha, 0)$, $\;\mathcal{G}_{\mathrm{var}}^{(1)} = \tfrac{\eta^2}{2}\lambda_1^3\alpha^2$.
  \item $\mathcal{P}_2$ (\emph{orthogonal}, split along coord.~2): $\bar{g}_1-\bar{g}_2 = (0, -2\lambda_2\beta)$, $\;\mathcal{G}_{\mathrm{var}}^{(2)} = \tfrac{\eta^2}{2}\lambda_2^3\beta^2$.
  \item $\mathcal{P}_3$ (\emph{canceling}): $\bar{g}_1-\bar{g}_2 = (0,0)$, $\;\mathcal{G}_{\mathrm{var}}^{(3)} = 0$.
\end{itemize}
The gradient Gram matrix has coordinate variances $\lambda_1^2\alpha^2$ and $\lambda_2^2\beta^2$, so its top PCA direction is coordinate~1 exactly when
\begin{equation}
  \lambda_1^2\,\alpha^2 > \lambda_2^2\,\beta^2,
  \label{eq:pca_condition}
\end{equation}
under which PCA selects $\mathcal{P}_1$.

\begin{proposition}[PCA-aligned splitting maximizes $\mathcal{G}_{\mathrm{var}}$]
\label{prop:toy_pca_optimal}
Under~\eqref{eq:pca_condition}, $\mathcal{P}_1$ attains the maximum gain among all balanced partitions:
$\mathcal{G}_{\mathrm{var}}^{(1)} = \tfrac{\eta^2}{2}\lambda_1^3\alpha^2 \ge \tfrac{\eta^2}{2}\lambda_2^3\beta^2 = \mathcal{G}_{\mathrm{var}}^{(2)} > 0 = \mathcal{G}_{\mathrm{var}}^{(3)}$.
\end{proposition}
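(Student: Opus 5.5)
The plan is a direct finite enumeration followed by one elementary inequality. First I would fix the encoding from the partition-encoding paragraph: a balanced split of $\{1,2,3,4\}$ is a sign vector $s\in\{\pm1\}^4$ with $\sum_t s_t=0$. The gain $\mathcal{G}_{\mathrm{var}}(s)=\tfrac{\eta^2}{2T^2}s^\top M s$ is invariant under $s\mapsto -s$, which only relabels $G_1$ and $G_2$. So the $\binom{4}{2}=6$ sign vectors collapse to exactly three unordered partitions: $\{1,3\}|\{2,4\}$, $\{1,2\}|\{3,4\}$ and $\{1,4\}|\{2,3\}$. These are $\mathcal{P}_1,\mathcal{P}_2,\mathcal{P}_3$ respectively. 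This justifies that it suffices to compare the three listed gains.

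Next I would verify each gain from \eqref{eq:gvar_toy}, using $g_t=-H\Delta_t=(-\lambda_1\Delta_{t,1},-\lambda_2\Delta_{t,2})$.
\begin{itemize}
\item For $\mathcal{P}_1$: $\bar g_1=-(\lambda_1\alpha,0)$ and $\bar g_2=(\lambda_1\alpha,0)$. The difference is $(-2\lambda_1\alpha,0)$, so $\mathcal{G}^{(1)}_{\mathrm{var}}=\tfrac{\eta^2}{8}\lambda_1(2\lambda_1\alpha)^2=\tfrac{\eta^2}{2}\lambda_1^3\alpha^2$.
\item For $\mathcal{P}_2$: symmetrically, $\mathcal{G}^{(2)}_{\mathrm{var}}=\tfrac{\eta^2}{2}\lambda_2^3\beta^2$.
\item For $\mathcal{P}_3$: each group contains a pair of antipodal optima ($\Delta_1=-\Delta_4$, $\Delta_2=-\Delta_3$). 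Both group means vanish, so the gain is $0$.
\end{itemize}
In parallel, I would confirm that PCA selects $\mathcal{P}_1$ under \eqref{eq:pca_condition}. The gradient covariance $H\Sigma_\theta H$ is diagonal with entries $\lambda_1^2\alpha^2$ and $\lambda_2^2\beta^2$, because the cross terms $\sum_t\Delta_{t,1}\Delta_{t,2}$ cancel. Its top principal axis is therefore coordinate $1$, and the median split along it is $\mathcal{P}_1$.

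Finally, the ordering. The inequality $\mathcal{G}^{(2)}_{\mathrm{var}}>0=\mathcal{G}^{(3)}_{\mathrm{var}}$ is immediate from $\lambda_2,\beta,\eta>0$. For the main comparison, multiply \eqref{eq:pca_condition} by $\lambda_1>0$ and then use $\lambda_1>\lambda_2$:
\[
\lambda_1^3\alpha^2>\lambda_1\lambda_2^2\beta^2>\lambda_2^3\beta^2 .
\]
This gives $\mathcal{G}^{(1)}_{\mathrm{var}}>\mathcal{G}^{(2)}_{\mathrm{var}}$, which is in fact strict and so stronger than the stated $\ge$.

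No step is genuinely hard. The only point needing care is the completeness of the enumeration, namely arguing that the $\pm$ symmetry leaves exactly three partitions. The remark that the extra factor $\lambda_1/\lambda_2$ is what yields the $\lambda^3$ scaling and the spectral-gap dependence follows directly from the chain above.
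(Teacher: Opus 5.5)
Your proposal is correct and takes essentially the same route as the paper, which proves the key comparison as the ratio bound $\lambda_1^3\alpha^2/(\lambda_2^3\beta^2) > \lambda_1/\lambda_2 \ge 1$; this is your chain $\lambda_1^3\alpha^2 > \lambda_1\lambda_2^2\beta^2 > \lambda_2^3\beta^2$ written multiplicatively. Your explicit check that the $s\mapsto -s$ symmetry leaves exactly three balanced partitions, and your recomputation of the three gains, spell out what the paper takes from the setup preceding the proposition.
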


\begin{proof}
It suffices that $\lambda_1^3\alpha^2 \ge \lambda_2^3\beta^2$ under~\eqref{eq:pca_condition}, which gives $\alpha^2/\beta^2 > (\lambda_2/\lambda_1)^2$. Then
$\frac{\lambda_1^3\alpha^2}{\lambda_2^3\beta^2} = (\lambda_1/\lambda_2)^3 (\alpha^2/\beta^2) > (\lambda_1/\lambda_2)^3 (\lambda_2/\lambda_1)^2 = \lambda_1/\lambda_2 \ge 1$.
\end{proof}

\begin{corollary}[PCA split dominates random split in expectation]
\label{cor:pca_vs_random}
A uniformly random balanced partition has expected gain
$\mathbb{E}[\mathcal{G}_{\mathrm{var}}^{\mathrm{rand}}] = \tfrac{\eta^2}{6}(\lambda_1^3\alpha^2 + \lambda_2^3\beta^2) < \tfrac{\eta^2}{2}\lambda_1^3\alpha^2 = \mathcal{G}_{\mathrm{var}}^{\mathrm{PCA}}$.
\end{corollary}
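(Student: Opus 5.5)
The plan is to compute the expectation directly. A uniformly random balanced partition of $T{=}4$ datasets takes only finitely many values, and their gains are already listed before Proposition~\ref{prop:toy_pca_optimal}; the strict inequality then follows from that proposition.

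\emph{Step 1 (reduce to the three partitions).} I would encode a balanced split by a sign vector $s\in\{\pm1\}^4$ with $\sum_t s_t=0$. There are $\binom{4}{2}=6$ such vectors, and a uniformly random balanced partition means a uniform draw among them. By the partition-encoding formula $\mathcal{G}_{\mathrm{var}}(s)=\tfrac{\eta^2}{2T^2}\,s^\top M s$, the gain is invariant under $s\mapsto -s$, since swapping the labels of $G_1$ and $G_2$ does not change it. So the six sign vectors collapse in pairs onto the three unordered partitions $\mathcal{P}_1,\mathcal{P}_2,\mathcal{P}_3$, and each unordered partition has probability $1/3$.

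\emph{Step 2 (check the three gains).} I would confirm each listed value by computing $\bar g_1-\bar g_2$ from $g_t=-H\Delta_t$.
\begin{itemize}
\item $\mathcal{P}_1=\{1,3\}\,|\,\{2,4\}$ splits along coordinate~1. It gives $\bar g_1-\bar g_2=(-2\lambda_1\alpha,0)$, so by \eqref{eq:gvar_toy} the gain is $\tfrac{\eta^2}{8}\cdot 4\lambda_1^3\alpha^2=\tfrac{\eta^2}{2}\lambda_1^3\alpha^2$.
\item $\mathcal{P}_2=\{1,2\}\,|\,\{3,4\}$ splits along coordinate~2. By the same computation its gain is $\tfrac{\eta^2}{2}\lambda_2^3\beta^2$.
\item $\mathcal{P}_3=\{1,4\}\,|\,\{2,3\}$ has $\Delta_1+\Delta_4=0=\Delta_2+\Delta_3$. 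Hence both group means vanish and the gain is $0$.
\end{itemize}
Averaging the three values gives
\[
\mathbb{E}[\mathcal{G}_{\mathrm{var}}^{\mathrm{rand}}]=\tfrac13\cdot\tfrac{\eta^2}{2}\bigl(\lambda_1^3\alpha^2+\lambda_2^3\beta^2\bigr)=\tfrac{\eta^2}{6}\bigl(\lambda_1^3\alpha^2+\lambda_2^3\beta^2\bigr).
\]

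\emph{Step 3 (the comparison).} Write $a:=\lambda_1^3\alpha^2>0$ and $b:=\lambda_2^3\beta^2$. The claim $\tfrac{\eta^2}{6}(a+b)<\tfrac{\eta^2}{2}a$ is equivalent to $b<2a$. The proof of Proposition~\ref{prop:toy_pca_optimal} shows, under condition~\eqref{eq:pca_condition}, the strict bound $a/b>\lambda_1/\lambda_2>1$, so $b<a<2a$. This is the PCA-aligned gain $\mathcal{G}_{\mathrm{var}}^{\mathrm{PCA}}=\mathcal{G}_{\mathrm{var}}^{(1)}$, because PCA selects $\mathcal{P}_1$ exactly under~\eqref{eq:pca_condition}.

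The only point requiring care is Step~1: the sampling model must be stated precisely, namely uniform over balanced sign vectors, equivalently over unordered partitions. The inequality also needs hypothesis~\eqref{eq:pca_condition} carried over from the preceding proposition, together with $\eta>0$. Beyond that the argument is a finite average, and the comparison in fact holds with margin, since $b<a$ is stronger than the $b<2a$ that is needed.
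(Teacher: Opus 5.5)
Your proof is correct and follows the paper's own argument. Like the paper, you average the gains of the three equally likely unordered partitions to get $\tfrac{\eta^2}{6}(\lambda_1^3\alpha^2+\lambda_2^3\beta^2)$, reduce the strict inequality to $\lambda_2^3\beta^2<2\lambda_1^3\alpha^2$, and obtain that from Proposition~\ref{prop:toy_pca_optimal} under~\eqref{eq:pca_condition}; your explicit sampling model (six sign vectors collapsing in pairs) and your noting that~\eqref{eq:pca_condition} and $\eta>0$ are needed only spell out what the paper leaves implicit.
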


\begin{proof}
Averaging the three equally likely partitions gives $\tfrac{1}{3}(\mathcal{G}_{\mathrm{var}}^{(1)}+\mathcal{G}_{\mathrm{var}}^{(2)}+\mathcal{G}_{\mathrm{var}}^{(3)})$. The strict inequality holds whenever $\lambda_1^3\alpha^2 > \tfrac{1}{2}\lambda_2^3\beta^2$, guaranteed by Proposition~\ref{prop:toy_pca_optimal}.
\end{proof}

\begin{remark}[Why the $\lambda^3$ scaling]
\label{rem:curvature_hierarchy}
The PCA-aligned split concentrates variance along $\lambda_1$ for gain $\propto \lambda_1^3\alpha^2$; the orthogonal split yields $\propto \lambda_2^3\beta^2$; the canceling split yields $0$. The $\lambda^3$ scaling appears because curvature enters three times---twice through the gradient conflict ($g_t \propto H\Delta_t$, giving $H^2$) and once through the gain formula ($\mathcal{G}_{\mathrm{var}} \propto \delta^\top H\delta$, giving $H^1$). This is why the advantage of conflict-aware splitting grows with the curvature gap $\lambda_1/\lambda_2$.
\end{remark}

\begin{remark}[Gradient PCA is a safe proxy]
\label{rem:pca_vs_gvar}
The $\mathcal{G}_{\mathrm{var}}$-optimal split is set by $M=GHG^\top$ (eigenvalue weight $\lambda_j^3$), while gradient PCA uses $K=GG^\top$ (weight $\lambda_j^2$). Since $\lambda^3$ is even more biased toward high curvature than $\lambda^2$, whenever gradient PCA picks the high-curvature axis the $\mathcal{G}_{\mathrm{var}}$-optimal split agrees ($\lambda_1^2\alpha^2 > \lambda_2^2\beta^2 \Rightarrow \lambda_1^3\alpha^2 > \lambda_2^3\beta^2$). Gradient PCA is thus a sufficient (if not strictly necessary) criterion for the gain-maximizing split, which is why MERIT can partition on gradient similarities alone.
\end{remark}

\paragraph{General result for $T$ datasets.}
We now move beyond $T{=}4$ to compare PCA-aligned and random partitioning for general $T$.
Retain the quadratic setting with $M_{ij} = \Delta_i^\top H^3\,\Delta_j$. Since $\sum_t g_t = 0$, $M\mathbf{1} = 0$, so $0$ is an eigenvalue with eigenvector $\mathbf{1}/\sqrt{T}$; let $\mu_1 \ge \cdots \ge \mu_{T-1} > 0 = \mu_T$ be the eigenvalues with unit eigenvectors $v_1,\dots,v_T$. For a balanced split $\mathbf{s}$, $\mathcal{G}_{\mathrm{var}}(\mathbf{s}) = \tfrac{\eta^2}{2T^2}\,\mathbf{s}^\top M\,\mathbf{s}$.

\begin{proposition}[Expected gain under random partitioning]
\label{prop:random_gain}
For $\mathbf{s}$ drawn uniformly from balanced sign vectors,
$\mathbb{E}[\mathcal{G}_{\mathrm{var}}^{\mathrm{rand}}] = \tfrac{\eta^2}{2T(T-1)}\operatorname{tr}(M) = \tfrac{\eta^2}{2T(T-1)}\sum_{k=1}^{T-1}\mu_k$.
\end{proposition}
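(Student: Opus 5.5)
The plan is to write the expected gain as a trace and compute the second moment of a uniformly random balanced sign vector. Since $\mathcal{G}_{\mathrm{var}}(\mathbf{s}) = \tfrac{\eta^2}{2T^2}\,\mathbf{s}^\top M\mathbf{s}$ for every balanced $\mathbf{s}$, linearity of expectation gives
\[
\mathbb{E}[\mathcal{G}_{\mathrm{var}}^{\mathrm{rand}}] = \tfrac{\eta^2}{2T^2}\operatorname{tr}\!\big(M\,\mathbb{E}[\mathbf{s}\mathbf{s}^\top]\big).
\]
So the whole argument reduces to identifying $S := \mathbb{E}[\mathbf{s}\mathbf{s}^\top]$.

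\emph{Diagonal entries.} Each $s_t^2 = 1$, so $S_{tt} = 1$.

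\emph{Off-diagonal entries.} By exchangeability of the uniform distribution over balanced sign vectors, all $S_{ij}$ with $i \ne j$ share a common value $c$. We fix $c$ from the constraint $\sum_t s_t = 0$. Squaring it and taking expectations gives $0 = \mathbb{E}[(\mathbf{1}^\top\mathbf{s})^2] = T + T(T-1)c$, hence $c = -1/(T-1)$. Alternatively, one can count directly: given $s_i$, the $T/2 - 1$ remaining same-sign slots among the other $T-1$ positions give $P(s_j = s_i) = \tfrac{T/2-1}{T-1}$, which yields the same $c$. Therefore
\[
S = \tfrac{T}{T-1}\,I - \tfrac{1}{T-1}\,\mathbf{1}\mathbf{1}^\top .
\]

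\emph{Trace computation.} Substituting, $\operatorname{tr}(MS) = \tfrac{T}{T-1}\operatorname{tr}(M) - \tfrac{1}{T-1}\,\mathbf{1}^\top M\mathbf{1}$. The key structural fact, stated just before the proposition, is that $\sum_t g_t = 0$, which implies $M\mathbf{1} = GHG^\top\mathbf{1} = GH\sum_t g_t = 0$. The second term therefore vanishes. Combining with the prefactor gives $\tfrac{\eta^2}{2T^2}\cdot\tfrac{T}{T-1}\operatorname{tr}(M) = \tfrac{\eta^2}{2T(T-1)}\operatorname{tr}(M)$. Finally, $\operatorname{tr}(M) = \sum_{k=1}^{T-1}\mu_k$ because $\mu_T = 0$, and $M$ is symmetric PSD since $H \succ 0$.

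The only step requiring care is the off-diagonal second moment $c$; the squared-constraint trick settles it without combinatorics. I would also check the result against the $T{=}4$ case of Corollary~\ref{cor:pca_vs_random}. There $\operatorname{tr}(M) = \sum_t g_t^\top H g_t = 4(\lambda_1^3\alpha^2 + \lambda_2^3\beta^2)$, and $\tfrac{\eta^2}{2\cdot 4\cdot 3}\cdot 4(\cdot) = \tfrac{\eta^2}{6}(\cdot)$, which matches.
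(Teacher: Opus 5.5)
Your proposal is correct and follows the paper's proof: both use the second moments $\mathbb{E}[s_t^2]=1$ and $\mathbb{E}[s_is_j]=-\tfrac{1}{T-1}$ for $i\neq j$, then eliminate the $\mathbf{1}^\top M\mathbf{1}$ term using $M\mathbf{1}=0$ to get $\mathbb{E}[\mathbf{s}^\top M\mathbf{s}]=\tfrac{T}{T-1}\operatorname{tr}(M)$. Your derivation of the off-diagonal moment by squaring $\sum_t s_t=0$ supplies a step the paper only asserts, and your $T{=}4$ check against Corollary~\ref{cor:pca_vs_random} is a valid consistency test.
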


\begin{proof}
For a uniform balanced sign vector, $\mathbb{E}[s_i^2]=1$ and $\mathbb{E}[s_is_j]=-\tfrac{1}{T-1}$ ($i\ne j$). Hence
$\mathbb{E}[\mathbf{s}^\top M\mathbf{s}] = \operatorname{tr}(M) - \tfrac{1}{T-1}(\mathbf{1}^\top M\mathbf{1} - \operatorname{tr}(M)) = \tfrac{T}{T-1}\operatorname{tr}(M)$, using $M\mathbf{1}=0$. Dividing by $2T^2/\eta^2$ gives the result.
\end{proof}

\begin{proposition}[PCA-aligned splitting dominates random splitting]
\label{prop:pca_dominates_random_general}
Let $\mathbf{s}^* = \operatorname{sign}(v_1)$, where $v_1$ is the leading eigenvector of $M$ (adjusted for balance if needed\footnote{When $T$ is even and $v_1$ has no zero entries, $\operatorname{sign}(v_1)$ is balanced w.h.p.\ for generic $v_1$; otherwise we assume balance or reassign at most one dataset, in which case~\eqref{eq:pca_lower_bound} holds up to a single-entry correction.}). Then
\begin{equation}
  \mathcal{G}_{\mathrm{var}}^{\mathrm{PCA}}
  = \tfrac{\eta^2}{2T^2}\,(\mathbf{s}^*)^\top M\,\mathbf{s}^*
  \ge \tfrac{\eta^2}{2T^2}\,\mu_1\,\|v_1\|_1^2,
  \label{eq:pca_lower_bound}
\end{equation}
where $\|v_1\|_1 = \sum_t |v_{1,t}|$. Consequently $\mathcal{G}_{\mathrm{var}}^{\mathrm{PCA}} > \mathbb{E}[\mathcal{G}_{\mathrm{var}}^{\mathrm{rand}}]$ whenever
\begin{equation}
  \mu_1\,\|v_1\|_1^2 > \tfrac{T}{T-1}\,\operatorname{tr}(M).
  \label{eq:dominance_condition}
\end{equation}
\end{proposition}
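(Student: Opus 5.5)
The plan is to lower-bound the quadratic form $(\mathbf{s}^*)^\top M\,\mathbf{s}^*$ by expanding it in the eigenbasis of $M$. Since $M = G H G^\top$ with $H \succ 0$, $M$ is symmetric positive semidefinite. This gives
\[
(\mathbf{s}^*)^\top M\,\mathbf{s}^* = \sum_{k=1}^{T} \mu_k \,\langle v_k, \mathbf{s}^*\rangle^2 \;\ge\; \mu_1\,\langle v_1, \mathbf{s}^*\rangle^2 ,
\]
because every term with $k \ge 2$ is nonnegative (all $\mu_k \ge 0$). The key observation is that for $\mathbf{s}^* = \operatorname{sign}(v_1)$ we have $\langle v_1, \mathbf{s}^*\rangle = \sum_t |v_{1,t}| = \|v_1\|_1$ exactly. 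Substituting and multiplying by $\eta^2/(2T^2)$, using the partition-encoding identity $\mathcal{G}_{\mathrm{var}}(\mathbf{s}) = \tfrac{\eta^2}{2T^2}\mathbf{s}^\top M\mathbf{s}$ derived above, yields \eqref{eq:pca_lower_bound}.

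The dominance statement then follows by comparison with Proposition~\ref{prop:random_gain}, which gives $\mathbb{E}[\mathcal{G}_{\mathrm{var}}^{\mathrm{rand}}] = \tfrac{\eta^2}{2T(T-1)}\operatorname{tr}(M)$. Writing this as $\tfrac{\eta^2}{2T^2}\cdot\tfrac{T}{T-1}\operatorname{tr}(M)$ puts both quantities over the common prefactor $\eta^2/(2T^2)$. Hence \eqref{eq:dominance_condition} implies that the lower bound on $\mathcal{G}_{\mathrm{var}}^{\mathrm{PCA}}$ strictly exceeds the random expectation, which is the claim.

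Before applying Proposition~\ref{prop:random_gain}, I would check that $\mathbf{s}^*$ is an admissible balanced split, since that proposition is stated for balanced sign vectors. Under the footnote's hypothesis (balance of $\operatorname{sign}(v_1)$ assumed), nothing more is needed.

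The only delicate point is the balance adjustment. If a single entry $t_0$ must be flipped, then $\langle v_1, \mathbf{s}\rangle = \|v_1\|_1 - 2|v_{1,t_0}|$. Choosing $t_0$ to minimize $|v_{1,t_0}|$ makes the correction at most $2\|v_1\|_\infty$, and at most $2\|v_1\|_1/T$ if we take the smallest-magnitude entry. The same argument then gives the bound with $\|v_1\|_1$ replaced by $\|v_1\|_1 - 2\min_t|v_{1,t}|$; this is the "single-entry correction" the footnote refers to. A further subtlety is that $v_1 \perp \mathbf{1}$, since $M\mathbf{1}=0$ and $\mu_1 > 0$. This makes balance of $\operatorname{sign}(v_1)$ plausible but not guaranteed, so I would state the result with balance as an assumption, exactly as the footnote does, rather than try to prove it.
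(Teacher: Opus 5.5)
Your proposal is correct and follows the paper's proof essentially verbatim: expand $(\mathbf{s}^*)^\top M\mathbf{s}^*$ in the eigenbasis of $M$, drop the nonnegative terms with $k\ge 2$, use $\langle v_1,\operatorname{sign}(v_1)\rangle=\|v_1\|_1$, and compare with Proposition~\ref{prop:random_gain} over the common prefactor $\eta^2/(2T^2)$. Your extra balance-correction remark is also sound provided the flipped entry $t_0$ comes from the majority sign class, since flipping a minority-side entry would worsen the imbalance; because $v_1\perp\mathbf{1}$, each sign class carries exactly $\|v_1\|_1/2$ of the mass, so this restricted minimum is at most $\|v_1\|_1/(T+2)$ and your $2\|v_1\|_1/T$ bound on the correction still holds.
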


\begin{proof}
Decompose $\mathbf{s}^* = \sum_k (\mathbf{s}^{*\top} v_k)\,v_k$, so $(\mathbf{s}^*)^\top M\mathbf{s}^* = \sum_k \mu_k (\mathbf{s}^{*\top} v_k)^2 \ge \mu_1 (\mathbf{s}^{*\top} v_1)^2$. With $\mathbf{s}^* = \operatorname{sign}(v_1)$, $\mathbf{s}^{*\top} v_1 = \sum_t |v_{1,t}| = \|v_1\|_1$, giving~\eqref{eq:pca_lower_bound}; comparing with Proposition~\ref{prop:random_gain} yields~\eqref{eq:dominance_condition}.
\end{proof}

The dominance condition~\eqref{eq:dominance_condition} asks for two things. First, spectral concentration: $\mu_1$ should be a large fraction of $\operatorname{tr}(M) = \sum_k \mu_k$, i.e., the conflict structure has a dominant axis. This is exactly the regime where structured splitting helps, whereas isotropic conflicts leave no room over random. Second, eigenvector delocalization: $\|v_1\|_1$ should be large ($1 \le \|v_1\|_1 \le \sqrt{T}$, with $\sqrt{T}$ when entries are equal-magnitude), meaning many datasets contribute to the dominant axis, a mild requirement for large mixtures.

\begin{corollary}[Limiting regimes]
\label{cor:limiting_regimes}
Under Proposition~\ref{prop:pca_dominates_random_general}:
\begin{enumerate}[leftmargin=1.5em]
  \item \textbf{Rank-1 conflicts:} if $M \approx \mu_1 v_1 v_1^\top$ and $v_1$ is maximally spread ($|v_{1,t}| = 1/\sqrt{T}$), then $\mathcal{G}_{\mathrm{var}}^{\mathrm{PCA}}/\mathbb{E}[\mathcal{G}_{\mathrm{var}}^{\mathrm{rand}}] \ge T-1$: PCA captures all conflict structure while random wastes a $(1-1/T)$ fraction.
  \item \textbf{Isotropic conflicts:} if $\mu_1 = \cdots = \mu_{T-1} = \operatorname{tr}(M)/(T{-}1)$, then~\eqref{eq:dominance_condition} is tight and no strategy beats random---when every direction matters equally, there is no preferred axis to split along.
\end{enumerate}
\end{corollary}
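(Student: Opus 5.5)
In both regimes the plan is to substitute the stated spectral structure into the two quantities already computed: the PCA lower bound \eqref{eq:pca_lower_bound} from Proposition~\ref{prop:pca_dominates_random_general}, and the random-split expectation from Proposition~\ref{prop:random_gain}. We then compare them directly. The only structural fact used besides these is $M\mathbf{1}=0$, which forces every eigenvector with nonzero eigenvalue to be orthogonal to $\mathbf{1}$.

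\emph{Rank-1 regime.} Take first $M=\mu_1 v_1v_1^\top$ exactly, so that $\operatorname{tr}(M)=\mu_1$.

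1. Since $M\mathbf{1}=0$, we have $v_1\perp\mathbf{1}$. With $|v_{1,t}|=1/\sqrt{T}$ this gives $\sum_t v_{1,t}=\tfrac{1}{\sqrt T}\sum_t \operatorname{sign}(v_{1,t})=0$. Hence $\mathbf{s}^*=\operatorname{sign}(v_1)$ is exactly balanced, and no footnote correction is needed.

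2. Here $\|v_1\|_1=\sqrt T$. In fact \eqref{eq:pca_lower_bound} holds with equality, because $\mathbf{s}^*=\sqrt T\,v_1$. This gives $\mathcal{G}_{\mathrm{var}}^{\mathrm{PCA}}=\tfrac{\eta^2}{2T^2}\mu_1 T=\tfrac{\eta^2\mu_1}{2T}$.

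3. Proposition~\ref{prop:random_gain} gives $\mathbb{E}[\mathcal{G}_{\mathrm{var}}^{\mathrm{rand}}]=\tfrac{\eta^2\mu_1}{2T(T-1)}$, so the ratio is exactly $T-1$.

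4. For the approximate case $M\approx\mu_1v_1v_1^\top$, the inequality $(\mathbf{s}^*)^\top M\mathbf{s}^*\ge\mu_1\|v_1\|_1^2$ still holds, since the remaining eigenvalues are nonnegative. Meanwhile $\operatorname{tr}(M)=\mu_1+\sum_{k\ge2}\mu_k$ grows only by the small residual mass, so the ratio is $\ge (T-1)\mu_1/\operatorname{tr}(M)\to T-1$.

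The interpretive claim that random splitting wastes a $(1-1/T)$ fraction is read off from $\mathbb{E}[\mathbf{s}^\top M\mathbf{s}]/\max_{\mathbf{s}}\mathbf{s}^\top M\mathbf{s}$. Here the maximum over all $\pm1$ vectors is at most $\mu_1\|\mathbf{s}\|^2=\mu_1 T$, and this bound is attained.

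\emph{Isotropic regime.} If all nonzero eigenvalues equal $c=\operatorname{tr}(M)/(T-1)$, then, using $M\mathbf{1}=0$, we get $M=c\,(I-\tfrac1T\mathbf{1}\mathbf{1}^\top)$.

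1. For any balanced $\mathbf{s}$, $\mathbf{s}^\top M\mathbf{s}=c(\|\mathbf{s}\|^2-\tfrac1T(\mathbf{1}^\top\mathbf{s})^2)=cT=\tfrac{T}{T-1}\operatorname{tr}(M)$. So every balanced partition, whether PCA-chosen, random, or adversarial, attains exactly $\mathbb{E}[\mathcal{G}_{\mathrm{var}}^{\mathrm{rand}}]$. This is the precise meaning of "no strategy beats random."

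2. For tightness of \eqref{eq:dominance_condition}, use $\|v_1\|_1\le\sqrt T$ (Cauchy--Schwarz). This gives $\mu_1\|v_1\|_1^2\le cT=\tfrac{T}{T-1}\operatorname{tr}(M)$, so the strict inequality in \eqref{eq:dominance_condition} can never hold. Equality is achieved by choosing, within the degenerate eigenspace $\mathbf{1}^\perp$, an equal-magnitude balanced sign vector normalized by $1/\sqrt T$.

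The main obstacle is not computational but definitional. In the isotropic case the leading eigenvector is not unique, so "PCA-aligned" must be read as any choice in the eigenspace. This is why I would prove the stronger statement that all balanced $\mathbf{s}$ yield equal gain, which sidesteps the ambiguity. In the rank-1 case, the corresponding care point is making "$\approx$" quantitative through the trace ratio, as in step 4.
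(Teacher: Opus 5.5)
Your proposal is correct and follows the paper's own route: you substitute the rank-1 and isotropic spectra into the lower bound \eqref{eq:pca_lower_bound} and the random-split expectation of Proposition~\ref{prop:random_gain}. You also add rigor the paper omits, namely the exact balance of $\operatorname{sign}(v_1)$ from $M\mathbf{1}=0$, the quantified ratio $(T-1)\mu_1/\operatorname{tr}(M)$ in the approximate case, the explicit form $M=c(I-\tfrac1T\mathbf{1}\mathbf{1}^\top)$ giving every balanced split exactly $cT$, and the Cauchy--Schwarz tightness argument that the paper only asserts.

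One small slip concerns the interpretive gloss. For balanced random splits your read-off gives $\mathbb{E}[\mathbf{s}^\top M\mathbf{s}]/\max_{\mathbf{s}}\mathbf{s}^\top M\mathbf{s}=\tfrac{1}{T-1}$, i.e.\ a wasted fraction of $\tfrac{T-2}{T-1}$, not $1-\tfrac1T$. The paper's ``$(1-1/T)$'' only arises for unconstrained random signs with $\mathbb{E}[\mathbf{s}\mathbf{s}^\top]=I$, so treat it as a loose paraphrase rather than something you can derive exactly.
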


\begin{proof}
(1) With $\operatorname{tr}(M)=\mu_1$ and $\|v_1\|_1^2=T$: $\mathcal{G}_{\mathrm{var}}^{\mathrm{PCA}} \ge \eta^2\mu_1/(2T)$ and $\mathbb{E}[\mathcal{G}_{\mathrm{var}}^{\mathrm{rand}}] = \eta^2\mu_1/(2T(T{-}1))$, giving ratio $\ge T{-}1$.
(2) With uniform eigenvalues every balanced split has the same expected projection onto each eigenspace, so structure gives no advantage.
\end{proof}

Real multimodal conflicts are far from isotropic: the 2D PCA projection in Figure~\ref{fig:pca_group} shows a few principal components separating the branch-wise updates along well-defined axes. This places MERIT in the spectral-concentration regime of Corollary~\ref{cor:limiting_regimes}(1), where PCA-aligned splitting helps the most.

\begin{remark}[Recursive splitting along orthogonal PCA axes]
\label{rem:recursive_splitting}
MERIT splits recursively along the top-$r$ (orthogonal) PCA eigenvectors $\{u_j\}_{j=1}^{r}$ of $C$. In the linear-quadratic model, dispersions from successive splits lie on disjoint eigendirections, so cross-axis contributions to $\mathbf{s}^\top M\mathbf{s}$ vanish and the gain decomposes additively:
\[
  \mathcal{G}_{\mathrm{var}}^{(r)} \;\propto\; \sum_{j=1}^{r} \lambda_j^3\,\alpha_j^2,
\]
where $\alpha_j$ is the group-mean dispersion along $u_j$. Each PCA dimension adds a non-negative, cubically curvature-weighted term, predicting the monotone 1D~$\to$~2D~$\to$~3D gains reported in Section~\ref{sec:experiments}.
\end{remark}

\subsection{Implicit Regularization via Averaging-Induced Contraction}
\label{app:implicit_reg}

This section analyzes the implicit regularization from averaging, beyond the variance reduction $\mathcal{G}_{\mathrm{var}}$. Empirical validation (displacement contraction, perturbation robustness, train-loss vs.\ generalization gap, linear mode connectivity) is consolidated in Appendix~\ref{app:merge_diagnostics}.

\paragraph{Displacement decomposition.}
With displacements $\delta_i := \theta_i - \theta^{(0)}$ and $\bar{\delta} := \bar{\theta} - \theta^{(0)}$ (recall $\theta^\star = \theta^{(0)}$ in our analysis, so these coincide with the $\delta_i$ of Appendix~\ref{app:quadratic_anal}), the parallel-axis identity gives, exactly,
\begin{equation}
\frac{1}{K}\sum_{i=1}^K \|\delta_i\|^2
=
\|\bar{\delta}\|^2
+
\frac{1}{K}\sum_{i=1}^K \|\delta_i - \bar{\delta}\|^2.
\label{eq:displacement_identity}
\end{equation}
So unless all checkpoints coincide, averaging gives a strictly smaller squared displacement,
$\|\bar{\theta} - \theta^{(0)}\| \le \sqrt{\tfrac{1}{K}\sum_i \|\theta_i - \theta^{(0)}\|^2}$:
merging contracts the model toward the shared initialization.

\paragraph{Empirical validation.}
Displacement contraction, perturbation robustness, and the train-loss vs.\ generalization gap are verified in Appendix~\ref{app:merge_diagnostics} (Tables~\ref{tab:displacement_appendix}--\ref{tab:trainloss_appendix} and Figure~\ref{fig:perturbation_appendix}).

\paragraph{One mechanism, two regularizers.}
Both the quadratic gain $\mathcal{G}_{\mathrm{var}}$ (Theorem~\ref{thm:true-loss}) and the contraction~\eqref{eq:displacement_identity} are instances of the same identity $\tfrac{1}{K}\sum_i \|\cdot\|^2 = \|\text{mean}\|^2 + \text{variance}$, applied under the $H$-induced norm (loss along high-curvature directions) and under the Euclidean norm (displacement from initialization), respectively. Averaging reduces variance under both, giving MERIT spectral filtering and norm contraction at once.

\paragraph{Connection to PAC-Bayes.}
Many PAC-Bayes bounds for deterministic predictors scale with the squared distance to a reference prior, often the pretrained initialization. Under a Gaussian prior centered at $\theta^{(0)}$, the complexity term is $\mathrm{KL}(Q \,\|\, P) \propto \|\theta - \theta^{(0)}\|^2$. Since the merged model stays strictly closer to $\theta^{(0)}$ than the jointly trained one (Table~\ref{tab:displacement_appendix}),
$\|\bar{\theta} - \theta^{(0)}\| < \|\theta_{\mathrm{joint}} - \theta^{(0)}\|$,
it incurs a strictly smaller distance-based complexity term, all else equal.

\end{document}